\documentclass{article}
\usepackage{fullpage}

\usepackage[utf8]{inputenc}
\usepackage{authblk}
\usepackage{amsmath}

\title{\bf Beyond the Ideal: Analyzing the Inexact Muon Update}

\author{Egor Shulgin\thanks{Contacts: \texttt{\{shulgin.yegor, sultan.m.rashed\}@gmail.com, francesco@orabona.com, peter.richtarik@kaust.edu.sa}} \quad Sultan Alrashed \quad Francesco Orabona \quad Peter Richtárik}

\affil{King Abdullah University of Science and Technology (KAUST)\\ KAUST Center of Excellence for Generative AI \\ Thuwal, Saudi Arabia}

\date{}

\usepackage[round]{natbib}

\bibliographystyle{plainnat}

\usepackage{stfloats}

\newcommand{\err}{\delta}

\usepackage{parskip}
\usepackage{hyperref} 
\usepackage{subcaption}
\usepackage{booktabs}
\usepackage{dirtytalk}
\usepackage{makecell}
\usepackage{wrapfig}

\usepackage[hyperpageref]{backref}
\hypersetup{ 
    colorlinks=true,
    citecolor=darkscarlet,
    linkcolor=darkpowderblue,
    filecolor=magenta,      
    urlcolor=yaleblue,
    menucolor=gray
}
\renewcommand*{\backref}[1]{}% for backref < 1.33 necessary
\renewcommand*{\backrefalt}[4]{%
   \ifcase #1 %
     \footnotesize{(Not cited.)}%
   \or
     \footnotesize{(Cited on page~#2)}%
   \else
     \footnotesize{(Cited on page~#2)}%
\fi }

\usepackage{multirow}
\usepackage{layout}

% AMS
\usepackage{amsmath}
\usepackage{amsthm}
\usepackage{amssymb}
\usepackage{amsfonts}
\usepackage{mathtools}
\usepackage{siunitx}
\usepackage{lipsum}

\usepackage{enumitem}

\usepackage{bbm}

\usepackage{url}
\usepackage{color}
\usepackage{graphicx}
\usepackage{verbatim}
\usepackage{subcaption}

\usepackage{cleveref}

\usepackage{natbib}

\usepackage{apptools}
\usepackage{thmtools}
\usepackage{algorithm, algpseudocode}

%\usepackage[nice]{nicefrac}
% paper specific macros
%\newcommand{\g}{{\color{red}g}}

% general macros

\newcommand{\norm}[1]{\left\| #1 \right\|}

 % inner product
\newcommand{\inp}[2]{\left\langle#1,#2\right\rangle} % inner product

% caligraphic

% bold matrices

% bold vectors

% \newcommand{\squeeze}{\textstyle}

% strange stuff

\newcommand{\del}[1]{}

% basic macros
\newcommand{\R}{\mathbb{R}} % reals

 % such that
% \newcommand{\eqdef}{\stackrel{\text{def}}{=}}
\newcommand{\eqdef}{:=} %\vcentcolon

% sets
         % cardinality of a set
        % diameter of a set
               % volume of a set

 % probability
% \newcommand{\Prob}[1]{{\mathbf{Prob}}\left(#1\right)} % probability

 % identity

% statistics
\newcommand{\Exp}[1]{{\mathbb{E}}\left[#1\right]}

         % covariance
         % variance
       % correlation

% functions and operators
     % sign of a scalar
         % domain
         % epigraph
        % nullspace/kernel
  % nullpsace
     % range
        % image

% topology
    % interior
           % relative interior
         % strong relative interior
       % relative interior
       % boundary
           % closure
         % set of zeros

% vectors, matrices

           % trace
       % rank
       % convex hull
       % Diag(v) = diagonal matrix with v_i on the diagonal
       % diag(D) = the diagonal vector of matrix D
         % Argument

%\renewcommand{\qedsymbol}{\ding{114}}

%%%%%%%%

% TODO: Fix here fonts

%\newcommand{\norm}[1]{\left\lVert#1\right\rVert_2}

\usepackage{tcolorbox}
\usepackage{pifont}
\definecolor{mydarkgreen}{RGB}{39,130,67}
\definecolor{mydarkred}{RGB}{192,47,25}
\definecolor{mydarkorange}{RGB}{39,130,67}

\newcommand{\algname}[1]{{\small \red \sf #1}}
% \newcommand{\algnamesmall}[1]{{\small \sf #1}}
% \newcommand{\algnamelarge}[1]{{\large \sf #1}} % for subsections
% \newcommand{\algnameLarge}[1]{{\Large \sf #1}} % for sections

%%%%%%%%

\newtheorem{assumption}{Assumption}
\newtheorem{lemma}{Lemma}

\newtheorem{theorem}{Theorem}

\newtheorem{corollary}{Corollary}

\theoremstyle{plain}

\theoremstyle{definition}

% Hack that enables labeling of lines in algorithms

% Code
\usepackage{listings}
\usepackage{xcolor}

\definecolor{codegreen}{rgb}{0,0.6,0}
\definecolor{codegray}{rgb}{0.5,0.5,0.5}
\definecolor{codepurple}{rgb}{0.58,0,0.82}
\definecolor{backcolour}{rgb}{0.95,0.95,0.92}

\lstdefinestyle{mystyle}{
    backgroundcolor=\color{backcolour},   
    commentstyle=\color{codegreen},
    keywordstyle=\color{magenta},
    numberstyle=\tiny\color{codegray},
    stringstyle=\color{codepurple},
    basicstyle=\ttfamily\footnotesize,
    breakatwhitespace=false,         
    breaklines=true,                 
    captionpos=b,                    
    keepspaces=true,                 
    numbers=left,                    
    numbersep=5pt,                  
    showspaces=false,                
    showstringspaces=false,
    showtabs=false,                  
    tabsize=2
}

\lstset{style=mystyle}

% prox

\definecolor{darkscarlet}{rgb}{0.34, 0.01, 0.1}
\definecolor{yaleblue}{rgb}{0.06, 0.3, 0.57}
\definecolor{darkpowderblue}{rgb}{0.0, 0.2, 0.6}
\definecolor{midnightblue}{rgb}{0.1, 0.1, 0.44}

\renewcommand{\algname}[1]{{\color{midnightblue!70!black}\sf #1}}
\newcommand{\approxalg}[1]{{\ttfamily\color{midnightblue!70!black}#1}}

\begin{document}

\maketitle

\begin{abstract}
The \algname{Muon} optimizer has rapidly emerged as a powerful, geometry-aware alternative to \algname{AdamW}, demonstrating strong performance in large-scale training of neural networks. However, a critical theory-practice disconnect exists: \algname{Muon}'s efficiency relies on fast, approximate orthogonalization, yet all prior theoretical work analyzes an idealized, computationally intractable version assuming exact SVD-based updates. This work moves beyond the ideal by providing the first analysis of the \textit{inexact} orthogonalized update at \algname{Muon}'s core. We develop our analysis within the general framework of Linear Minimization Oracle (LMO)-based optimization, introducing a realistic additive error model to capture the inexactness of practical approximation schemes. Our analysis yields explicit bounds that quantify performance degradation as a function of the LMO inexactness/error, $\err$. We reveal a fundamental coupling between this inexactness and the optimal step size and momentum: lower oracle precision requires a smaller step size but larger momentum parameter. These findings elevate the approximation procedure (e.g., the number of \approxalg{Newton-Schulz} steps) from an implementation detail to a critical parameter that must be \emph{co-tuned} with the learning schedule. NanoGPT experiments directly confirm the predicted coupling, with optimal learning rates clearly shifting as approximation precision changes.
\end{abstract}

\section{Introduction}

For over a decade, the landscape of deep learning optimization has been dominated by adaptive first-order methods, with \algname{AdamW}~\citep{kingma2015adam, loshchilov2019decoupled} serving as the de facto standard for training large and complex neural networks. Its robustness and general effectiveness have powered progress across numerous domains. Yet a new class of geometry-aware optimizers has recently emerged, challenging this paradigm. Among them, \algname{Muon}~\citep{jordan2024muon} has quickly gained prominence as a successor to \algname{AdamW}. By leveraging matrix structure in neural network parameters, \algname{Muon} has demonstrated superior performance and scalability, setting new training speed records for models like nanoGPT~\citep{jordan2024moddednanoGPT} and enabling the efficient training of state-of-the-art Large Language Models (LLMs), such as Kimi from Moonshot AI~\citep{team2025kimi}. Benchmarking studies consistently show that \algname{Muon} can be significantly more computationally efficient than \algname{AdamW}~\citep{liu2025muon, shah2025practical, wen2025fantastic}.

However, a critical disconnect lies at the heart of \algname{Muon}'s success. The practical efficiency of the optimizer is entirely predicated on its use of fast, approximate orthogonalization methods, most notably the \approxalg{Newton-Schulz} iteration~\citep{jordan2024muon}. This iterative matrix-multiplication-based procedure provides a computationally cheap way to approximate the orthogonal factor of a matrix's polar decomposition, avoiding a full Singular Value Decomposition (SVD) which would be prohibitively expensive~\citep{grishina2025accelerating}. Yet, a significant gap exists between this practical implementation and its theoretical understanding. All prior theoretical analyses of \algname{Muon}~\citep[e.g.,][]{li2025noteconvergencemuon, shen2025convergence, chen2025muon} have studied an \emph{idealized}, computationally intractable version of the algorithm. In fact, these studies assume access to an exact orthogonalization oracle that computes the perfect SVD-based update. Hence, the existing theory describes an algorithm that is never actually used in practice, leaving the real-world performance of \algname{Muon} potentially unexplained.

This work moves beyond the ideal to provide the first theoretical analysis of the \textit{inexact} \algname{Muon} update. We situate our analysis within the general framework of Linear Minimization Oracle (LMO)-based optimization~\citep{pethick2025training}, also known as the Frank-Wolfe framework~\citep{frank1956algorithm, hazan2008sparse,clarkson2010coresets,jaggi2013revisiting}. This perspective frames the core \algname{Muon} operation as an LMO call over the unit ball with respect to the spectral norm. To account for the realities of practical computation, we introduce a realistic additive error model. This model is directly motivated by the behavior of practical approximation schemes like \approxalg{Newton-Schulz}, allowing us to capture the inexactness inherent in any efficient implementation. By analyzing the algorithm under this inexact LMO, we bridge the critical gap between theory and practice.

\subsection{Contributions}

Our main contributions to the theory and practice of LMO-based optimizers are:

$\bullet$ \textbf{The first analysis of the practical (inexact) \algname{Muon} update.}
We present the first formal convergence analysis for LMO-based methods with an inexact oracle under a realistic additive error model (Assumption~\ref{assump:inexact_lmo_main}). Our general framework provides the first theoretical guarantees for the \textit{implemented} \algname{Muon} update, moving beyond prior works that exclusively analyzed an idealized, intractable version of the algorithm.

$\bullet$ \textbf{Uncovering a fundamental hyperparameter coupling.}
Our analysis reveals that the LMO inexactness, $\err$, is a critical parameter that alters the optimization dynamics. We derive explicit formulas for the step size ($\gamma^* \propto (1+\err)^{-1/4}$) and momentum ($\alpha^* \propto \sqrt{1+\err}$) in the stochastic setting (Corollary~\ref{cor:stochastic_optimal_rate}), uncovering a crucial coupling: a less precise LMO (larger $\err$) requires a {\em smaller} learning rate but a {\em larger} momentum parameter. This elevates the approximation quality from an implementation detail to a core hyperparameter.

$\bullet$ \textbf{Comprehensive theoretical framework.} We establish convergence rates for both deterministic and stochastic settings, with extensions to the $(L^0, L^1)$-smoothness model~\citep{zhang2020why} and a layer-wise setting~\citep{riabinin2025gluon}. These results tightly generalize prior work, exactly recovering the rates for exact LMOs when the inexactness is set to zero.

$\bullet$ \textbf{Empirical validation.}
We validate our key theoretical predictions with experiments on CIFAR-10 and NanoGPT. We confirm that performance degrades as LMO precision decreases and, crucially, our experiments with NanoGPT (Figure~\ref{fig:nanoGPT_heatmaps}) provide clear empirical evidence for the predicted hyperparameter coupling, showing that the optimal learning rate shifts to a lower value when a less precise LMO is used.

\subsection{Related work}

We survey the key developments toward a theoretical understanding of \algname{Muon}.

The conceptual basis for \algname{Muon}'s orthogonalized update was introduced by \citet{jordan2024muon} and theoretically motivated by \citet{bernstein2024old}, who showed that the preconditioned update of a simplified \algname{Shampoo} optimizer is equivalent to steepest descent under the spectral norm. The first formal convergence analyses for \algname{Muon} sought to connect it to existing theoretical frameworks. Notably, \citet{li2025noteconvergencemuon} provided an initial convergence guarantee by viewing the \algname{Muon} update as a matrix-based generalization of normalized \algname{SGD} with momentum, building upon the analysis of \citet{cutkosky2020momentum}.

Following these initial results, two powerful and unifying frameworks emerged concurrently, providing a more general perspective. \citet{pethick2025training} introduced the \algname{Scion} framework, which situates \algname{Muon} within the broader class of methods based on the Linear Minimization Oracle (LMO), a core component of the Frank-Wolfe algorithm~\citep{frank1956algorithm}. Independently, \citet{kovalev2025muon} developed a non-Euclidean trust-region interpretation, also for arbitrary norms. Both frameworks successfully recover the idealized \algname{Muon} update as a special case when the spectral norm is chosen, and provided the first general convergence guarantees for this class of optimizers. 

An idealized trust-region method, called the ball-proximal (``broximal'') point method (\algname{BPM})  was concurrently developed by \citet{BPM}. \algname{BPM} has remarkable theoretical guarantees for convex optimization---it converges in finitely many steps, and has linear rate independent of the condition number, with the geometric factor improving in each iteration---without requiring differentiability, finite-valuedness, or strong convexity. Moreover, the method provably converges to the global minimizer for a certain class of non-convex problems. A non-Euclidean variant was recently proposed and analyzed by \citet{neBPM}, with the non-Euclidean norm playing the role of a hyper-parameter performing a form of geometric preconditioning.
\algname{Muon} can be seen as an approximate version of non-Euclidean \algname{BPM} for a specific norm choice, where the broximal operator is applied to a stochastic linear approximation of the loss instead of the original loss, with momentum added to the mix as a means of handling stochastic noise.

Subsequent work has focused on making these general frameworks more reflective of practical deep learning scenarios. The \algname{Gluon} framework of \citet{riabinin2025gluon} extended the analysis to a more realistic layer-wise setting and introduced a generalized non-Euclidean smoothness model to better capture the heterogeneous structure of neural networks. Concurrently, \citet{pethick2025generalized} introduced a clipped \algname{Scion} variant that provides guarantees under a similar generalized smoothness condition. Further, \citet{Drop-Muon} have shown that it may be theoretically suboptimal to update {\em all} the layers of a neural network in each iteration, and as a remedy, proposed and analyzed (also under generalized smoothness) the \algname{Drop-Muon} method, showing wall-clock improvements on toy networks. 

An extension of \algname{Muon} to the distributed setting, with support for communication compression, error-feedback and generalized smoothness, was developed by \citet{EF21-Muon}, who proposed the \algname{EF21-Muon} method. The \algname{MuonBP} method of \citet{MuonBP} proposes to apply orthogonalization independently to matrix shards on each device, while periodically performing full orthogonalization to maintain training stability at scale.

Despite all this theoretical understanding, all prior works share a critical limitation: they analyze an idealized algorithm that assumes access to an exact, error-free LMO. This is a significant gap, as the practical efficiency and success of \algname{Muon} are entirely predicated on the use of fast but approximate solvers for the orthogonalized update. The concept of spectral descent and orthogonalized updates has historical roots in deep learning~\citep{carlson2015stochasticaistats, carlson2015stochasticieee, carlson2015preconditioned, tuddenham2022orthogonalising}, but the consequences of its \textit{inexact} computation in modern optimizers have, to our knowledge, never been analyzed. Our work is the first to address this fundamental disconnect between theory and practice.

\section{From Idealized Theory to Practical Implementation}
\label{sec:background}

We begin by formalizing the class of algorithms our analysis covers, starting from the \algname{Muon} optimizer. 

\paragraph{The idealized update}

\algname{Muon}~\citep{jordan2024muon} is matrix-aware, leveraging the geometric structure of two-dimensional weight parameters ($W \in \mathbb{R}^{n \times m}$). Instead of element-wise scaling as in \algname{AdamW}, \algname{Muon} applies momentum and then performs an orthogonalization step on the resulting update matrix. The idealized update has the form
\begin{gather*}
M^k = \alpha M^{k-1} + (1-\alpha) G^k, \quad D^k = T(M^k),\\
 X^{k+1} = X^k - \gamma_k D^k,
\end{gather*}
where $G^k$ is a stochastic gradient, $\gamma_k>0$ is the step size, $M^k$ is the momentum matrix, $\alpha>0$ is the momentum parameter, and $T(\cdot)$ represents the projection onto the set of orthogonal matrices.

As shown by \citet{bernstein2024old}, this update is equivalent to performing steepest descent with respect to the spectral norm geometry, $\|\cdot\| \eqdef \|\cdot\|_\text{sp}$. Given a gradient matrix $G$, the steepest descent direction under this norm is equal to
\begin{equation} \label{eq:steepest_descent}
    {\operatorname{argmin}} \ \left\{\inp{G}{D} \;:\; \|D\|_\text{sp} \le 1 \right\}.
\end{equation}
The solution to \eqref{eq:steepest_descent} is the orthogonal polar factor of the negative gradient, $D = \operatorname{polar}(-G)$. If the SVD of the gradient is $G = U S V^\top$, then the solution is $D = -U V^\top$. This idealized update direction forms the theoretical basis of the \algname{Muon} optimizer. In its practical implementation, the computationally expensive polar decomposition is approximated by the efficient, SVD-free \approxalg{Newton-Schulz} iteration. This practical algorithm demonstrated remarkable empirical success, but was introduced without a formal convergence analysis.

Subsequently, \citet{pethick2025training} (using the {\em Linear Minimization Oracle} (LMO) framework) and \citet{kovalev2025muon} (via a non-Euclidean trust-region framework) provided the first meaningful convergence guarantees for this type of update. These frameworks consider a general update of the form\footnote{From now on, we drop the upper-case matrix notation in favor of simpler, lower-case vector-space notation.} $x^{k+1} = x^k + \gamma_k d^k$, where the direction $d^k$ is the solution to the LMO 
\begin{equation}
    d^k \eqdef {\operatorname{argmin}} \ \left\{\inp{m^k}{d} \;:\; \|d\| \le 1 \right\},
    \label{eq:lmo_definition}
\end{equation}
where $m^k$ is the momentum term, $\inp{\cdot}{\cdot}$ refers to an inner product (trace inner product in matrix spaces), and $\|\cdot\|$ refers to an arbitrary, possibly non-Euclidean norm. These analyses recover the idealized \algname{Muon} update when $\|\cdot\|$ is chosen as the spectral norm.

\paragraph{The practical imperative of approximation}
While these frameworks provide invaluable insight, they analyze an idealized algorithm that is never run in practice. 
For many norms, such as the $\ell_\infty$ norm in \algname{SignSGD}, the LMO in \eqref{eq:lmo_definition} can be computed exactly and with minimal overhead. For the spectral norm, however, the exact LMO solution requires a full, prohibitively expensive, SVD.

This computational bottleneck makes the practical implementations of \algname{Muon} entirely dependent on \textit{approximate updates}. The original optimizer employs 5 steps of the \approxalg{Newton-Schulz} iteration \citep{higham2008functions}, a matrix polynomial-based method that efficiently approximates the polar factor using only fast matrix-matrix multiplications \citep{jordan2024muon}. The importance of this approximation is underscored by an active line of research into developing superior iterative schemes \citep{cesista2025muonoptcoeffs}, such as \approxalg{PolarExpress} \citep{amsel2025polar} and \approxalg{CANS} \citep{grishina2025accelerating}, which offer better error guarantees or faster convergence. This highlights a crucial fact: the algorithm achieving state-of-the-art results is not the idealized one, but one of its many possible inexact instantiations. Yet, all prior theoretical work has analyzed the idealized case, assuming access to an error-free LMO. 

\paragraph{Modeling inexactness}

To bridge this theory-practice divide, we must first establish a realistic model for the error produced by the LMO approximation. We replace the exact direction $d^k$ in the update rule with an inexact direction $\hat{d}^k$, which is assumed to satisfy the following assumption.
\begin{assumption}[\textbf{Inexact LMO}]
    \label{assump:inexact_lmo_main}
    Let $d^k$ be the exact solution to \eqref{eq:lmo_definition}. The inexact solution $\hat{d}^k$ is assumed to satisfy an additive error bound for some $\err_k \ge 0$:
    \begin{equation}
        \|\hat{d}^k - d^k\| \le \err_k.
        \label{eq:inexact_assumption}
    \end{equation}
\end{assumption}

This assumption is not arbitrary; it is directly motivated by the convergence guarantees of the iterative methods used in practice. Algorithms like \approxalg{Newton-Schulz} and \approxalg{PolarExpress} produce an approximation whose error decreases with the number of iterations performed. Specifically, \citet{amsel2025polar} prove that \approxalg{PolarExpress} satisfies an error bound of the form $\|\hat{d}^k - d^k\| \le C|1-l|^{p}$, where $l<1$ and $p$ is determined by the number of iterations (see Appendix~\ref{app:error_model} for details). Since these methods are always run for a small, fixed number of steps in practice (e.g., five steps in the standard \algname{Muon} implementation \citep{jordan2024muon}), their final error is bounded by a constant. Our parameter $\err_k$ models this error, allowing our analysis to capture the behavior of the implemented algorithm. A key feature of our analysis is that we do not assume the output $\hat{d}^k$ to be feasible, i.e., we do not assume $\|\hat{d}^k\| \le 1$, which is a realistic property of these approximation schemes.

\section{Main Theoretical Results}
\label{sec:main_results}

Having established the practical importance of the inexact LMO, we now develop a theoretical framework for analyzing its impact on convergence. Our analysis is general and holds for any norm $\|\cdot\|$. We consider the unconstrained optimization problem
\begin{equation}
    \min \ \left\{f(x) \;:\; x \in \mathcal{X} \right\},
    \label{eq:problem_formulation}
\end{equation}
where $(\mathcal{X}, \|\cdot\|)$ is a normed space. We will use the following standard assumption on the objective function.

\begin{assumption}
\label{assump:smoothness}
The objective function $f: \mathcal{X} \to \R$ is continuously differentiable and its gradient $\nabla f$ is Lipschitz continuous with constant $L \ge 0$ with respect to the norm $\|\cdot\|$. This means for any $x, y \in \mathcal{X}$ we have
\begin{equation*}
    \|\nabla f(x) - \nabla f(y)\|_{\star} \le L \|x-y\|,
\end{equation*}
where $\|\cdot\|_{\star}$ is the dual norm of $\|\cdot\|$. Furthermore, we assume that $f$ is bounded below by $f^* > -\infty$.
\end{assumption}

We begin with the deterministic setting to build intuition, before moving to the full stochastic analysis with momentum.

\subsection{Deterministic case}
\label{sec:deterministic}

In the deterministic setting, we analyze the update
\begin{equation}
    x^{k+1} = x^k + \gamma_k \hat{d}^k,
    \label{eq:deterministic_method}
\end{equation}
where $\gamma_k > 0$ is the step size and $\hat{d}^k$ is the output of an inexact LMO for the gradient $g^k \eqdef \nabla f(x^k)$, i.e.,
\begin{equation*}
    \hat{d}^k \approx \operatorname{argmin} \ \left\{ \inp{g^k}{d} \;:\; \|d\| \le 1\right\}.
\end{equation*}
The inexact direction $\hat{d}^k$ is assumed to satisfy Assumption~\ref{assump:inexact_lmo_main} with some inexactness level $\err_k \ge 0$. We now state our first convergence result for this method.

\begin{theorem}[General Result]
\label{thm:general_deterministic}
Let Assumption~\ref{assump:smoothness} hold. Let the sequence $\{x^k\}_{k=0}^{K-1}$ be generated by the update rule \eqref{eq:deterministic_method} with step sizes $\gamma_k > 0$. Assume the inexact LMO satisfies Assumption~\ref{assump:inexact_lmo_main} with $\err_k < 1$ for all $k$ and let $\Delta^0 \eqdef f(x^0) - f^*$. Then, after $K$ iterations, 
\begin{align*}
  \min \limits_{0 \le k < K} \ \norm{\nabla f(x^k)}_{\star} 
    \le \frac{\Delta^0 + \frac{L}{2} \sum_{k=0}^{K-1} (\gamma_k)^2 (1+\err_k)^2}{\sum_{k=0}^{K-1} \gamma_k (1-\err_k)}.
\end{align*}
\end{theorem}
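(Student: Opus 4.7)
The plan is to apply the standard descent lemma and then exploit the defining property of the LMO together with the additive inexactness bound. I would carry it out in four steps.

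\textbf{Step 1 (Descent inequality).} By Assumption~\ref{assump:smoothness} and the standard $L$-smoothness inequality, the update $x^{k+1} = x^k + \gamma_k \hat{d}^k$ gives
\begin{equation*}
    f(x^{k+1}) \le f(x^k) + \gamma_k \inp{g^k}{\hat{d}^k} + \tfrac{L \gamma_k^2}{2} \|\hat{d}^k\|^2.
\end{equation*}

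\textbf{Step 2 (Control of the linear term via the LMO property).} The key observation is that, by the definition of $d^k$ as the LMO minimizer over the unit ball and the definition of the dual norm, $\inp{g^k}{d^k} = -\|g^k\|_\star$. Splitting $\inp{g^k}{\hat{d}^k} = \inp{g^k}{d^k} + \inp{g^k}{\hat{d}^k - d^k}$ and applying the generalized Cauchy--Schwarz inequality together with Assumption~\ref{assump:inexact_lmo_main} yields
\begin{equation*}
    \inp{g^k}{\hat{d}^k} \le -\|g^k\|_\star + \|g^k\|_\star \cdot \err_k = -(1-\err_k)\|g^k\|_\star.
\end{equation*}

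\textbf{Step 3 (Control of the quadratic term).} Since $\|d^k\| \le 1$, the triangle inequality combined with Assumption~\ref{assump:inexact_lmo_main} gives $\|\hat{d}^k\| \le 1 + \err_k$, hence $\|\hat{d}^k\|^2 \le (1+\err_k)^2$. This is where the fact that we do \emph{not} assume feasibility of $\hat{d}^k$ enters: the inflated bound $(1+\err_k)^2$ is what eventually couples $\err_k$ to the step size.

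\textbf{Step 4 (Telescoping and averaging).} Substituting the bounds from Steps 2 and 3 into the descent inequality and rearranging, I get
\begin{equation*}
    \gamma_k (1-\err_k)\|g^k\|_\star \le f(x^k) - f(x^{k+1}) + \tfrac{L}{2}\gamma_k^2 (1+\err_k)^2.
\end{equation*}
Summing from $k=0$ to $K-1$, the right-hand side telescopes into $f(x^0) - f(x^K) + \tfrac{L}{2}\sum_k \gamma_k^2(1+\err_k)^2 \le \Delta^0 + \tfrac{L}{2}\sum_k \gamma_k^2(1+\err_k)^2$, using $f(x^K) \ge f^*$. Bounding the minimum by the weighted average,
\begin{equation*}
    \min_{0\le k<K}\|g^k\|_\star \cdot \sum_{k=0}^{K-1}\gamma_k(1-\err_k) \le \sum_{k=0}^{K-1}\gamma_k(1-\err_k)\|g^k\|_\star,
\end{equation*}
and dividing (valid because $\err_k<1$ ensures the weights are positive) gives the claim.

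The only delicate point is Step 2, where one needs the equality $\inp{g^k}{d^k} = -\|g^k\|_\star$; everything else is routine algebra. I expect no serious obstacle here, since the inexactness is introduced in a way tailor-made for this decomposition.
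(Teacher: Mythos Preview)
Your proposal is correct and follows essentially the same approach as the paper's proof: smoothness inequality, then split $\inp{g^k}{\hat d^k}$ via the exact LMO direction to get $-(1-\err_k)\|g^k\|_\star$, bound $\|\hat d^k\| \le 1+\err_k$ by the triangle inequality, and telescope with the min-vs-weighted-average trick. The paper presents the steps in a slightly different order (quadratic term first, then inner product), but the argument is otherwise identical.
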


Theorem~\ref{thm:general_deterministic} provides a general convergence guarantee that explicitly characterizes how the interplay of step sizes $\{\gamma_k\}$ and inexactness levels $\{\err_k\}$ affects convergence. The bound reveals that inexactness degrades performance in two ways: the numerator is amplified by a $(1+\err_k)^2$ factor due to the potential infeasibility of the update, while the denominator, representing total progress, is diminished by a $(1-\err_k)$ factor. Condition $\err_k < 1$ is mathematically necessary to ensure that the denominator is positive, which guarantees that the algorithm makes progress on average. As shown in the proof (see Appendix~\ref{sec:proofs}), this condition is required for the per-iteration descent property and is satisfied by practical approximation schemes like \approxalg{PolarExpress}~\citep{amsel2025polar}. To gain clearer, quantitative insights into these trade-offs, we next analyze the important special case of constant parameters.

\begin{corollary}[Constant Parameters]
\label{cor:constant_deterministic}
Under the conditions of Theorem~\ref{thm:general_deterministic}, if the step size is constant, $\gamma_k = \gamma > 0$, and the LMO error is constant, $\err_k = \err < 1$, for all $k$, then after $K$ iterations we have
\begin{align*}
\frac{1}{K} \sum \limits_{k=0}^{K-1} \norm{\nabla f(x^k)}_{\star} \le \frac{\Delta^0}{K\gamma(1-\err)} + \frac{L \gamma (1+\err)^2}{2(1-\err)}.
\end{align*}
\end{corollary}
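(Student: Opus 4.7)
The plan is to obtain Corollary~\ref{cor:constant_deterministic} as a direct specialization of Theorem~\ref{thm:general_deterministic}, but with a small twist: the corollary states a bound on the arithmetic mean $\frac{1}{K}\sum_k \|\nabla f(x^k)\|_\star$, whereas Theorem~\ref{thm:general_deterministic} is stated in terms of the minimum. Since the minimum is at most the average, the corollary is formally stronger than what one would get by literal substitution. The most natural route is therefore not to cite the theorem as a black box, but to back up one line in its proof and substitute constants there.

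First I would reopen the proof of Theorem~\ref{thm:general_deterministic} at the stage where the per-iteration descent inequality has already been telescoped into a weighted sum of the form
\begin{equation*}
    \sum_{k=0}^{K-1} \gamma_k (1-\err_k)\, \|\nabla f(x^k)\|_\star \;\le\; \Delta^0 + \frac{L}{2} \sum_{k=0}^{K-1} (\gamma_k)^2 (1+\err_k)^2,
\end{equation*}
which is the inequality that, upon lower-bounding the left-hand side by $\bigl(\sum_k \gamma_k(1-\err_k)\bigr)\cdot \min_k\|\nabla f(x^k)\|_\star$, yields Theorem~\ref{thm:general_deterministic}. With constant parameters $\gamma_k=\gamma$ and $\err_k=\err<1$, the weights $\gamma_k(1-\err_k)=\gamma(1-\err)$ are identical across iterations, so no information is lost in pulling them out, and the weighted sum on the left collapses to $\gamma(1-\err)\sum_{k=0}^{K-1}\|\nabla f(x^k)\|_\star$ while the sum on the right collapses to $\tfrac{L}{2}K\gamma^2(1+\err)^2$.

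Next I would divide both sides by $K\gamma(1-\err)$, which is strictly positive by the assumption $\err<1$, and simplify. This immediately produces
\begin{equation*}
  \frac{1}{K}\sum_{k=0}^{K-1}\|\nabla f(x^k)\|_\star \;\le\; \frac{\Delta^0}{K\gamma(1-\err)} + \frac{L\gamma(1+\err)^2}{2(1-\err)},
\end{equation*}
matching the claim. The only thing to check carefully is the positivity/sign of $1-\err$ and the fact that $\err<1$ is already part of the hypothesis inherited from Theorem~\ref{thm:general_deterministic}, so no extra condition is introduced.

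There is essentially no hard step: the argument is purely substitutional and the algebra is elementary. The one subtle point I would be careful about is being explicit that the corollary gives an average-gradient bound rather than a min-gradient bound, and justifying this by appealing to the uniform weights rather than to the literal statement of Theorem~\ref{thm:general_deterministic}; this is why I prefer to reopen the proof one step earlier instead of invoking the theorem as a black box.
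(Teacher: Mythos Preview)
Your proposal is correct and follows essentially the same approach as the paper: the paper's proof also reopens the telescoped summed inequality from the proof of Theorem~\ref{thm:general_deterministic}, substitutes the constant parameters, factors them out of the sums, and divides by $K\gamma(1-\err)$. Your observation that one must go back one line rather than cite the theorem as a black box (because the corollary bounds the average, not the min) is exactly right and is implicitly what the paper does as well.
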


Corollary~\ref{cor:constant_deterministic} simplifies the general bound, making the trade-offs more apparent. The bound consists of two terms that exhibit the classical trade-off in the choice of step size $\gamma$: a larger $\gamma$ reduces the first term but increases the second. This structure implies that an optimal convergence rate is achieved when the step size is of the order $\mathcal{O}(1/\sqrt{K})$, which balances these two terms.

The inexactness level $\err$ degrades both terms in the bound. They are each amplified by a factor of $1/(1-\err)$, which arises from the reduced quality of the descent direction. The second term is further penalized by a $(1+\err)^2$ factor, which quantifies the cost of potential infeasibility of the update step.

Our analysis is general, holding for any norm in a framework akin to that of \citet{pethick2025training} and \citet{kovalev2025muon}. For the specific choice of the spectral norm, $\|\cdot\| = \|\cdot\|_{\star}$, this result provides the first convergence guarantee for the practical, inexact \algname{Muon} update. Notably, if we set the inexactness to zero ($\err=0$), our bound recovers the standard $\mathcal{O}(1/\sqrt{K})$ rate for the exact LMO method. Our framework also accommodates more complex step-size rules; in Appendix~\ref{app:adaptive_step_size}, we analyze an adaptive step-size policy and show that it achieves the same optimal rate. To make the trade-off in the constant step size explicit, we now derive the optimal choice of $\gamma$ that minimizes this bound.

By minimizing the right-hand side of the bound in Corollary~\ref{cor:constant_deterministic} with respect to $\gamma$, we obtain the optimal constant step size and the corresponding best rate.

\begin{corollary}[Optimal Parameters]
\label{cor:optimal_deterministic}
Under the conditions of Corollary~\ref{cor:constant_deterministic}, by choosing the optimal constant step size
$\gamma^* = \frac{1}{1+\err} \sqrt{\frac{2\Delta^0}{K L}}$,
the average gradient norm is bounded as
\begin{align*}
    \frac{1}{K} \sum \limits_{k=0}^{K-1} \norm{\nabla f(x^k)}_{\star} \le \frac{1+\err}{1-\err} \sqrt{\frac{2\Delta^0 L}{K}}.
\end{align*}
\end{corollary}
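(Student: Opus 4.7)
The plan is to directly optimize the upper bound from Corollary~\ref{cor:constant_deterministic} over the step size $\gamma > 0$. Writing the right-hand side as $\phi(\gamma) \eqdef a/\gamma + b\gamma$, where
$$ a \eqdef \frac{\Delta^0}{K(1-\err)}, \qquad b \eqdef \frac{L(1+\err)^2}{2(1-\err)}, $$
we obtain a standard scalar convex minimization problem on $(0,\infty)$, since $a,b>0$ (recalling $\err < 1$).

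First, I would apply either the AM-GM inequality, which gives $\phi(\gamma) \ge 2\sqrt{ab}$ with equality iff $a/\gamma = b\gamma$, or equivalently set $\phi'(\gamma) = -a/\gamma^2 + b = 0$. Both approaches identify the unique minimizer
$$ \gamma^* = \sqrt{a/b} = \sqrt{\frac{2\Delta^0}{K L (1+\err)^2}} = \frac{1}{1+\err}\sqrt{\frac{2\Delta^0}{K L}}, $$
matching the statement of the corollary.

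Next, I would substitute $\gamma^*$ back into $\phi$ to compute the minimal value:
$$ \phi(\gamma^*) = 2\sqrt{ab} = 2\sqrt{\frac{\Delta^0}{K(1-\err)} \cdot \frac{L(1+\err)^2}{2(1-\err)}} = \frac{1+\err}{1-\err}\sqrt{\frac{2\Delta^0 L}{K}}, $$
which is exactly the claimed upper bound. Combined with the guarantee of Corollary~\ref{cor:constant_deterministic}, this yields the stated bound on the average dual-norm gradient.

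There is no real obstacle here. This corollary is a straightforward calculus exercise applied to the bound already established in Corollary~\ref{cor:constant_deterministic}; the only thing worth flagging carefully is the bookkeeping of the $(1\pm\err)$ factors, to make sure the $(1+\err)^{-1}$ dependence in $\gamma^*$ and the $(1+\err)/(1-\err)$ dependence in the final bound come out correctly. The appearance of the $(1+\err)^{-1}$ pre-factor in $\gamma^*$ is the deterministic analogue of the coupling highlighted in the introduction, and foreshadows the $(1+\err)^{-1/4}$ scaling in the stochastic optimal rate of Corollary~\ref{cor:stochastic_optimal_rate}.
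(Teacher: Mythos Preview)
Your proposal is correct and follows essentially the same approach as the paper: both identify the bound as $A/\gamma + B\gamma$ with the same constants, find the minimizer $\gamma^* = \sqrt{A/B}$ (the paper via the derivative, you via AM-GM or the derivative), and evaluate the minimum as $2\sqrt{AB}$ to obtain the stated rate. The algebra and bookkeeping of the $(1\pm\err)$ factors match exactly.
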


Corollary~\ref{cor:optimal_deterministic} makes the theoretical trade-offs concrete, revealing a direct coupling between the LMO precision and the optimal strategy. \emph{The optimal step size, $\gamma^* \propto 1/(1+\err)$, must decrease as the oracle becomes less precise}. This has a direct practical implication for optimizers like \algname{Muon}: using a less accurate approximation of the orthogonalized update (e.g., by reducing the number of \approxalg{Newton-Schulz} iterations) might require a corresponding decrease in the step size to maintain optimal performance. This adaptation, however, does not fully mitigate the error, as the best achievable rate is degraded by a factor of $\frac{1+\err}{1-\err}$. This factor quantifies the dual cost of the potential update infeasibility (from the $1+\err$ term) and reduced descent quality (from the $1-\err$ term). The explicit dependence of the rate on this degradation factor also provides a theoretical justification for the empirical benefits of more advanced approximation schemes. In fact, methods such as \approxalg{PolarExpress}~\citep{amsel2025polar} and \approxalg{CANS}~\citep{grishina2025accelerating} are designed to achieve a smaller approximation error $\err$ more efficiently, which our analysis shows directly translates to an improved convergence rate for the overall optimization.

Our analysis is a tight generalization of prior work; setting $\err=0$ recovers the exact rate for the idealized LMO method~\citep{kovalev2025muon}. Importantly, while the constant is degraded by the inexactness, the $\mathcal{O}(1/\sqrt{K})$ rate implies an iteration complexity of $\mathcal{O}(1/\varepsilon^2)$ to find an $\varepsilon$-stationary point. This matches the optimal complexity for first-order methods on smooth non-convex problems~\citep{carmon2020lower}. Our result thus establishes that the inexact LMO method remains optimal, while characterizing the degradation as a function of the oracle's error.

\subsection{Stochastic case} 
\label{sec:stochastic}

We now extend our analysis to the more practical setting where the optimizer has access only to a stochastic oracle for the gradient and incorporates momentum.

For this setting, we require two additional standard assumptions. First, we assume access to an unbiased stochastic first-order oracle with bounded variance.
\begin{assumption}
    \label{assump:stochastic_oracle}
    The oracle returns a stochastic gradient $g^k = \nabla f(x^k)+\xi^k$ for a random variable $\xi^k$, such that it is an unbiased estimator of the true gradient, $\Exp{g^k \;|\; x^k} = \nabla f(x^k)$, and has a uniformly bounded variance, $\Exp{\|g^k - \nabla f(x^k)\|_2^2 \;|\; x^k} \le \sigma^2$, for $\sigma^2 \ge 0$.
\end{assumption}
Second, since our analysis is for a general norm $\|\cdot\|$ but the variance is typically assumed to be bounded in the Euclidean norm $\|\cdot\|_2$, we assume a norm compatibility condition that is always true in finite-dimensional spaces. So, we denote by $\rho > 0$ the constant such that for all $v \in \mathcal{X}$, we have $\|v\|_{\star} \le \rho \|v\|_2$.

The method we analyze is a general momentum-based algorithm with an inexact LMO, see Algorithm~\ref{alg:stochastic_inaxact_lmo}.

\begin{algorithm}[t]
\caption{Inexact Generalized \algname{Muon} with Momentum}
\label{alg:stochastic_inaxact_lmo}
\begin{algorithmic}[1]
\State \textbf{Input:} Initial point $x^0$, momentum $m^0$ 
\For{$k = 0, 1, \dots, K-1$}
\State Compute stochastic gradient $g^k$
\State Update momentum: $m^{k+1} = (1-\alpha_k)m^k + \alpha_k g^k$
\State Compute inexact LMO with $m^{k+1}$:
    $\hat{d}^k \approx \underset{\|d\| \le 1}{\operatorname{argmin}} \ \inp{m^{k+1}}{d}$
\State Update parameters: $x^{k+1} = x^k + \gamma_k \hat{d}^k$
\EndFor
\end{algorithmic}
\end{algorithm}

To analyze this algorithm, our proof proceeds by establishing a per-iteration descent guarantee and then bounding the error of the momentum term. While the full proofs are deferred to Appendix~\ref{app:stochastic_proofs}, we present the key lemma for the momentum error here, as it reveals the direct impact of the inexact LMO.

\begin{lemma}
\label{lemma:inexact_momentum_error_main}
Let Assumptions~\ref{assump:inexact_lmo_main}, \ref{assump:smoothness}, \ref{assump:stochastic_oracle} hold. For Algorithm~\ref{alg:stochastic_inaxact_lmo}, the expected momentum error is bounded by
\begin{align*}
\mathbb{E} \|m^{k+1} - \nabla f(x^k)\|_{\star} 
    \le (1-\alpha_k)\mathbb{E} [\|m^k - \nabla f(x^{k-1})\|_{\star}] + \frac{\rho\sigma\sqrt{\alpha_k}}{\sqrt{2-\alpha_k}} + L\gamma_k(1+\err_k).
\end{align*}
\end{lemma}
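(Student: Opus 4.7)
The plan is to unroll one step of the momentum recursion so that $m^{k+1} - \nabla f(x^k)$ splits into three pieces that match the three summands on the right-hand side of the claim. Writing $g^k = \nabla f(x^k) + \xi^k$ with $\Exp{\xi^k \mid x^k} = 0$ and adding and subtracting $\nabla f(x^{k-1})$ inside the momentum update, one obtains
\[
m^{k+1} - \nabla f(x^k) = (1-\alpha_k)\bigl(m^k - \nabla f(x^{k-1})\bigr) + (1-\alpha_k)\bigl(\nabla f(x^{k-1}) - \nabla f(x^k)\bigr) + \alpha_k \xi^k.
\]
The first piece is the previous momentum error (damped by $1-\alpha_k$), the second is a smoothness drift inherited from the last step, and the third is a fresh, conditionally mean-zero noise increment.

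Taking the dual norm, applying the triangle inequality, and then taking expectations reduces the proof to bounding the last two pieces. For the smoothness drift I would invoke Assumption~\ref{assump:smoothness} together with the update rule $x^k = x^{k-1} + \gamma_{k-1}\hat d^{k-1}$ to write
\[
\|\nabla f(x^{k-1}) - \nabla f(x^k)\|_\star \le L\gamma_{k-1}\|\hat d^{k-1}\|,
\]
and then use $\|\hat d^{k-1}\| \le \|d^{k-1}\| + \|\hat d^{k-1} - d^{k-1}\| \le 1 + \err_{k-1}$, which combines Assumption~\ref{assump:inexact_lmo_main} with feasibility of the exact LMO solution. Folding the leading $1-\alpha_k \le 1$ into the constant then yields the claimed $L\gamma(1+\err)$-type contribution.

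The real subtlety is the noise term. A direct Jensen/norm-compatibility bound gives only $\Exp{\|\alpha_k \xi^k\|_\star} \le \rho\,\alpha_k\sigma$, which is coarser than the $\rho\sigma\sqrt{\alpha_k/(2-\alpha_k)}$ factor in the statement. The latter is precisely the stationary standard deviation of the exponentially filtered noise, so the natural route is to decompose $m^k = \bar m^k + \tilde m^k$, where $\bar m^k$ follows the deterministic recursion with $g^k$ replaced by $\nabla f(x^k)$ and $\tilde m^k$ satisfies $\tilde m^{k+1} = (1-\alpha_k)\tilde m^k + \alpha_k \xi^k$ with $\tilde m^0 = 0$. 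Conditional orthogonality of the $\xi^j$ in $\ell^2$ gives the recursion
\[
\Exp{\|\tilde m^{k+1}\|_2^2} = (1-\alpha_k)^2\,\Exp{\|\tilde m^k\|_2^2} + \alpha_k^2 \sigma^2,
\]
whose fixed point at constant $\alpha$ is $\tfrac{\alpha}{2-\alpha}\sigma^2$; an induction then yields $\Exp{\|\tilde m^{k+1}\|_2^2} \le \tfrac{\alpha_k}{2-\alpha_k}\sigma^2$. Applying $\|\cdot\|_\star \le \rho\|\cdot\|_2$ and Jensen produces the desired bound on $\Exp{\|\tilde m^{k+1}\|_\star}$.

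The main obstacle I foresee is reconciling the two decompositions. Once the martingale part is isolated via $\bar m + \tilde m$, the recursive quantity naturally appearing on the right-hand side is $\Exp{\|\bar m^k - \nabla f(x^{k-1})\|_\star}$; re-expressing it in terms of $\Exp{\|m^k - \nabla f(x^{k-1})\|_\star}$ costs a $\Exp{\|\tilde m^k\|_\star}$ term that has to be absorbed back into the noise summand. Because the $\ell^2$ bound on $\tilde m$ is essentially stationary, this reabsorption only inflates the noise factor by a constant, but verifying that the single-step recursion of the lemma holds with only $\alpha_k$ (and not a mixture of $\alpha_k, \alpha_{k-1}$) requires careful bookkeeping of the time-varying $\alpha_k$ in the induction, and this is the most delicate part of the argument.
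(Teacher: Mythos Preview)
Your one-step decomposition and treatment of the smoothness drift are correct and match the paper's starting point exactly. The gap is in your claim that the direct bound $\Exp{\|\alpha_k\xi^k\|_\star}\le \rho\alpha_k\sigma$ is \emph{coarser} than $\rho\sigma\sqrt{\alpha_k/(2-\alpha_k)}$: the inequality goes the other way. For every $\alpha_k\in(0,1]$ one has $\alpha_k \le \sqrt{\alpha_k/(2-\alpha_k)}$, since this is equivalent to $(1-\alpha_k)^2\ge 0$. Hence the naive per-step bound already implies the noise term in the recursive inequality as literally stated, and the entire $\bar m/\tilde m$ machinery you set up---along with the bookkeeping obstacle you worry about---is unnecessary to prove the lemma as written.

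That said, your instinct that the $\sqrt{\alpha/(2-\alpha)}$ factor is not a one-step quantity is correct, and it exposes a mismatch between the lemma's statement and the paper's own proof. The paper does \emph{not} establish the recursive inequality; it unrolls the decomposition all the way to $k=0$ and bounds the accumulated noise $V_k=\sum_{i=0}^{k}\alpha(1-\alpha)^{k-i}(g^i-\nabla f(x^i))$ directly via martingale orthogonality in $\ell_2$, obtaining (for constant parameters)
\[
\Exp{\|m^{k+1}-\nabla f(x^k)\|_\star}\;\le\;(1-\alpha)^{k+1}\rho\sigma + \frac{\rho\sigma\sqrt{\alpha}}{\sqrt{2-\alpha}} + \frac{L\gamma(1+\err)}{\alpha}.
\]
This is exactly your $\tilde m$ computation applied to the full sum rather than repackaged as a recursion; the $\sqrt{\alpha/(2-\alpha)}$ is the bound on the \emph{accumulated} noise, which is what Theorem~\ref{thm:stochastic} actually consumes. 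Unrolling the lemma's recursive form with the per-step $\rho\alpha\sigma$ would only yield accumulated noise $\rho\sigma$, so the paper's full-unrolling route buys a genuinely tighter constant---and sidesteps the reconciliation issue you anticipated.
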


\textbf{Commentary on the Proof.}
This lemma bounds the expected deviation of the momentum from the true gradient. The critical difference in our analysis, compared to prior work on exact LMOs, arises when bounding the ``gradient drift'' term, which depends on the step length $\|\nabla f(x^{k-1}) - \nabla f(x^k)\|_{\star} \le L\|x^{k-1} - x^k\|$. An exact LMO guarantees a feasible direction $\|d^{k-1}\| \le 1$, yielding a step length of exactly $\gamma_{k-1}$. In our analysis, the potential infeasibility of the inexact direction, $\|\hat{d}^{k-1}\| \le 1 + \err_{k-1}$, leads to a looser step length bound of $\gamma_{k-1}(1+\err_{k-1})$. This modification introduces the crucial $(1+\err_k)$ factor into the final term of the lemma, quantifying the cost of potential infeasibility inherent to any practical LMO approximation. The full proof is provided in Appendix~\ref{app:stochastic_proofs}.

Building on this lemma, we now present the main convergence guarantee for Algorithm~\ref{alg:stochastic_inaxact_lmo}.

\begin{theorem}
\label{thm:stochastic}
Let Assumptions \ref{assump:inexact_lmo_main},\ref{assump:smoothness},\ref{assump:stochastic_oracle} hold, along with the norm compatibility condition. For Algorithm~\ref{alg:stochastic_inaxact_lmo} with parameters $\gamma > 0$, $\alpha \in (0,1)$, and $\err < 1$, after $K$ iterations, the average expected gradient norm is upper bounded by
\begin{align*}
\frac{1}{K}\sum_{k=1}^{K} \mathbb{E}\|\nabla f(x^k)\|_{\star} \le \frac{1}{1-\err}\left[\frac{\Delta^0}{K\gamma} + 2\rho\sigma\left(\frac{1}{\alpha K} + \sqrt{\alpha}\right) + L\gamma\left(\frac{7+3\err}{2} + \frac{2(1+\err)}{\alpha} \right)\right],
\end{align*}
where $\Delta^0 \eqdef f(x^0) - f^*$.
\end{theorem}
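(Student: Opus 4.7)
The plan is to combine two ingredients: (i) a per-iteration descent inequality that translates the inexact LMO step at momentum $m^{k+1}$ into progress on the gradient norm $\|\nabla f(x^k)\|_{\star}$, and (ii) Lemma~\ref{lemma:inexact_momentum_error_main}, which controls the discrepancy between $m^{k+1}$ and $\nabla f(x^k)$. After telescoping the descent inequality and plugging in the summed momentum error, division by $K\gamma(1-\err)$ will yield the stated bound.

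First, I would derive the per-iteration descent bound. Starting from $L$-smoothness,
\[
f(x^{k+1}) \le f(x^k) + \gamma \inp{\nabla f(x^k)}{\hat{d}^k} + \tfrac{L\gamma^2}{2}\|\hat{d}^k\|^2,
\]
and using $\|\hat{d}^k\| \le \|d^k\|+\err \le 1+\err$, the last term is at most $\frac{L\gamma^2(1+\err)^2}{2}$. For the inner product, I decompose
\[
\inp{\nabla f(x^k)}{\hat{d}^k} = \inp{m^{k+1}}{d^k} + \inp{m^{k+1}}{\hat{d}^k-d^k} + \inp{\nabla f(x^k)-m^{k+1}}{\hat{d}^k}.
\]
The LMO identity gives $\inp{m^{k+1}}{d^k} = -\|m^{k+1}\|_{\star}$; Hölder together with Assumption~\ref{assump:inexact_lmo_main} bounds the second term by $\err\|m^{k+1}\|_{\star}$; the third term is at most $(1+\err)\|m^{k+1}-\nabla f(x^k)\|_{\star}$. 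Using $\|m^{k+1}\|_{\star} \ge \|\nabla f(x^k)\|_{\star} - \|m^{k+1}-\nabla f(x^k)\|_{\star}$ collapses these to
\[
\gamma(1-\err)\|\nabla f(x^k)\|_{\star} \le f(x^k)-f(x^{k+1}) + 2\gamma\,\|m^{k+1}-\nabla f(x^k)\|_{\star} + \tfrac{L\gamma^2(1+\err)^2}{2}.
\]
Summing over $k = 0, \dots, K-1$, taking expectations, and using $f(x^K) \ge f^*$ produces a bound controlled by $\Delta^0$, the cumulative momentum error $\sum_k \mathbb{E}\|m^{k+1}-\nabla f(x^k)\|_{\star}$, and a $K\cdot L\gamma^2(1+\err)^2/2$ term.

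Next I would invoke Lemma~\ref{lemma:inexact_momentum_error_main} to bound the cumulative momentum error. Telescoping the recurrence $a_{k+1} \le (1-\alpha)a_k + b$ (with $b = \frac{\rho\sigma\sqrt{\alpha}}{\sqrt{2-\alpha}} + L\gamma(1+\err)$) over the $K$ iterations gives
\[
\sum_{k=0}^{K-1}\mathbb{E}\|m^{k+1}-\nabla f(x^k)\|_{\star} \le \frac{\mathbb{E}\|m^0-\nabla f(x^{-1})\|_{\star}}{\alpha} + \frac{K}{\alpha}\!\left(\frac{\rho\sigma\sqrt{\alpha}}{\sqrt{2-\alpha}} + L\gamma(1+\err)\right),
\]
where the initial error is controlled by a natural initialization such as $m^0 = g^0$ (bounded by $\rho\sigma$ by Assumption~\ref{assump:stochastic_oracle}). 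Substituting this into the telescoped descent bound, dividing by $K\gamma(1-\err)$, and simplifying $1/\sqrt{2-\alpha}$ using $\alpha \in (0,1)$ collects the $\Delta^0/(K\gamma)$, $\rho\sigma/(\alpha K)$, $\rho\sigma\sqrt{\alpha}$, $L\gamma/\alpha$, and $L\gamma$ terms into the stated form, with the extra $(1+\err)$ and $(1+\err)^2$ factors folded into the single $\frac{7+3\err}{2}$ coefficient on the $L\gamma$ term and the $\frac{2(1+\err)}{\alpha}$ coefficient on the $L\gamma/\alpha$ term.

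The main technical obstacle is precisely this last bookkeeping step: tracking the several $(1\pm\err)$ factors that enter at different points (in the step-length bound $\|\hat{d}^k\| \le 1+\err$, in the LMO decomposition, and through Lemma~\ref{lemma:inexact_momentum_error_main}'s drift term) and combining them so that the stated consolidated coefficients $(7+3\err)/2$ and $2(1+\err)/\alpha$ emerge cleanly, while simultaneously converting the $\sqrt{\alpha}/\sqrt{2-\alpha}$ variance factor into the clean $\sqrt{\alpha}$ appearing in the theorem (using $1/\sqrt{2-\alpha} \le 1$). Everything else — the descent inequality, the LMO Hölder argument, and the geometric telescoping of the momentum recurrence — is mechanical once the decomposition above is in place.
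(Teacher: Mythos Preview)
Your descent inequality is correct and in fact cleaner than the paper's. You relate $\|m^{k+1}\|_\star$ directly to $\|\nabla f(x^k)\|_\star$, while the paper detours through $\nabla f(x^{k+1})$ (their Lemma~\ref{lemma:inexact_descent_appendix}), picking up an extra $L\gamma^2(1+\err)^2$ and then a further $2L\gamma^2(1+\err)$ bridge term when passing back from $\|\nabla f(x^{k+1})-m^{k+1}\|_\star$ to $\|\nabla f(x^k)-m^{k+1}\|_\star$. On the $L\gamma$ side your approach would therefore give a constant strictly smaller than the stated $(7+3\err)/2$, which is fine.

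The real gap is in the noise term, and it is not bookkeeping. Telescoping the recurrence $a_{k+1}\le(1-\alpha)a_k+b$ from Lemma~\ref{lemma:inexact_momentum_error_main} gives $\sum_{k}a_{k+1}\le a_0/\alpha + Kb/\alpha$, so with $b=\frac{\rho\sigma\sqrt{\alpha}}{\sqrt{2-\alpha}}+L\gamma(1+\err)$ the variance piece becomes
\[
\frac{K}{\alpha}\cdot\frac{\rho\sigma\sqrt{\alpha}}{\sqrt{2-\alpha}}=\frac{K\rho\sigma}{\sqrt{\alpha}\,\sqrt{2-\alpha}}.
\]
After the factor $2\gamma$ and the division by $K\gamma(1-\err)$, this is $\tfrac{2\rho\sigma}{\sqrt{\alpha}(1-\err)}$, not the $\tfrac{2\rho\sigma\sqrt{\alpha}}{1-\err}$ in the statement --- you are a full factor of $\alpha$ too large. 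This is fatal downstream: with $\rho\sigma/\sqrt{\alpha}$ you cannot tune $\alpha$ to recover the $K^{-1/4}$ rate of Corollary~\ref{cor:stochastic_optimal_rate}.

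The reason telescoping fails is that applying the triangle inequality at \emph{every} step discards the martingale cancellation among the noise increments $\alpha(g^i-\nabla f(x^i))$. The paper obtains the correct $\rho\sigma\sqrt{\alpha}$ by doing what its \emph{proof} of Lemma~\ref{lemma:inexact_momentum_error_main} actually establishes: fully unrolling $m^{k+1}-\nabla f(x^k)$ into initial error, accumulated drift, and the accumulated-noise martingale $\sum_{i=0}^{k}\alpha(1-\alpha)^{k-i}(g^i-\nabla f(x^i))$; then bounding the latter via $\mathbb{E}\|\cdot\|_\star\le\rho\sqrt{\mathbb{E}\|\cdot\|_2^2}$ so that cross-terms vanish and the geometric sum yields $\frac{\alpha}{2-\alpha}$ rather than $\frac{1}{\alpha}$. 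That, and not the $(1\pm\err)$ tracking you flagged, is the non-mechanical step.
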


Theorem~\ref{thm:stochastic} provides the first convergence guarantee for the practical, inexact LMO-based method with momentum. Compared to the deterministic analysis in Corollary~\ref{cor:constant_deterministic}, the bound contains additional terms dependent on the noise variance $\sigma^2$. With a constant step size, these terms establish convergence that the algorithm will minimize the expected gradient up to a floor governed by the choice of parameters.

Crucially, every term in the bound is amplified by factors involving the inexactness level $\err$, which confirms the intuition that a more precise LMO (smaller $\err$) leads to faster convergence. As a sanity check, setting the inexactness to zero ($\err=0$) allows to recover the corresponding convergence guarantee for the idealized, exact LMO method \citep{kovalev2025muon}, demonstrating that our analysis is a strict generalization.

For clarity, we present this result for the practical case of constant parameters, the case of time-varying parameters is deferred to Appendix~\ref{app:time_varying}. To better understand the complex interplay of the constant parameters, we now derive the optimal choices of $\gamma$ and $\alpha$ that minimize this bound (details in Appendix~\ref{app:stochastic_proofs}).

\begin{corollary}
\label{cor:stochastic_optimal_rate}
Let the conditions of Theorem~\ref{thm:stochastic} hold. By choosing 
$
    \gamma^* = \left(\frac{\Delta^0}{K}\right)^{3/4} \frac{1}{(\sigma^2 L (1+\err))^{1/4}}
$
and
$
    \alpha^* = \sqrt{\frac{\Delta^0 L (1+\err)}{K \sigma^2}}
$,
the average expected gradient norm is bounded as
\begin{align*}
\frac{1}{K}\sum \limits_{k=1}^{K} \mathbb{E} \norm{\nabla f(x^k)}_{\star} = \mathcal{O}\left( \frac{(L\Delta^0)^{1/4}\sigma^{1/2}(1+\err)^{1/4}}{K^{1/4}(1-\err)} \right).
\end{align*}
\end{corollary}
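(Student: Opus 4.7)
The plan is to optimize the bound from Theorem~\ref{thm:stochastic} over $\gamma$ and $\alpha$ simultaneously. Since the prefactor $1/(1-\err)$ is independent of the tunable parameters, I would pull it out and focus on minimizing the bracketed expression
\[
B(\gamma,\alpha) \eqdef \frac{\Delta^0}{K\gamma} + \frac{2\rho\sigma}{\alpha K} + 2\rho\sigma\sqrt{\alpha} + L\gamma\cdot\frac{7+3\err}{2} + \frac{2L\gamma(1+\err)}{\alpha}.
\]
The first step is to identify which terms will dominate in the large-$K$ regime and which are lower-order. Guided by the standard momentum-SGD analysis, I expect $\gamma = \Theta(K^{-3/4})$ and $\alpha = \Theta(K^{-1/2})$, under which the three terms $\Delta^0/(K\gamma)$, $\rho\sigma\sqrt{\alpha}$, and $L\gamma(1+\err)/\alpha$ all scale like $K^{-1/4}$, while the ``$7/2$'' term scales like $K^{-3/4}$ and the ``$1/(\alpha K)$'' term like $K^{-1/2}$. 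So the strategy is to balance only the three dominant terms and then verify at the end that the remainders are absorbed.

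Next, I would formally solve the balance equations
\[
\frac{\Delta^0}{K\gamma} \asymp 2\rho\sigma\sqrt{\alpha} \asymp \frac{2L\gamma(1+\err)}{\alpha}.
\]
From the first and third, $\alpha \asymp L\gamma^2 K(1+\err)/\Delta^0$, and substituting into the equality with $\rho\sigma\sqrt{\alpha}$ yields the explicit choices
\[
\gamma^\star = \left(\tfrac{\Delta^0}{K}\right)^{3/4}\!\!(\sigma^2 L (1+\err))^{-1/4}, \qquad \alpha^\star = \sqrt{\tfrac{\Delta^0 L(1+\err)}{K\sigma^2}}.
\]
(I am dropping $\rho$ into the $\mathcal{O}$-constant, as is done in the corollary statement.) Substituting these back into each of the three dominant terms I would verify directly that each one equals, up to an absolute constant, $(L\Delta^0)^{1/4}\sigma^{1/2}(1+\err)^{1/4}/K^{1/4}$.

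Finally I would dispose of the two lower-order terms. With the above choice, $L\gamma^\star = \mathcal{O}(K^{-3/4})$, so the $(7+3\err)/2$ contribution is $\mathcal{O}(K^{-3/4})$ and asymptotically dominated by the $K^{-1/4}$ rate. Similarly $2\rho\sigma/(\alpha^\star K) = \mathcal{O}(K^{-1/2})$, again negligible. Reinstating the $1/(1-\err)$ prefactor produces the advertised bound. The main potential obstacle is not difficulty but bookkeeping: the joint minimization in $(\gamma,\alpha)$ mixes three terms of different homogeneities, so one has to be careful that the formal solution of the balance system is indeed (up to constants) a minimizer of $B$, and that $\alpha^\star \in (0,1)$, which holds for $K \ge \Delta^0 L(1+\err)/\sigma^2$; this regime can be assumed without loss of generality since otherwise the target rate is trivial.
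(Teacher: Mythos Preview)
Your proposal is correct and follows essentially the same approach as the paper: identify the three dominant terms $\Delta^0/(K\gamma)$, $\rho\sigma\sqrt{\alpha}$, and $L\gamma(1+\err)/\alpha$, balance them asymptotically to derive $\gamma^*$ and $\alpha^*$, and then verify that the remaining two terms are of order $K^{-1/2}$ and $K^{-3/4}$. The paper frames this slightly differently by setting the partial derivatives of the full bound to zero and then simplifying the resulting system via the same asymptotic approximations ($C_5/\alpha$ dominates $C_4$, and $C_2$ is negligible against $C_5\gamma$), but the substance is identical.
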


Corollary~\ref{cor:stochastic_optimal_rate} provides the main insights of our stochastic analysis, revealing a fundamental coupling between the LMO precision and the hyperparameter strategy. The best achievable rate is degraded by a factor of $\mathcal{O}\left(\frac{(1+\err)^{1/4}}{1-\err}\right)$. This explicit dependence on $\err$ provides a theoretical justification for the empirical benefits of more advanced approximation schemes like \approxalg{PolarExpress}~\citep{amsel2025polar} and \approxalg{CANS}~\citep{grishina2025accelerating}, as any reduction in approximation error directly improves the convergence rate. Interestingly, the degradation from inexactness is less severe than in the deterministic setting (Corollary~\ref{cor:optimal_deterministic}), where the rate is penalized by $\mathcal{O}\left(\frac{1+\err}{1-\err}\right)$. This suggests that the presence of stochastic noise, which already necessitates a degree of caution, makes the optimization dynamics less sensitive to the additional instability from the inexact LMO.

This rate is achieved when the hyperparameters are adapted to the LMO's precision. The step size, $\gamma^* \propto 1/(1+\err)^{1/4}$, must \textbf{decrease} as the oracle becomes less precise. In contrast, the momentum parameter, $\alpha^* \propto \sqrt{1+\err}$, must \textbf{increase}. This prescribes a clear strategy: a less accurate LMO (larger $\err$) requires more caution (smaller step size) but also greater agility (shorter momentum memory) to adapt to less reliable update directions.

Finally, the $\mathcal{O}(1/K^{1/4})$ rate implies an iteration complexity of $\mathcal{O}(1/\varepsilon^4)$ to find an $\varepsilon$-stationary point. This matches the established lower bounds for first-order stochastic methods on non-convex problems under our assumptions~\citep{ghadimi2013stochastic}, and this complexity is known to be unimprovable in general~\citep{arjevani2023lower}. Our analysis thus establishes that the inexact LMO method remains efficient in a formal sense, while precisely characterizing the degradation as a function of the oracle's error.

\subsection{Extensions and generalizations}

\paragraph{Beyond standard smoothness.}
Our analysis can be extended beyond the standard $L$-smoothness assumption, which can be restrictive for neural networks. In Appendix~\ref{app:l0l1_smooth}, we provide a full analysis for the more general non-Euclidean $(L^0, L^1)$-smoothness model
\citep{zhang2020why, yu2025mirror, riabinin2025gluon}. For this class of functions, we establish an iteration complexity of $$
\mathcal{O}\left(\frac{\Delta^0(1+\err)^2}{(1-\err)^2}\left(\frac{L^0}{\varepsilon^2} + \frac{L^1}{\varepsilon}\right)\right).$$ 
This result recovers the standard $\mathcal{O}(1/\varepsilon^2)$ rate when $L^1=0$. More importantly, it allows for an improved rate over gradient descent in regimes where $L^0$ is small, a phenomenon that has been empirically observed for the training trajectories of nanoGPT on FineWeb~\citep{riabinin2025gluon}. Our analysis shows that the dependence on the inexactness level $\err$ remains consistent, and by setting $\err=0$, our results tightly recover the prior guarantees for the exact LMO method on $(L^0, L^1)$-smooth functions~\citep{vankov2025optimizing}.

\paragraph{Layer-wise analysis.}
In practice, optimizers like \algname{Muon} and \algname{Scion} are applied in a layer-wise manner, often with different norms and update rules for different parameter blocks (e.g., spectral norm for weight matrices, $\ell_\infty$ norm for biases)~\citep{pethick2025training}. This heterogeneity, along with empirical observations of varying smoothness across layers~\citep{riabinin2025gluon}, motivates a more fine-grained analysis. In Appendix~\ref{app:layerwise}, we extend our framework by considering a block-wise structure of the parameter space $x = (x_1, \dots, x_p)$, where each block $x_i$ is associated with its own norm $\|\cdot\|_{(i)}$, smoothness constant $L_i$, and inexactness $\err_i$. 

Our analysis in this setting reveals that the impact of LMO inexactness is not uniform across the network. This suggests that a uniform precision level for all layers may be suboptimal and opens the door to a principled strategy for computational savings, where one could allocate fewer resources (e.g., fewer \approxalg{Newton-Schulz} iterations) to approximate the LMO for layers that are more robust to inexactness.

\section{Experiments} \label{sec:experiments}

We now present empirical results to validate the key predictions of our theoretical analysis. Our experiments are designed to answer two main questions: 
\begin{enumerate}
\item How does optimizer performance degrade as the LMO becomes less precise?
\item Do the hyperparameters for the best performance (step size and momentum) shift in response to the LMO inexactness, as predicted by our analysis?
\end{enumerate}

We use the number of iterations in the approximation algorithm (either \approxalg{Newton-Schulz} or \approxalg{PolarExpress}) as a practical proxy for the inexactness level $\err$, where fewer iterations correspond to a higher $\err$. We conduct experiments on two standard benchmarks: training a nanoGPT model on the FineWeb dataset \citep{penedo2024fineweb} and a CNN on CIFAR-10 \citep{krizhevsky2009learning}. Full details are provided in Appendix~\ref{app:exp_details}.

\subsection{nanoGPT on FineWeb}
\label{sec:exp_nanoGPT}

We train a 124M parameter nanoGPT model with \algname{Muon} with a batch size of 512,000 using the codebase by \citet{jordan2024moddednanoGPT}. For the LMO approximation, we use the \approxalg{PolarExpress} algorithm~\citep{amsel2025polar}.

\begin{figure}[hb]
    \centering
\begin{subfigure}[t]{0.36\linewidth}
    \centering
    \includegraphics[width=\linewidth]{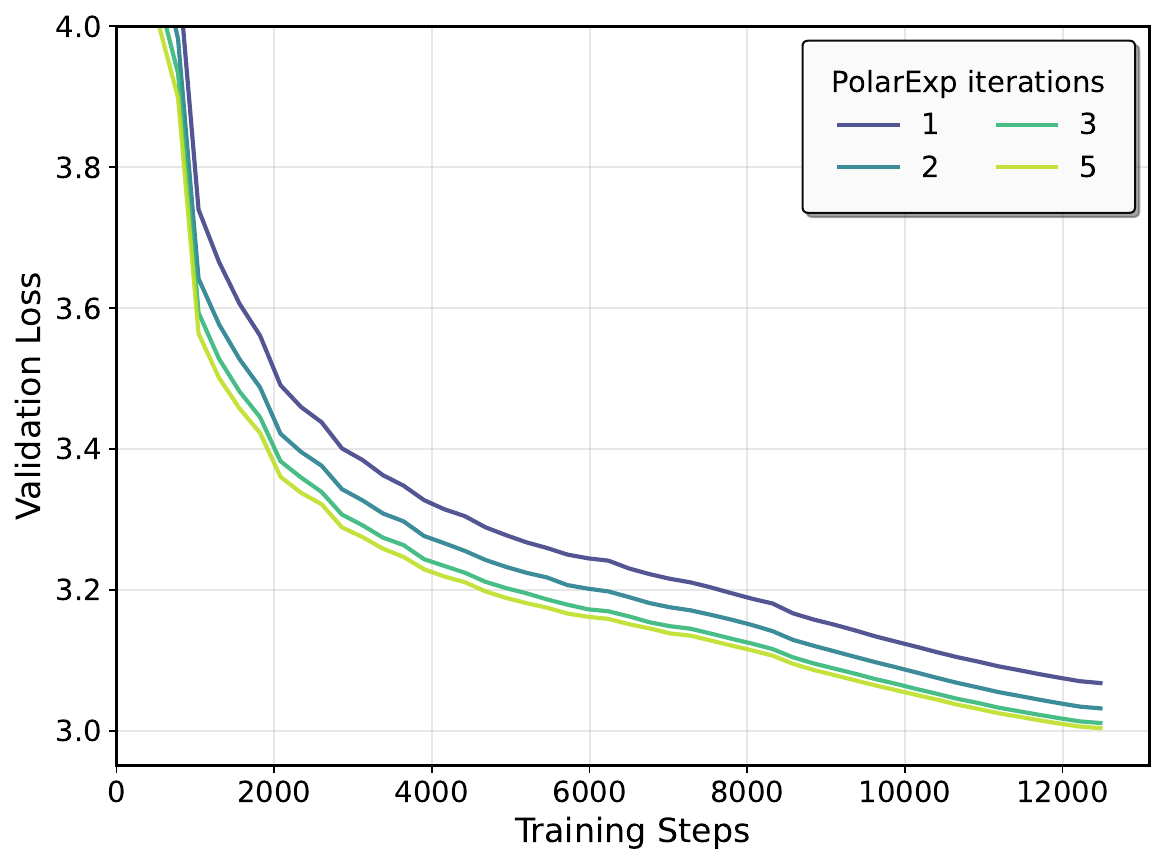}
    \caption{Validation loss behavior; more iterations (smaller $\err$) drive faster convergence.}
    \label{fig:nanoGPT_convergence}
\end{subfigure}
\hfill
\begin{subfigure}[t]{0.62\linewidth}
    \centering
    \includegraphics[width=\linewidth]{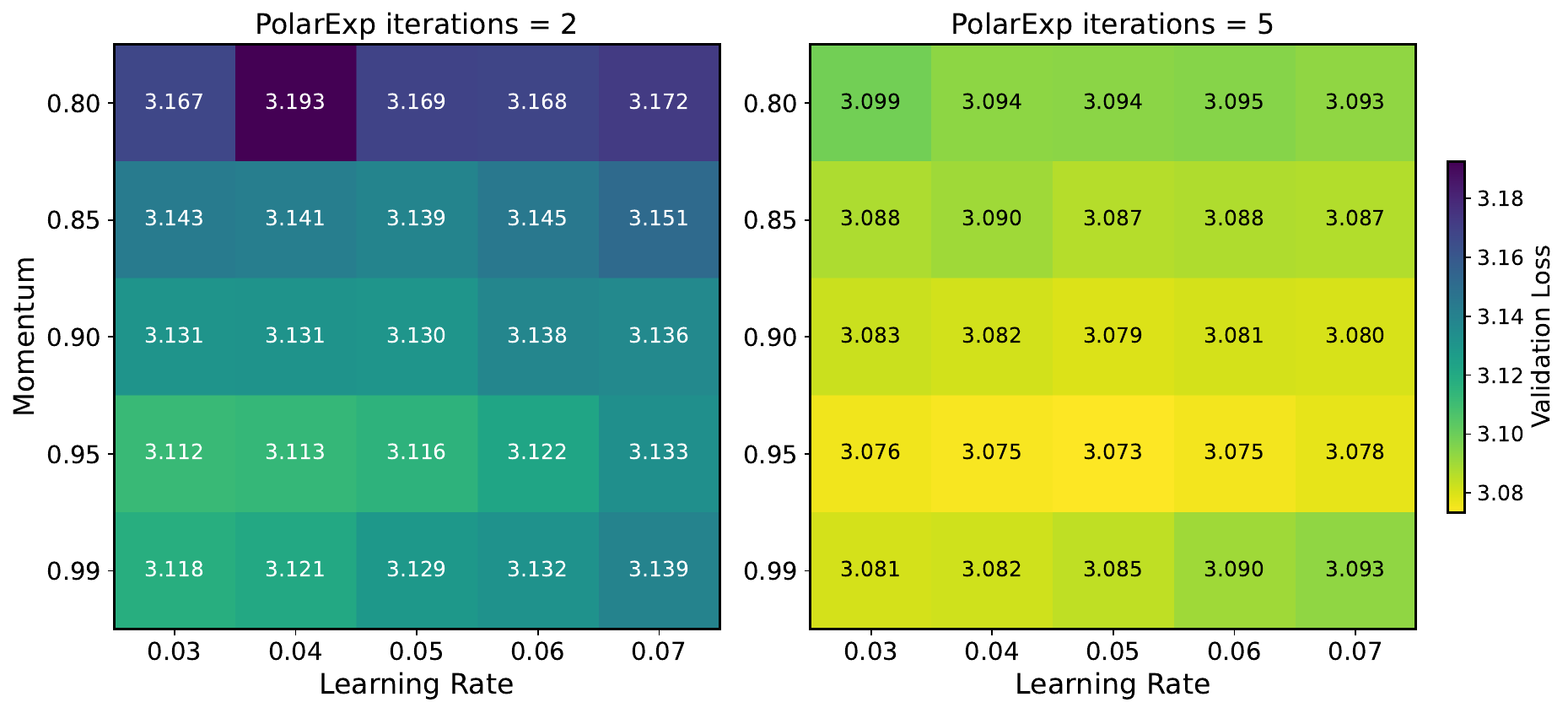}
    \caption{Hyperparameter sweeps for 2 (left) and 5 (right) \approxalg{PolarExpress} iterations; higher precision widens the stable region.}
    \label{fig:nanoGPT_heatmaps}
\end{subfigure}
\caption{NanoGPT on FineWeb: convergence (left) and hyperparameter sensitivity (right) as \approxalg{PolarExpress} (PolarExp) iterations vary.}
\label{fig:nanoGPT_combined}
\end{figure}

\textbf{Validating performance degradation.} Our theory predicts that a less precise LMO (higher $\err$) should lead to degraded convergence performance. Figure~\ref{fig:nanoGPT_convergence} confirms this trend. We trained the model for 12,000 steps and observe that increasing the precision of the LMO by using more \approxalg{PolarExpress} iterations consistently leads to a lower final validation loss. While the most significant gains are seen when moving from one to three iterations, further increases in precision continue to improve performance, though with diminishing returns (see Appendix~\ref{app:additional_nanogpt} for results with up to 8 iterations). This observation directly supports the degradation factor in our convergence bounds.

\textbf{Validating hyperparameter coupling.}
A key prediction of our stochastic analysis (Corollary~\ref{cor:stochastic_optimal_rate}) is the coupling between the LMO inexactness $\err$, the step size $\gamma^*$, and the momentum $\alpha^*$. Figure~\ref{fig:nanoGPT_heatmaps} provides empirical evidence for this theoretical insight. The figure shows the final validation loss after 6,000 training steps across a grid of step sizes and momentum values for two different levels of LMO precision: a highly inexact oracle (2 \approxalg{PolarExp} iterations, top) and a more precise one (5 \approxalg{PolarExp} iterations, bottom).

For the \textbf{highly inexact} case (left panel), the region of best performance (lowest loss) is concentrated at a \textbf{low step size} (around 0.03). When the LMO is made \textbf{more precise} (right panel), the optimal region not only shifts to a \textbf{higher step size} (around 0.05) but also becomes broader, indicating greater stability across different hyperparameter choices. This empirical result aligns with our theory: Corollary~\ref{cor:stochastic_optimal_rate} predicts that as inexactness $\err$ increases (fewer iterations), the step size $\gamma^*$ should decrease.

\subsection{CNN on CIFAR-10}
\label{sec:exp_cifar10}

To test our findings in a different domain, we train a simple CNN on the CIFAR-10 dataset with a batch size of 100. Our implementation is based on the unconstrained \algname{Scion} method~\citep{code_scion}, which we run with a constant step size to isolate the effect of inexactness. For these experiments, we use the standard \approxalg{Newton-Schulz} iteration \citep{jordan2024muon} to approximate the LMO.

\begin{figure}[h]
    \centering
    \begin{subfigure}[t]{0.49\linewidth}
        \centering
        \includegraphics[width=\linewidth]{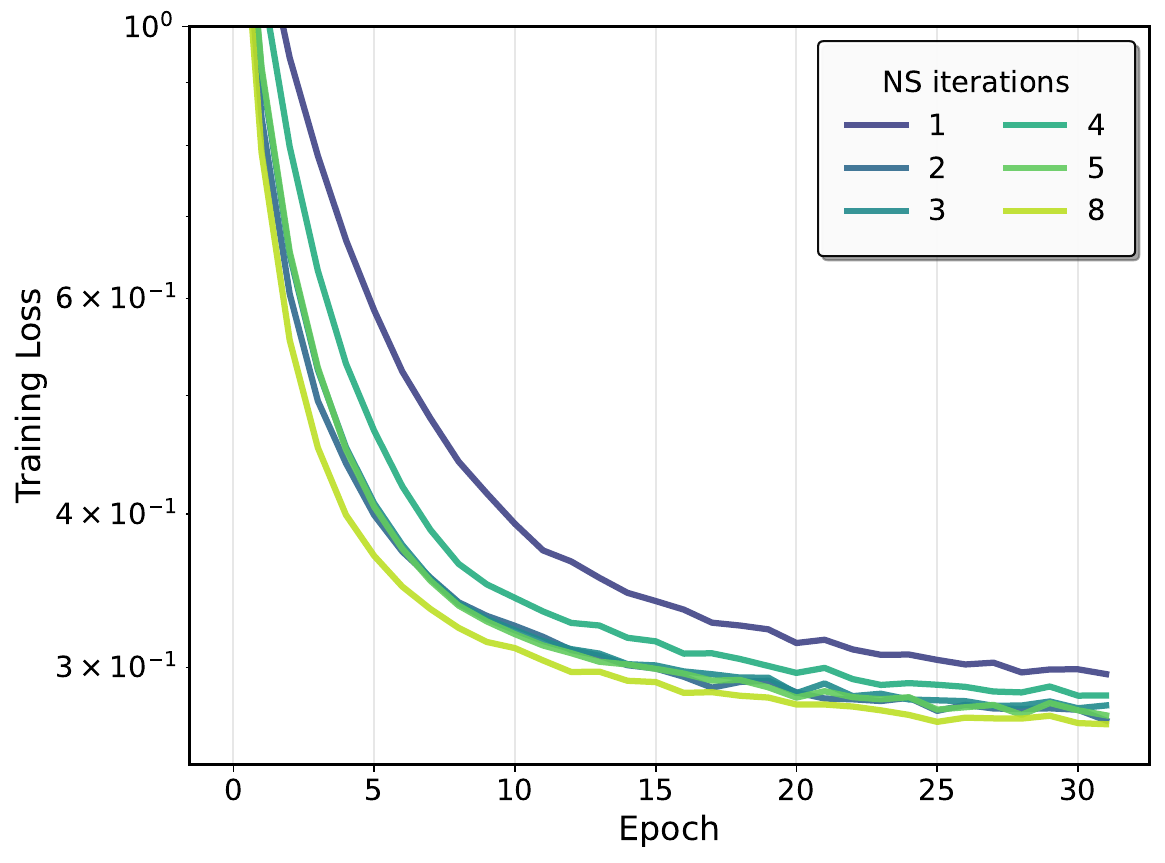}
        \caption{Training loss drops faster as \approxalg{Newton-Schulz} precision increases.}
    \label{fig:cifar_train_loss}
    \end{subfigure}
    \hfill
    \begin{subfigure}[t]{0.48\linewidth}
    \centering
        \includegraphics[width=\linewidth]{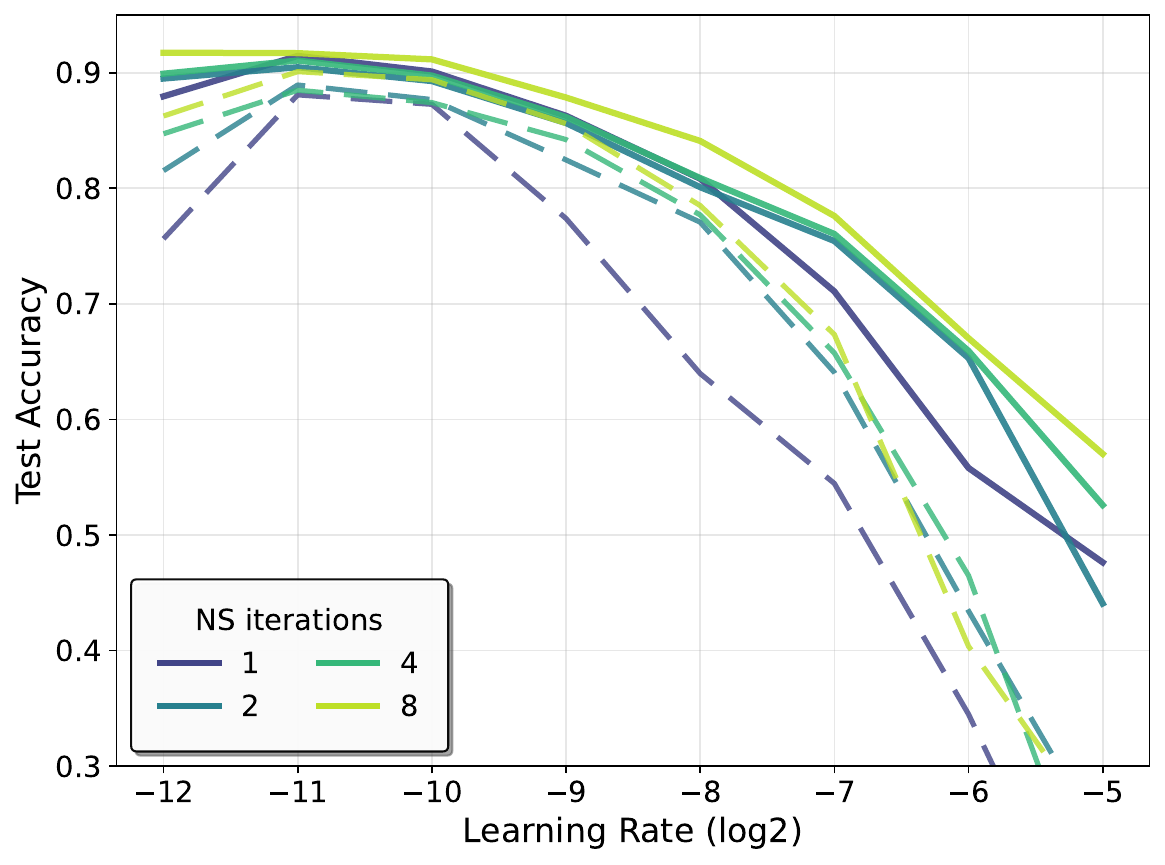}
        \caption{Best (solid) vs. worst (dashed) accuracy across step sizes; higher precision narrows the gap.}
    \label{fig:cifar_variability}
    \end{subfigure}
    \caption{CIFAR-10 training loss and test accuracy across different \approxalg{Newton-Schulz} (NS) precision levels.}
    \label{fig:cifar_combined}
\end{figure}

\textbf{Convergence dynamics.}
Our theory predicts that a less precise LMO should lead to degraded convergence. Figure~\ref{fig:cifar_train_loss} confirms this behavior. We plot the training loss over 32 epochs for different numbers of \approxalg{Newton-Schulz} iterations, using a fixed step size and tuned momentum for each run. The results clearly show that a more precise approximation (e.g., 8 iterations) converges faster and to a lower final training loss compared to less precise approximations (e.g., 1 or 4 iterations). This aligns with the degradation factor present in our theoretical bounds.

\textbf{Performance and stability versus inexactness.}
Beyond the training loss, we investigate how LMO inexactness affects the final test accuracy and the optimizer's stability with respect to its hyperparameters. Figure~\ref{fig:cifar_variability} shows the test accuracy achieved after a sweep over a range of step sizes for different levels of LMO precision. The solid lines represent the best accuracy achieved for a given step size (by tuning momentum), while the dashed lines show the worst.

Two key observations can be made. First, the peak of the solid lines (the best possible performance) consistently increases with the number of \approxalg{Newton-Schulz} iterations, again confirming that higher precision leads to better final performance. Second, the gap between the best (solid) and worst (dashed) performance is large for a highly inexact LMO (1 iteration) and becomes progressively smaller as precision increases. This suggests that a more precise LMO makes the optimizer more robust to the choice of other hyperparameters like momentum, a valuable practical property not explicitly captured by the convergence rate alone.

\section{Conclusion}
\label{sec:conclusion}

This work provides the first formal convergence analysis of LMO-based optimizers with an inexact oracle, addressing the critical gap between the theory and practice of prominent algorithms like \algname{Muon}. By introducing a realistic additive error model, our analysis moves beyond the idealized assumption of a perfect oracle to study the algorithm as it is truly implemented.
Our analysis, for both deterministic and stochastic settings, reveals that there is a coupling between the LMO inexactness, $\err$, and the hyperparameter strategy. Our theory prescribes that a less precise LMO requires a more cautious (smaller) step size, a prediction that our experiments confirm.

\section*{Acknowledgements}

The research reported in this publication was supported by funding from King Abdullah University of Science and Technology (KAUST): i) KAUST Baseline Research Scheme, ii) Center of Excellence for Generative AI, under award number 5940, iii) SDAIA-KAUST Center of Excellence in Artificial Intelligence and Data Science.

\bibliography{ref}

\newpage

\appendix

\tableofcontents

\section{Justification of the Inexact LMO Error Model}
\label{app:error_model}

In this section, we provide additional details on how practical approximation schemes for the orthogonalized update satisfy Assumption~\ref{assump:inexact_lmo_main}.

The \approxalg{PolarExpress} algorithm~\citep{amsel2025polar} is designed to approximate the polar factor $U = \operatorname{polar}(M)$ of a matrix $M$ through iterative refinement. Theorem 4.3 in~\citet{amsel2025polar} establishes that for a matrix $M$ with singular values normalized to lie in $[\ell, 1]$, the \approxalg{PolarExpress} approximation after $r = 2q+1$ (odd) iterations satisfies:
\begin{equation*}
    \|\operatorname{polar}(M) - X_p\|_2 \le |1-\ell^2|^{(q+1)^p},
\end{equation*}
where $\|\cdot\|_2$ denotes the spectral norm (largest singular value). For $r=3$ this yields quadratic convergence, and for $r=5$ cubic convergence. We refer to~\citet{amsel2025polar} for a comprehensive analysis including extensions to rectangular matrices and detailed convergence properties.

The classical \approxalg{Newton-Schulz} iteration~\citep{higham2008functions} exhibits similar convergence properties, with the approximation error decreasing rapidly with the number of iterations.

Since these methods are always run for a small, fixed number of steps in practice (e.g., $p=5$ in \algname{Muon}), their error is bounded by a constant $\err$. Different iteration counts correspond to different values of $\err$, with more iterations producing smaller error. This directly justifies the additive error model in Assumption~\ref{assump:inexact_lmo_main}.

\section{Proofs}
\label{sec:proofs}

\subsection{Deterministic case}

\subsubsection{Proof of Theorem \ref{thm:general_deterministic} (general result)}

The proof establishes a general convergence guarantee for the deterministic inexact LMO method. It proceeds by deriving a per-iteration descent inequality and then summing it over all iterations to obtain a global bound.

\begin{proof}

\textbf{Per-Iteration Descent Inequality.}
We begin with the standard $L$-smoothness inequality for $f$, as stated in Assumption~\ref{assump:smoothness}:
\begin{align*}
    f(x^{k+1}) \le f(x^k) + \inp{\nabla f(x^k)}{x^{k+1}-x^k} + \frac{L}{2}\|x^{k+1}-x^k\|^2.
\end{align*}
Let $g^k \eqdef \nabla f(x^k)$. We substitute the update rule from Equation~\eqref{eq:deterministic_method}, $x^{k+1}-x^k = \gamma_k \hat{d}^k$, into the inequality:
\begin{align} \label{eq:appendix_smoothness_substituted}
    f(x^{k+1}) \le f(x^k) + \gamma_k \inp{g^k}{\hat{d}^k} + \frac{L}{2} \|\gamma_k \hat{d}^k\|^2.
\end{align}
We now bound the two rightmost terms involving the inexact direction $\hat{d}^k$.

\textbf{Bounding the Squared Norm Term.}
Let $d^k$ be the exact LMO solution for the gradient $g^k$. We use the triangle inequality to bound the norm of the inexact direction $\|\hat{d}^k\|$:
\begin{align*}
    \|\hat{d}^k\| = \|\hat{d}^k - d^k + d^k\| \le \|\hat{d}^k - d^k\| + \|d^k\|.
\end{align*}
Using the inexactness from Assumption~\ref{assump:inexact_lmo_main} ($\|\hat{d}^k - d^k\| \le \err_k$) and the property of the exact LMO solution ($\|d^k\| \le 1$), we obtain $\|\hat{d}^k\| \le \err_k + 1$. Therefore, the squared norm term is bounded as:
\begin{align*}
    \|\gamma_k \hat{d}^k\|^2 = (\gamma_k)^2 \|\hat{d}^k\|^2 \le (\gamma_k)^2 (1+\err_k)^2.
\end{align*}

\textbf{Bounding the Inner Product Term.}
We decompose the inner product using the exact direction $d^k$:
\begin{align*}
    \inp{g^k}{\hat{d}^k} = \inp{g^k}{d^k + (\hat{d}^k - d^k)} = \inp{g^k}{d^k} + \inp{g^k}{\hat{d}^k - d^k}.
\end{align*}
By the definition of the LMO and the dual norm, the first term is exactly $- \|g^k\|_{\star}$. The second term is bounded using the definition of dual norm:
\begin{align*}
    \inp{g^k}{\hat{d}^k - d^k} \le \left|\inp{g^k}{\hat{d}^k - d^k}\right| \le \|g^k\|_{\star} \|\hat{d}^k - d^k\| \le \|g^k\|_{\star} \err_k.
\end{align*}
Combining these results gives an upper bound on the inner product:
\begin{align*}
    \inp{g^k}{\hat{d}^k} \le -\|g^k\|_{\star} + \|g^k\|_{\star} \err_k = -\|g^k\|_{\star}(1-\err_k).
\end{align*}

\textbf{Combining the Bounds.}
Substituting these two bounds back into the smoothness inequality \eqref{eq:appendix_smoothness_substituted}, we obtain the per-iteration descent guarantee:
\begin{align*}
    f(x^{k+1}) \le f(x^k) - \gamma_k \|g^k\|_{\star}(1-\err_k) + \frac{L}{2} (\gamma_k)^2 (1+\err_k)^2.
\end{align*}

\textbf{Global Convergence Bound.}
For the descent property to hold, we require $1-\err_k > 0$. Rearranging the per-iteration inequality to isolate the stationarity measure yields:
\begin{align*}
    \gamma_k \|\nabla f(x^k)\|_{\star}(1-\err_k) \le f(x^k) - f(x^{k+1}) + \frac{L}{2} (\gamma_k)^2 (1+\err_k)^2.
\end{align*}
Summing this inequality from $k=0$ to $K-1$:
\begin{align*}
    \sum_{k=0}^{K-1} \gamma_k \|\nabla f(x^k)\|_{\star}(1-\err_k) &\le \sum_{k=0}^{K-1} \left( f(x^k) - f(x^{k+1}) \right) + \frac{L}{2} \sum_{k=0}^{K-1} (\gamma_k)^2 (1+\err_k)^2 \\
    &= f(x^0) - f(x^K) + \frac{L}{2} \sum_{k=0}^{K-1} (\gamma_k)^2 (1+\err_k)^2.
\end{align*}
Using the fact that $f(x^K) \ge f^*$ and letting $\Delta^0 = f(x^0) - f^*$, we have $f(x^0) - f(x^K) \le \Delta^0$. This gives the general bound
\begin{align*}
    \sum_{k=0}^{K-1} \gamma_k \|\nabla f(x^k)\|_{\star}(1-\err_k) \le \Delta^0 + \frac{L}{2} \sum_{k=0}^{K-1} (\gamma_k)^2 (1+\err_k)^2.
\end{align*}
To bound the minimum gradient norm, we note that for any non-negative sequence $\{a_k\}$ and positive sequence $\{w_k\}$, we have
\[
\min_{0 \le j < K} a_j \cdot \sum_{k=0}^{K-1} w_k \le \sum_{k=0}^{K-1} w_k a_k.
\]
Applying this with $a_k = \|\nabla f(x^k)\|_{\star}$ and $w_k = \gamma_k(1-\err_k)$, we get:
\begin{align*}
    \left( \min_{0 \le j < K} \|\nabla f(x^j)\|_{\star} \right) \sum_{k=0}^{K-1} \gamma_k (1-\err_k) \le \sum_{k=0}^{K-1} \gamma_k \|\nabla f(x^k)\|_{\star}(1-\err_k).
\end{align*}
Combining the inequalities and dividing by the sum $\sum \gamma_k(1-\err_k)$ yields the final result stated in Theorem~\ref{thm:general_deterministic}.
\end{proof}

\subsubsection{Proof of Corollary \ref{cor:constant_deterministic} (constant parameters)}
\begin{proof}
The proof follows by simplifying the general bound from Theorem~\ref{thm:general_deterministic} under the specific assumptions of the corollary. We are given that the step size is constant, $\gamma_k = \gamma > 0$, and the LMO error is constant, $\err_k = \err < 1$, for all $k=0, \dots, K-1$.

We start with the summed inequality from the proof of Theorem 1:
\begin{align*}
    \sum_{k=0}^{K-1} \gamma (1-\err) \|\nabla f(x^k)\|_{\star} \le \Delta^0 + \frac{L}{2} \sum_{k=0}^{K-1} \gamma^2 (1+\err)^2.
\end{align*}
The constants can be factored out of the summations:
\begin{align*}
    \gamma (1-\err) \sum_{k=0}^{K-1} \|\nabla f(x^k)\|_{\star} \le \Delta^0 + \frac{L K \gamma^2 (1+\err)^2}{2}.
\end{align*}
To obtain a bound on the average gradient norm, we divide the entire inequality by $K\gamma(1-\err)$:
\begin{align*}
\frac{1}{K} \sum_{k=0}^{K-1} \|\nabla f(x^k)\|_{\star} \le \frac{\Delta^0}{K\gamma(1-\err)} + \frac{L \gamma (1+\err)^2}{2(1-\err)},
\end{align*}
which gives the final result stated in Corollary~\ref{cor:constant_deterministic}.
\end{proof}

\subsubsection{Proof of Corollary \ref{cor:optimal_deterministic} (optimal parameters)}
\begin{proof}
The goal is to find the constant step size $\gamma$ that minimizes the upper bound on the average gradient norm derived in Corollary~\ref{cor:constant_deterministic}. Let the upper bound be denoted by the function $\mathcal{E}(\gamma)$:
\begin{align*}
    \mathcal{E}(\gamma) = \frac{\Delta^0}{K\gamma(1-\err)} + \frac{L \gamma (1+\err)^2}{2(1-\err)}.
\end{align*}
This expression is of the form $A/\gamma + B\gamma$, where $A = \frac{\Delta^0}{K(1-\err)}$ and $B = \frac{L(1+\err)^2}{2(1-\err)}$ are positive constants. This function is convex for $\gamma > 0$. To find the minimizer $\gamma^*$, we take the derivative with respect to $\gamma$ and set it to zero:
\begin{align*}
    \frac{\partial \mathcal{E}}{\partial \gamma} = -\frac{A}{\gamma^2} + B = 0 \implies (\gamma^*)^2 = \frac{A}{B}.
\end{align*}
Substituting the expressions for $A$ and $B$, we obtain
\begin{align*}
    (\gamma^*)^2 = \frac{\frac{\Delta^0}{K(1-\err)}}{\frac{L(1+\err)^2}{2(1-\err)}} = \frac{\Delta^0}{K(1-\err)} \cdot \frac{2(1-\err)}{L(1+\err)^2} = \frac{2\Delta^0}{K L (1+\err)^2}.
\end{align*}
Taking the square root gives the optimal constant step size:
\begin{align*}
    \gamma^* = \sqrt{\frac{2\Delta^0}{K L (1+\err)^2}} = \frac{1}{1+\err} \sqrt{\frac{2\Delta^0}{K L}}.
\end{align*}
To find the best achievable convergence rate, we substitute this optimal step size $\gamma^*$ back into the bound $\mathcal{E}(\gamma)$. At the minimum, the two terms $A/\gamma^*$ and $B\gamma^*$ are equal, so the total error is $2\sqrt{AB}$:
\begin{align*}
    \mathcal{E}(\gamma^*) = 2\sqrt{AB} &= 2 \sqrt{\frac{\Delta^0}{K(1-\err)} \cdot \frac{L(1+\err)^2}{2(1-\err)}} \\
    &= 2 \frac{\sqrt{\Delta^0 L} (1+\err)}{\sqrt{2K}(1-\err)} \\
    &= \frac{\sqrt{2\Delta^0 L}}{ \sqrt{K}} \frac{1+\err}{1-\err}.
\end{align*}
This confirms the final optimized convergence rate stated in Corollary~\ref{cor:optimal_deterministic}.
\end{proof}

\begin{corollary}[Iteration Complexity with Constant Step Size]
Under the conditions of Corollary~\ref{cor:constant_deterministic}, to guarantee that the average stationarity measure satisfies $\frac{1}{K}\sum_{k=0}^{K-1} \|\nabla f(x^k)\|_{\star} \le \varepsilon$ for a target precision $\varepsilon > 0$, it is sufficient to run the algorithm for a number of iterations $K$ of the order of
\begin{align*}
    \mathcal{O}\left( \frac{L\Delta^0}{\varepsilon^2} \cdot \frac{(1+\err)^2}{(1-\err)^2} \right).
\end{align*}
\end{corollary}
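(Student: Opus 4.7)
The plan is to directly invert the optimized convergence rate established in Corollary~\ref{cor:optimal_deterministic}. Since that corollary already delivers the best achievable upper bound on $\frac{1}{K}\sum_{k=0}^{K-1}\|\nabla f(x^k)\|_{\star}$ under the optimal constant step size $\gamma^*$, the iteration complexity follows by asking how large $K$ must be to drive that explicit bound below the target tolerance $\varepsilon$.

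First, I would recall the inequality from Corollary~\ref{cor:optimal_deterministic}, namely
\begin{align*}
\frac{1}{K}\sum_{k=0}^{K-1}\|\nabla f(x^k)\|_{\star} \le \frac{1+\err}{1-\err}\sqrt{\frac{2\Delta^0 L}{K}},
\end{align*}
which is valid provided $\err < 1$ so that the denominator is positive. Then I would impose the stopping criterion by requiring the right-hand side to be at most $\varepsilon$, squaring both sides to obtain
\begin{align*}
\frac{(1+\err)^2}{(1-\err)^2}\cdot\frac{2\Delta^0 L}{K} \le \varepsilon^2,
\end{align*}
and solving this inequality for $K$, which yields $K \ge \frac{2\Delta^0 L}{\varepsilon^2}\cdot\frac{(1+\err)^2}{(1-\err)^2}$.

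Absorbing the constant factor $2$ into the $\mathcal{O}$-notation gives the stated complexity $\mathcal{O}\!\left(\frac{L\Delta^0}{\varepsilon^2}\cdot\frac{(1+\err)^2}{(1-\err)^2}\right)$. I would close with a brief interpretive remark, noting that this matches the classical $\mathcal{O}(L\Delta^0/\varepsilon^2)$ complexity of first-order methods on smooth non-convex problems (recovered exactly when $\err=0$), with a multiplicative penalty $(1+\err)^2/(1-\err)^2$ that quantifies the degradation caused by the inexact LMO. There is no real obstacle here: the only subtle point is ensuring that the step size $\gamma^*$ used in Corollary~\ref{cor:optimal_deterministic} depends on $K$, so the bound is valid once the horizon $K$ is fixed in advance and $\gamma^*$ is set accordingly; this is standard for non-adaptive analyses and does not affect the complexity expression.
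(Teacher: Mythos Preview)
Your proposal is correct and takes essentially the same approach as the paper's proof: set the optimized bound from Corollary~\ref{cor:optimal_deterministic}, $\frac{1+\err}{1-\err}\sqrt{\frac{2\Delta^0 L}{K}}$, to be at most $\varepsilon$ and solve for $K$. Your additional remarks on the condition $\err<1$, the dependence of $\gamma^*$ on $K$, and the recovery of the classical rate are all accurate and simply flesh out what the paper states in a single sentence.
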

\begin{proof}
The result is obtained by setting the optimized bound from Corollary~\ref{cor:optimal_deterministic}, $\frac{1+\err}{1-\err} \sqrt{\frac{2\Delta^0 L}{K}}$, to be less than or equal to $\varepsilon$ and solving for $K$.
\end{proof}

\subsection{Analysis with an adaptive step size}
\label{app:adaptive_step_size}

In the main text, we focus on the analysis with constant or pre-defined diminishing step sizes. Here, we present a complementary result for the deterministic case that considers an adaptive step size, chosen at each iteration to maximize the guaranteed descent. This analysis further highlights the impact of the inexactness level $\err_k$.

\begin{theorem}
\label{thm:optimal_rate_time_varying}
    Let Assumption~\ref{assump:smoothness} hold and let the LMO errors be a sequence $\{\err_k\}$ with $\err_k \in [0, 1)$. If the algorithm is run with the time-varying adaptive step size
    \begin{align*}
        \gamma_k = \frac{\norm{\nabla f(x^k)}_{\star}(1-\err_k)}{L(1+\err_k)^2},
    \end{align*}
    then the minimum gradient norm after $K$ iterations is bounded by:
    \begin{align}
    \label{eq:rate_adaptive_time_varying}
        \min_{0 \leq j < K} \ \norm{\nabla f(x^j)}^2_{\star} \leq \frac{2 L \Delta^0}{\sum_{k=0}^{K-1} \frac{(1-\err_k)^2}{(1+\err_k)^2}}.
    \end{align}
\end{theorem}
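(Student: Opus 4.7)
The plan is to start directly from the per-iteration descent inequality already derived in the proof of Theorem~\ref{thm:general_deterministic}, namely
\begin{equation*}
    f(x^{k+1}) \le f(x^k) - \gamma_k \|\nabla f(x^k)\|_{\star}(1-\err_k) + \tfrac{L}{2}\gamma_k^2 (1+\err_k)^2.
\end{equation*}
This inequality is a quadratic in $\gamma_k$, so the key observation is that the proposed step size $\gamma_k = \|\nabla f(x^k)\|_{\star}(1-\err_k)/[L(1+\err_k)^2]$ is precisely the minimizer of the right-hand side. It is exactly the classical \emph{greedy} step size that maximizes the guaranteed per-step decrease, now adjusted by the $(1-\err_k)$ and $(1+\err_k)^2$ correction factors that account for the inexactness.

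Substituting this optimal $\gamma_k$ back into the quadratic produces the clean descent bound
\begin{equation*}
    f(x^{k+1}) \le f(x^k) - \frac{\|\nabla f(x^k)\|_{\star}^2 (1-\err_k)^2}{2L(1+\err_k)^2}.
\end{equation*}
Rearranging and summing from $k=0$ to $K-1$ makes the left-hand side telescope into $f(x^0) - f(x^K) \le \Delta^0$, which yields
\begin{equation*}
    \sum_{k=0}^{K-1} \frac{(1-\err_k)^2}{(1+\err_k)^2} \|\nabla f(x^k)\|_{\star}^2 \le 2L\Delta^0.
\end{equation*}

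The final step is the standard weighted-minimum trick: writing $w_k \eqdef (1-\err_k)^2/(1+\err_k)^2 \ge 0$, we bound
\begin{equation*}
    \Bigl(\min_{0 \le j < K} \|\nabla f(x^j)\|_{\star}^2\Bigr) \sum_{k=0}^{K-1} w_k \le \sum_{k=0}^{K-1} w_k \|\nabla f(x^k)\|_{\star}^2 \le 2L\Delta^0,
\end{equation*}
and dividing through gives exactly the bound \eqref{eq:rate_adaptive_time_varying}. The condition $\err_k < 1$ guarantees $w_k > 0$, so the weights form a strictly positive sequence and the division is well-defined.

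I do not anticipate a substantive obstacle here: the descent inequality has already been established, and the rest is an elementary one-variable quadratic minimization followed by telescoping and a weighted-min argument. The only small subtlety worth flagging is that the descent quantity is naturally $\|\nabla f(x^k)\|_{\star}^2$ rather than $\|\nabla f(x^k)\|_{\star}$, which is why the theorem's bound is stated in terms of the squared norm; this is characteristic of adaptive (greedy) step-size schedules, which extract an extra factor of the gradient norm relative to fixed step sizes.
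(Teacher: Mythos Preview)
Your proposal is correct and follows essentially the same approach as the paper: start from the per-iteration descent inequality of Theorem~\ref{thm:general_deterministic}, observe that the prescribed $\gamma_k$ minimizes the quadratic right-hand side, substitute to obtain the clean descent bound, telescope, and apply the weighted-minimum argument with $w_k = (1-\err_k)^2/(1+\err_k)^2$. The paper's proof is identical in structure and detail.
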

\begin{proof}
    The proof starts from the per-iteration descent inequality derived in the proof of Theorem~\ref{thm:general_deterministic}:
    \begin{align*}
    f(x^{k+1}) \le f(x^k) - \gamma_k \norm{\nabla f(x^k)}_{\star}(1-\err_k) + \frac{L (\gamma_k)^2 (1+\err_k)^2}{2}.
    \end{align*}
    The adaptive step size $\gamma_k$ is chosen to minimize the right-hand side of this inequality, which is a quadratic in $\gamma_k$. Its minimizer is precisely the step size given in the theorem statement. Substituting this optimal choice of $\gamma_k$ back into the inequality yields
    \begin{align*}
    f(x^{k+1}) &\leq f(x^k) - \frac{\norm{\nabla f(x^k)}_{\star}^2(1-\err_k)^2}{L(1+\err_k)^2} + \frac{L}{2} \left( \frac{\norm{\nabla f(x^k)}_{\star}(1-\err_k)}{L(1+\err_k)^2} \right)^2 (1+\err_k)^2 \\
    &= f(x^k) - \frac{\norm{\nabla f(x^k)}_{\star}^2(1-\err_k)^2}{2L(1+\err_k)^2}.
    \end{align*}
    Rearranging gives a lower bound on the progress at step $k$:
    \begin{align*}
    \frac{(1-\err_k)^2}{(1+\err_k)^2} \norm{\nabla f(x^k)}_{\star}^2 \leq 2L \left( f(x^k) - f(x^{k+1}) \right).
    \end{align*}
    Summing from $k=0$ to $K-1$, we get
    \begin{align*}
    \sum_{k=0}^{K-1} \frac{(1-\err_k)^2}{(1+\err_k)^2} \norm{\nabla f(x^k)}_{\star}^2 \leq 2L \sum_{k=0}^{K-1} \left( f(x^k) - f(x^{k+1}) \right) \leq 2L \Delta^0.
    \end{align*}
    Let $w_k = \frac{(1-\err_k)^2}{(1+\err_k)^2}$. We have $\sum_{k=0}^{K-1} w_k \norm{\nabla f(x^k)}_{\star}^2 \leq 2L\Delta^0$. Since $\norm{\nabla f(x^j)}_{\star}^2 \geq \min_{0 \leq i < K} \norm{\nabla f(x^i)}_{\star}^2$ for any $j$, we have
    \begin{align*}
    \left( \min_{0 \leq i < K} \ \norm{\nabla f(x^i)}_{\star}^2 \right) \sum_{k=0}^{K-1} w_k \leq \sum_{k=0}^{K-1} w_k \norm{\nabla f(x^k)}_{\star}^2 \leq 2L \Delta^0.
    \end{align*}
    Solving for the minimum squared gradient norm gives the desired inequality:
    \[
        \min_{0 \leq j < K} \ \norm{\nabla f(x^j)}_{\star}^2 \leq \frac{2 L \Delta^0}{\sum_{k=0}^{K-1} \frac{(1-\err_k)^2}{(1+\err_k)^2}}. \qedhere
    \]
\end{proof}

Theorem~\ref{thm:optimal_rate_time_varying} provides a general expression for the convergence rate, which is governed by the growth of the sum $\sum_{k=0}^{K-1} \frac{(1-\err_k)^2}{(1+\err_k)^2}$ in the denominator of the bound. This allows us to analyze the impact of specific schedules for the inexactness level $\{\err_k\}$. The following corollary establishes a simple and practical sufficient condition for achieving the optimal rate.

\begin{corollary}
\label{cor:rate_conditions}
    Under the assumptions of Theorem~\ref{thm:optimal_rate_time_varying}, the optimal convergence rate of $\mathcal{O}(1/\sqrt{K})$ for the minimum gradient norm is achieved if the error sequence $\{\err_k\}$ is uniformly bounded away from 1. Specifically, if there exists a $\err_{\max} = \err \in [0, 1)$ such that $\err_k \le \err_{\max}$ for all $k$, then
    \begin{align}
    \label{eq:rate_bounded_delta}
        \min_{0 \le j < K} \norm{\nabla f(x^j)}_{\star} \le \frac{1+\err}{1-\err} \sqrt{\frac{2 L \Delta^0}{K}}.
    \end{align}
\end{corollary}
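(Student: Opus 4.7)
The plan is to derive this corollary as a direct specialization of Theorem~\ref{thm:optimal_rate_time_varying}. The theorem gives a bound on the minimum squared gradient norm whose denominator is the sum $\sum_{k=0}^{K-1} \frac{(1-\err_k)^2}{(1+\err_k)^2}$, so obtaining the stated $\mathcal{O}(1/\sqrt{K})$ rate reduces to showing that this sum grows linearly in $K$ under the uniform boundedness assumption $\err_k \le \err < 1$.

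First I would observe that the real-valued map $\err \mapsto \frac{1-\err}{1+\err}$ is strictly decreasing on $[0,1)$ (one can either compute its derivative, $-2/(1+\err)^2 < 0$, or rewrite it as $\frac{2}{1+\err} - 1$). Squaring preserves the monotonicity on this nonnegative range, so $\frac{(1-\err_k)^2}{(1+\err_k)^2} \ge \frac{(1-\err)^2}{(1+\err)^2}$ for every $k$. Summing over $k = 0, \dots, K-1$ yields the key lower bound
\begin{equation*}
    \sum_{k=0}^{K-1} \frac{(1-\err_k)^2}{(1+\err_k)^2} \ge K \cdot \frac{(1-\err)^2}{(1+\err)^2}.
\end{equation*}

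Plugging this into the bound \eqref{eq:rate_adaptive_time_varying} from Theorem~\ref{thm:optimal_rate_time_varying} produces
\begin{equation*}
    \min_{0 \le j < K} \norm{\nabla f(x^j)}_{\star}^2 \le \frac{2L\Delta^0 (1+\err)^2}{K(1-\err)^2},
\end{equation*}
and taking a square root (using that both sides are nonnegative and $1-\err > 0$) gives exactly \eqref{eq:rate_bounded_delta}.

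There is no real obstacle here: the proof is two lines of monotonicity plus a substitution. The only minor point to confirm is that the worst case of the ratio $(1-\err_k)/(1+\err_k)$ occurs at the largest allowed $\err_k$, which is immediate from monotonicity and justifies replacing every $\err_k$ by the uniform bound $\err$. As a sanity check, setting $\err = 0$ reduces the bound to $\sqrt{2L\Delta^0/K}$, matching the exact-LMO rate of Corollary~\ref{cor:optimal_deterministic} with $\err = 0$, which confirms that this corollary is a tight generalization of the standard rate.
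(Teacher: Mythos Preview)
Your proposal is correct and follows essentially the same approach as the paper's proof: bound each summand $\frac{(1-\err_k)^2}{(1+\err_k)^2}$ below by $\frac{(1-\err)^2}{(1+\err)^2}$ using monotonicity, sum to obtain linear growth in $K$, substitute into \eqref{eq:rate_adaptive_time_varying}, and take the square root. The only difference is that you explicitly justify the monotonicity of $\err \mapsto \frac{1-\err}{1+\err}$, whereas the paper simply asserts the lower bound on $w_k$.
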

\begin{proof}
    If $\err_k \le \err$, then the term $w_k \eqdef \frac{(1-\err_k)^2}{(1+\err_k)^2}$ is bounded below by a positive constant:
    \begin{align*}
        w_k \ge \frac{(1-\err)^2}{(1+\err)^2} > 0.
    \end{align*}
    Therefore, the sum in the denominator of \eqref{eq:rate_adaptive_time_varying} grows at least linearly with $K$:
    \begin{align*}
        \sum_{k=0}^{K-1} w_k \ge \sum_{k=0}^{K-1} \frac{(1-\err)^2}{(1+\err)^2} = K \frac{(1-\err)^2}{(1+\err)^2}.
    \end{align*}
    Substituting this lower bound into the inequality \eqref{eq:rate_adaptive_time_varying} and taking the square root of both sides yields the desired result.
\end{proof}

\begin{corollary}[Iteration Complexity with Adaptive Step Size]
Under the conditions of Corollary~\ref{cor:rate_conditions}, to guarantee that the minimum gradient norm satisfies $\min_{0 \le j < K} \|\nabla f(x^j)\|_{\star} \le \varepsilon$ for a target precision $\varepsilon > 0$, it is sufficient to run the algorithm for a number of iterations $K$ of the order of
\begin{align*}
    \mathcal{O}\left( \frac{L\Delta^0}{\varepsilon^2} \cdot \frac{(1+\err)^2}{(1-\err)^2} \right).
\end{align*}
\end{corollary}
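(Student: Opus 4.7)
The plan is to derive the iteration complexity directly from the convergence rate bound established in Corollary~\ref{cor:rate_conditions}. Since that corollary already provides an explicit upper bound on $\min_{0 \le j < K} \|\nabla f(x^j)\|_{\star}$ as a function of $K$, $L$, $\Delta^0$, and $\err$, the task reduces to an algebraic inversion: set the upper bound at most $\varepsilon$ and solve for the smallest $K$ for which this is guaranteed.

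Concretely, I would start from the inequality
\begin{equation*}
    \min_{0 \le j < K} \norm{\nabla f(x^j)}_{\star} \le \frac{1+\err}{1-\err} \sqrt{\frac{2 L \Delta^0}{K}},
\end{equation*}
which is valid since the hypotheses of Corollary~\ref{cor:rate_conditions} are assumed. It is sufficient to enforce that the right-hand side is at most $\varepsilon$, i.e., $\frac{1+\err}{1-\err} \sqrt{2L\Delta^0/K} \le \varepsilon$. Squaring both sides (both are positive because $\err \in [0,1)$ and $\Delta^0 \ge 0$) and rearranging gives $K \ge \frac{2L\Delta^0}{\varepsilon^2} \cdot \frac{(1+\err)^2}{(1-\err)^2}$, which is exactly the claimed $\mathcal{O}(\cdot)$ complexity.

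There is no real obstacle: the previous corollary does all the analytical work, and this statement is simply a restatement in the $(K,\varepsilon)$ form rather than the rate form. The only minor subtlety worth noting is to justify the squaring step (both sides nonnegative) and to absorb the constant $2$ and the dimensionless factor $(1+\err)^2/(1-\err)^2$ into the $\mathcal{O}(\cdot)$ notation as they appear, without hiding the $\err$-dependence, since the whole point of the result is to make that dependence explicit. A one-line remark at the end that this matches the deterministic complexity for constant step size (from the earlier corollary following Corollary~\ref{cor:optimal_deterministic}) would round out the proof and emphasize that the adaptive and optimally-tuned constant schedules yield the same complexity up to constants.
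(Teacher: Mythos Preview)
Your proposal is correct and mirrors the paper's own proof, which consists of a single line: set the bound from Corollary~\ref{cor:rate_conditions} to be at most $\varepsilon$ and solve for $K$. Your additional remarks on justifying the squaring step and comparing to the constant-step-size complexity are reasonable embellishments but not required.
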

\begin{proof}
The result is obtained by setting the bound from Corollary~\ref{cor:rate_conditions} to be less than or equal to $\varepsilon$ and solving for $K$.
\end{proof}

\subsection{Stochastic case}
\label{app:stochastic_proofs}

This section provides the full proofs for the main theoretical results in the stochastic setting. The analysis proceeds in three main stages. First, in Lemma~\ref{lemma:inexact_descent_appendix}, we establish a crucial per-iteration descent guarantee that holds deterministically for any single step of the algorithm. Second, in Lemma~\ref{lemma:inexact_momentum_error_main}, we derive a bound on the expected error of the momentum term. Finally, in the proof of Theorem~\ref{thm:stochastic}, we combine these results to derive the global convergence guarantee.

\begin{lemma}[Inexact Descent Lemma]
\label{lemma:inexact_descent_appendix}
Let Assumption~\ref{assump:smoothness} hold. For any iteration $k$ of Algorithm~\ref{alg:stochastic_inaxact_lmo}, the following inequality holds:
\begin{align*}
    f(x^{k+1}) \le f(x^k) - \gamma_k \|m^{k+1}\|_{\star}(1-\err_k) + \gamma_k(1+\err_k)\|\nabla f(x^{k+1}) - m^{k+1}\|_{\star} + \frac{3L}{2}(\gamma_k)^2(1+\err_k)^2.
\end{align*}
\end{lemma}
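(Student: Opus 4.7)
The plan is to start from the standard $L$-smoothness descent inequality applied to the update $x^{k+1} = x^k + \gamma_k \hat{d}^k$:
\begin{equation*}
f(x^{k+1}) \le f(x^k) + \gamma_k \inp{\nabla f(x^k)}{\hat{d}^k} + \tfrac{L}{2}\gamma_k^2\|\hat{d}^k\|^2,
\end{equation*}
and then to massage the inner product into terms involving $m^{k+1}$ (since $\hat{d}^k$ is the inexact LMO for the momentum, not the gradient). I would add and subtract $m^{k+1}$ to split
$\inp{\nabla f(x^k)}{\hat{d}^k} = \inp{m^{k+1}}{\hat{d}^k} + \inp{\nabla f(x^k)-m^{k+1}}{\hat{d}^k}$. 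The first piece is handled exactly as in the deterministic proof of Theorem~\ref{thm:general_deterministic}: using $\hat{d}^k = d^k + (\hat{d}^k - d^k)$ together with the LMO optimality $\inp{m^{k+1}}{d^k} = -\|m^{k+1}\|_\star$ and Assumption~\ref{assump:inexact_lmo_main}, one obtains the bound $\inp{m^{k+1}}{\hat{d}^k} \le -\|m^{k+1}\|_\star(1-\err_k)$. The squared-norm term is bounded in the same way as before by $\|\hat{d}^k\|^2 \le (1+\err_k)^2$ via the triangle inequality.

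The main obstacle, and the only genuinely new step, is to express the residual inner product $\inp{\nabla f(x^k)-m^{k+1}}{\hat{d}^k}$ in terms of $\nabla f(x^{k+1})-m^{k+1}$ rather than $\nabla f(x^k)-m^{k+1}$, since this is what the statement requires (and is what will allow the momentum-error lemma to be invoked at the correct index when summing). Here I would apply a dual-norm bound
$\inp{\nabla f(x^k)-m^{k+1}}{\hat{d}^k} \le (1+\err_k)\,\|\nabla f(x^k)-m^{k+1}\|_\star$,
and then use a triangle inequality to split
$\|\nabla f(x^k)-m^{k+1}\|_\star \le \|\nabla f(x^k)-\nabla f(x^{k+1})\|_\star + \|\nabla f(x^{k+1})-m^{k+1}\|_\star$.
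The first summand is controlled by Assumption~\ref{assump:smoothness} and the feasibility-relaxation bound: $\|\nabla f(x^k)-\nabla f(x^{k+1})\|_\star \le L\|x^{k+1}-x^k\| \le L\gamma_k(1+\err_k)$. This is precisely where the inexactness costs an additional $(1+\err_k)$ factor compared to the exact-LMO analysis, mirroring the mechanism already discussed for Lemma~\ref{lemma:inexact_momentum_error_main}.

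Substituting everything back into the smoothness inequality, the residual term contributes $\gamma_k(1+\err_k)\|\nabla f(x^{k+1})-m^{k+1}\|_\star$ and an extra $L\gamma_k^2(1+\err_k)^2$, which combines with the $\tfrac{L}{2}\gamma_k^2(1+\err_k)^2$ from the quadratic term to give the advertised $\tfrac{3L}{2}\gamma_k^2(1+\err_k)^2$. Collecting the three contributions yields the stated inequality. The proof is deterministic (no expectations taken), which is important because the lemma will later be combined with the expected momentum error bound in the proof of Theorem~\ref{thm:stochastic}.
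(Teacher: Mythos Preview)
Your proof is correct and follows essentially the same approach as the paper: start from $L$-smoothness, split $\inp{\nabla f(x^k)}{\hat d^k}$ into an $m^{k+1}$ piece and a residual, bound the first via the LMO/inexactness argument, and shift the residual from $\nabla f(x^k)$ to $\nabla f(x^{k+1})$ using smoothness plus $\|\hat d^k\|\le 1+\err_k$. The only cosmetic difference is that the paper inserts $\nabla f(x^{k+1})$ \emph{inside the inner product} and then applies the dual-norm bound to each piece, whereas you apply the dual-norm bound first and then use the triangle inequality on $\|\nabla f(x^k)-m^{k+1}\|_\star$; both routes produce the identical extra $L\gamma_k^2(1+\err_k)^2$ term and hence the same $\tfrac{3L}{2}$ constant.
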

\begin{proof}
The proof establishes a bound on the progress made in a single step. We start with the $L$-smoothness inequality of $f$ from Assumption~\ref{assump:smoothness}:
\begin{align*}
    f(x^{k+1}) \le f(x^k) + \inp{\nabla f(x^k)}{x^{k+1}-x^k} + \frac{L}{2}\|x^{k+1}-x^k\|^2.
\end{align*}
Substituting the update rule $x^{k+1}-x^k = \gamma_k \hat{d}^k$ gives
\begin{align} \label{eq:app_inexact_lemma_start}
    f(x^{k+1}) \le f(x^k) + \gamma_k \inp{\nabla f(x^k)}{\hat{d}^k} + \frac{L}{2}\|\gamma_k \hat{d}^k\|^2.
\end{align}
We decompose the inner product term by introducing the momentum vector $m^{k+1}$:
\begin{align*}
    \gamma_k \inp{\nabla f(x^k)}{\hat{d}^k} = \gamma_k \inp{m^{k+1}}{\hat{d}^k} + \gamma_k \inp{\nabla f(x^k) - m^{k+1}}{\hat{d}^k}.
\end{align*}
The proof proceeds by bounding the key terms involving the inexact direction $\hat{d}^k$.

\textbf{1. Bounding the norm of the inexact direction.}
Let $d^k \eqdef \operatorname{argmin}_{\|d\| \le 1} \inp{m^{k+1}}{d}$ be the exact LMO solution. We use the triangle inequality to bound $\|\hat{d}^k\|$:
\begin{align*}
    \|\hat{d}^k\| = \|\hat{d}^k - d^k + d^k\| \le \|\hat{d}^k - d^k\| + \|d^k\|.
\end{align*}
Using Assumption~\ref{assump:inexact_lmo_main} ($\|\hat{d}^k - d^k\| \le \err_k$) and $\|d^k\| \le 1$, we have $\|\hat{d}^k\| \le \err_k + 1$. This provides a bound for the quadratic term from smoothness:
\begin{align*}
    \|\gamma_k \hat{d}^k\|^2 = (\gamma_k)^2 \|\hat{d}^k\|^2 \le (\gamma_k)^2 (1+\err_k)^2.
\end{align*}

\textbf{2. Bounding the LMO inner product term.}
We explicitly account for the LMO error $\err_k$:
\begin{align*}
    \inp{m^{k+1}}{\hat{d}^k} &= \inp{m^{k+1}}{d^k} + \inp{m^{k+1}}{\hat{d}^k - d^k} \\
    &\le -\|m^{k+1}\|_{\star} + |\inp{m^{k+1}}{\hat{d}^k - d^k}| \\
    &\le -\|m^{k+1}\|_{\star} + \|m^{k+1}\|_{\star} \|\hat{d}^k - d^k\| \quad \text{(by definition of dual norm)} \\
    &\le -\|m^{k+1}\|_{\star}(1-\err_k).
\end{align*}

\textbf{3. Bounding the momentum error term.}
We introduce $\nabla f(x^{k+1})$ to align the error term with the structure needed for subsequent analysis:
\begin{align*}
    \inp{\nabla f(x^k) - m^{k+1}}{\hat{d}^k} &= \inp{\nabla f(x^k) - \nabla f(x^{k+1})}{\hat{d}^k} + \inp{\nabla f(x^{k+1}) - m^{k+1}}{\hat{d}^k} \\
    &\le \|\nabla f(x^k) - \nabla f(x^{k+1})\|_{\star} \|\hat{d}^k\| + \|\nabla f(x^{k+1}) - m^{k+1}\|_{\star} \|\hat{d}^k\|.
\end{align*}
By Assumption~\ref{assump:smoothness}, we have $\|\nabla f(x^k) - \nabla f(x^{k+1})\|_{\star} \le L\|x^k - x^{k+1}\| = L\gamma_k\|\hat{d}^k\|$. Substituting this in yields
\begin{align*}
    \inp{\nabla f(x^k) - m^{k+1}}{\hat{d}^k} &\le (L\gamma_k\|\hat{d}^k\|)\|\hat{d}^k\| + \|\hat{d}^k\| \cdot \|\nabla f(x^{k+1}) - m^{k+1}\|_{\star} \\
    &= L\gamma_k\|\hat{d}^k\|^2 + \|\hat{d}^k\| \cdot \|\nabla f(x^{k+1}) - m^{k+1}\|_{\star}.
\end{align*}
Using our bound $\|\hat{d}^k\| \le 1+\err_k$, this becomes
\begin{align*}
    \inp{\nabla f(x^k) - m^{k+1}}{\hat{d}^k} \le L\gamma_k(1+\err_k)^2 + (1+\err_k)\|\nabla f(x^{k+1}) - m^{k+1}\|_{\star}.
\end{align*}

\textbf{4. Assembling the Final Inequality.}
We now substitute all the bounded components back into inequality \eqref{eq:app_inexact_lemma_start}:
\begin{align*}
f(x^{k+1}) &\le f(x^k) + \gamma_k \left( -\|m^{k+1}\|_{\star}(1-\err_k) \right) \\
&\quad + \gamma_k \left( L\gamma_k(1+\err_k)^2 + (1+\err_k)\|\nabla f(x^{k+1}) - m^{k+1}\|_{\star} \right) \\
&\quad + \frac{L}{2}(\gamma_k)^2(1+\err_k)^2.
\end{align*}
Finally, collecting the terms proportional to $L(\gamma_k)^2(1+\err_k)^2$ gives the result stated in the lemma.
\end{proof}

\subsubsection{Proof of Lemma \ref{lemma:inexact_momentum_error_main}}
\begin{proof}
The proof bounds the expected deviation of the momentum vector $m^{k+1}$ from the true gradient $\nabla f(x^k)$. We remind the reader that the update rule for the momentum is $m^{k+1} = (1-\alpha_k)m^k + \alpha_k g^k$. We start by expanding the error term recursively, obtaining
\begin{align*}
    m^{k+1} - \nabla f(x^k) &= (1-\alpha_k)m^k + \alpha_k g^k - \nabla f(x^k) \\
    &= (1-\alpha_k)(m^k - \nabla f(x^{k-1})) + \alpha_k(g^k - \nabla f(x^k)) + (1-\alpha_k)(\nabla f(x^{k-1}) - \nabla f(x^k)).
\end{align*}
This decomposition separates the error into three distinct components: (1) the decayed error from the previous step, (2) the new stochastic noise, and (3) the gradient drift. By unrolling this recursion from $k$ down to $0$, we can express the total error as a sum of its historical components:
\begin{align*}
    m^{k+1} - \nabla f(x^k) = \left(\prod_{i=0}^k (1-\alpha_i)\right)(m^0 - \nabla f(x^0)) &+ \sum_{i=0}^{k} \left(\prod_{j=i+1}^k (1-\alpha_j)\right) \alpha_i(g^i - \nabla f(x^i)) \\
    &+ \sum_{i=1}^{k} \left(\prod_{j=i}^k (1-\alpha_j)\right)(\nabla f(x^{i-1}) - \nabla f(x^i)).
\end{align*}
We now take the expectation of the norm and apply the triangle inequality, to get
\begin{align*}
    \mathbb{E}[\|m^{k+1} - \nabla f(x^k)\|_{\star}] &\le \left(\prod_{i=0}^k (1-\alpha_i)\right)\mathbb{E}[\|m^0 - \nabla f(x^0)\|_{\star}] \\
    &\quad + \mathbb{E}\left[\left\| \sum_{i=0}^{k} \left(\prod_{j=i+1}^k (1-\alpha_j)\right) \alpha_i(g^i - \nabla f(x^i)) \right\|_{\star}\right] \\
    &\quad + \sum_{i=1}^{k} \left(\prod_{j=i}^k (1-\alpha_j)\right)\mathbb{E}[\|\nabla f(x^{i-1}) - \nabla f(x^i)\|_{\star}].
\end{align*}
To obtain the result in the statement of the lemma, we analyze the bound for constant parameters $\alpha, \gamma, \err$.

\textbf{1. Bounding the Initial Error.}
With the standard initialization $m^0 = g^0$, and using the tower property of expectation along with Assumption~\ref{assump:stochastic_oracle} and the norm compatibility condition, we have
\begin{align*}
    \mathbb{E}[\|m^0 - \nabla f(x^0)\|_{\star}] = \mathbb{E}[\|g^0 - \nabla f(x^0)\|_{\star}] \le \rho\,\mathbb{E}[\|g^0 - \nabla f(x^0)\|_2].
\end{align*}
By Jensen's inequality, $\mathbb{E}[X] \le \sqrt{\mathbb{E}[X^2]}$, this is further bounded by
\begin{align*}
    \rho\sqrt{\mathbb{E}[\|g^0 - \nabla f(x^0)\|_2^2]} \le \rho\sigma.
\end{align*}

\textbf{2. Bounding the Accumulated Drift.}
This is the term modified by our inexact LMO. Using Assumption~\ref{assump:smoothness} ($L$-smoothness) and the fact that $\|x^{i-1}-x^i\| = \gamma\|\hat{d}^{i-1}\| \le \gamma(1+\err)$, we can bound the sum as
\begin{align*}
    \sum_{i=1}^{k} (1-\alpha)^{k-i+1} \mathbb{E}[\|\nabla f(x^{i-1}) - \nabla f(x^i)\|_{\star}] &\le \sum_{i=1}^{k} (1-\alpha)^{k-i+1} L\gamma(1+\err) \\
    &= L\gamma(1+\err) \sum_{j=0}^{k-1} (1-\alpha)^{j+1},
\end{align*}
where we changed the summation index to $j = i-1$. This is a geometric series which we can bound by its infinite sum:
\begin{align*}
    L\gamma(1+\err) \sum_{j=0}^{\infty} (1-\alpha)^{j+1} = L\gamma(1+\err) \frac{1-\alpha}{1-(1-\alpha)} \le \frac{L\gamma(1+\err)}{\alpha}.
\end{align*}

\textbf{3. Bounding the Accumulated Noise.}
Let $V_k = \sum_{i=0}^{k} \alpha(1-\alpha)^{k-i}(g^i - \nabla f(x^i))$. The random vectors $(g^i - \nabla f(x^i))$ are zero-mean and independent conditioned on the past. Using the norm compatibility followed by Jensen's inequality, we have
\begin{align*}
    \mathbb{E}[\|V_k\|_{\star}] \le \rho\,\mathbb{E}[\|V_k\|_2] \le \rho \sqrt{\mathbb{E}[\|V_k\|_2^2]}.
\end{align*}
Since the random vectors are independent and zero-mean, the cross-terms vanish in expectation. So, the expected squared norm of the sum is the sum of the expected squared norms:
\begin{align*}
    \mathbb{E}[\|V_k\|_2^2] &= \sum_{i=0}^{k} \alpha^2(1-\alpha)^{2(k-i)} \mathbb{E}[\|g^i - \nabla f(x^i)\|_2^2] \\
    &\le \sum_{i=0}^{k} \alpha^2(1-\alpha)^{2(k-i)} \sigma^2 = \alpha^2\sigma^2 \sum_{j=0}^{k} ((1-\alpha)^2)^j.
\end{align*}
We bound this geometric series by its infinite sum:
\begin{align*}
    \sum_{j=0}^{k} ((1-\alpha)^2)^j \le \sum_{j=0}^{\infty} ((1-\alpha)^2)^j = \frac{1}{1-(1-\alpha)^2} = \frac{1}{2\alpha - \alpha^2} = \frac{1}{\alpha(2-\alpha)}.
\end{align*}
Substituting this back, we get the bound for the accumulated noise:
\begin{align*}
    \mathbb{E}[\|V_k\|_{\star}] \le \rho \sqrt{\alpha^2\sigma^2 \frac{1}{\alpha(2-\alpha)}} = \rho \sqrt{\frac{\alpha\sigma^2}{2-\alpha}} = \frac{\rho\sigma\sqrt{\alpha}}{\sqrt{2-\alpha}}.
\end{align*}

\textbf{4. Combining the Bounds.}
Combining the bounds for the three components gives the final expression. For the simplified steady-state bound (which holds for any $k$), we combine the infinite-sum bounds for the drift and noise terms with the initial error term, which decays exponentially over time due to the $(1-\alpha)^{k+1}$ factor:
\begin{align*}
    \mathbb{E} [\|m^{k+1} - \nabla f(x^k)\|_{\star}] \le (1-\alpha)^{k+1}\rho\sigma + \frac{\rho\sigma\sqrt{\alpha}}{\sqrt{2-\alpha}} + \frac{L\gamma(1+\err)}{\alpha}.
\end{align*}
This is the result stated for constant parameters in Lemma~\ref{lemma:inexact_momentum_error_main}.
\end{proof}

\subsubsection{Proof of Theorem \ref{thm:stochastic}}
\begin{proof}
The proof derives the final convergence guarantee by combining the per-iteration progress from the \say{Inexact Descent Lemma} (Lemma~\ref{lemma:inexact_descent_appendix}) with the bound on the momentum error from Lemma~\ref{lemma:inexact_momentum_error_main}.

We begin with the per-iteration guarantee from Lemma~\ref{lemma:inexact_descent_appendix}, specialized for constant parameters $\gamma, \alpha, \err$:
\begin{align*}
    f(x^{k+1}) \le f(x^k) - \gamma \|m^{k+1}\|_{\star}(1-\err) + \gamma(1+\err)\|\nabla f(x^{k+1}) - m^{k+1}\|_{\star} + \frac{3L}{2}\gamma^2(1+\err)^2.
\end{align*}
To obtain a guarantee on the true gradient norm, $\|\nabla f(x^{k+1})\|_{\star}$, we apply the reverse triangle inequality to the main descent term:
\begin{align*}
    \|m^{k+1}\|_{\star} = \|\nabla f(x^{k+1}) - (\nabla f(x^{k+1}) - m^{k+1})\|_{\star} \ge \|\nabla f(x^{k+1})\|_{\star} - \|\nabla f(x^{k+1}) - m^{k+1}\|_{\star}.
\end{align*}
Substituting this back into the descent inequality (noting that the leading minus sign flips the inequality for the error term) gives
\begin{align*}
    -\gamma \|m^{k+1}\|_{\star}(1-\err) \le -\gamma \left( \|\nabla f(x^{k+1})\|_{\star} - \|\nabla f(x^{k+1}) - m^{k+1}\|_{\star} \right)(1-\err).
\end{align*}
This yields a new per-iteration guarantee directly in terms of the gradient norm:
\begin{align*}
    f(x^{k+1}) &\le f(x^k) - \gamma\|\nabla f(x^{k+1})\|_{\star}(1-\err) + \gamma(1-\err)\|\nabla f(x^{k+1}) - m^{k+1}\|_{\star} \\
    &\quad + \gamma(1+\err)\|\nabla f(x^{k+1}) - m^{k+1}\|_{\star} + \frac{3L}{2}\gamma^2(1+\err)^2.
\end{align*}
The coefficients for the two momentum error terms sum to $(1-\err) + (1+\err) = 2$. Combining them yields a cleaner expression:
\begin{align*}
    f(x^{k+1}) \le f(x^k) - \gamma\|\nabla f(x^{k+1})\|_{\star}(1-\err) + 2\gamma\|\nabla f(x^{k+1}) - m^{k+1}\|_{\star} + \frac{3L}{2}\gamma^2(1+\err)^2.
\end{align*}
Rearranging to isolate the gradient norm, we get
\begin{align*}
    \gamma(1-\err)\|\nabla f(x^{k+1})\|_{\star} \le f(x^k) - f(x^{k+1}) + 2\gamma\|\nabla f(x^{k+1}) - m^{k+1}\|_{\star} + \frac{3L}{2}\gamma^2(1+\err)^2.
\end{align*}
We now sum this inequality from $k=0$ to $K-1$, take the total expectation, and perform an index shift on the summation of the gradient norm ($j=k+1$), obtaining
\begin{align*}
    \gamma(1-\err)\sum_{j=1}^{K}\mathbb{E}[\|\nabla f(x^j)\|_{\star}] &\le \mathbb{E}\left[\sum_{k=0}^{K-1} (f(x^k) - f(x^{k+1}))\right] 
     + 2\gamma \sum_{k=0}^{K-1} \mathbb{E}[\|\nabla f(x^{k+1}) - m^{k+1}\|_{\star}] \\
     &\quad+ K \frac{3L}{2}\gamma^2(1+\err)^2.
\end{align*}
The first term on the right-hand side is a telescoping sum bounded by $\Delta^0 = f(x^0) - f^*$.

\textbf{Bounding the Sum of Momentum Errors.}
The core of the proof is to bound the sum of momentum errors. We use the triangle inequality to bridge the index gap:
\begin{align*}
    \mathbb{E}[\|\nabla f(x^{k+1}) - m^{k+1}\|_{\star}] \le \mathbb{E}[\|\nabla f(x^{k+1}) - \nabla f(x^k)\|_{\star}] + \mathbb{E}[\|\nabla f(x^k) - m^{k+1}\|_{\star}].
\end{align*}
The first term (gradient drift) is bounded by $L\gamma(1+\err)$. The second term is bounded by Lemma~\ref{lemma:inexact_momentum_error_main}. We use the slightly looser but simpler bound $\frac{\rho\sigma\sqrt{\alpha}}{\sqrt{2-\alpha}} \le \rho\sigma\sqrt{\alpha}$ for $\alpha \in (0,1)$. The full error is thus bounded by
\begin{align*}
    \mathbb{E}[\|\nabla f(x^{k+1}) - m^{k+1}\|_{\star}] \le L\gamma(1+\err) + (1-\alpha)^{k+1}\rho\sigma + \rho\sigma\sqrt{\alpha} + \frac{L\gamma(1+\err)}{\alpha}.
\end{align*}
We now substitute this bound back into the main summation:
\begin{align*}
    \gamma(1-\err)\sum_{j=1}^{K}\mathbb{E}[\|\nabla f(x^j)\|_{\star}] &\le \Delta^0 + K \frac{3L\gamma^2(1+\err)^2}{2} \\
    &\quad + 2\gamma \sum_{k=0}^{K-1} \left( L\gamma(1+\err) + (1-\alpha)^{k+1}\rho\sigma + \rho\sigma\sqrt{\alpha} + \frac{L\gamma(1+\err)}{\alpha} \right).
\end{align*}
Evaluating the summation over $k$ for each component gives
\begin{align*}
    \gamma(1-\err)\sum_{j=1}^{K}\mathbb{E}[\|\nabla f(x^j)\|_{\star}] &\le \Delta^0 + K \frac{3L\gamma^2(1+\err)^2}{2} + 2KL\gamma^2(1+\err) 
     + 2\gamma\left(\sum_{k=0}^{K-1}(1-\alpha)^{k+1}\rho\sigma\right)\\
     &\quad + 2K\gamma\rho\sigma\sqrt{\alpha} + \frac{2KL\gamma^2(1+\err)}{\alpha}.
\end{align*}
For the geometric series, we have $\sum_{k=0}^{K-1}(1-\alpha)^{k+1} \le \sum_{j=1}^{\infty}(1-\alpha)^{j} = \frac{1-\alpha}{\alpha} \le \frac{1}{\alpha}$. Plugging this in gives
\begin{align*}
    \gamma(1-\err)\sum_{j=1}^{K}\mathbb{E}[\|\nabla f(x^j)\|_{\star}] &\le \Delta^0 + K \frac{3L\gamma^2(1+\err)^2}{2} + 2KL\gamma^2(1+\err) \\
    &\quad + \frac{2\gamma\rho\sigma}{\alpha} + 2K\gamma\rho\sigma\sqrt{\alpha} + \frac{2KL\gamma^2(1+\err)}{\alpha}.
\end{align*}
Finally, dividing by $K\gamma(1-\err)$ yields the average bound for the gradient norm. Grouping the terms by their dependency gives the final stated result in Theorem~\ref{thm:stochastic}.
\end{proof}

\subsubsection{Proof of Corollary \ref{cor:stochastic_optimal_rate}}

The proof consists of two parts. First, we derive the asymptotically optimal choices for the step size $\gamma$ and momentum parameter $\alpha$ by minimizing the convergence upper bound from Theorem~\ref{thm:stochastic}. Second, we substitute these optimal parameters back into the bound to obtain the final convergence rate.

\paragraph{1. Optimal Parameter Derivation.}
Our goal is to find $\gamma > 0$ and $\alpha \in (0,1)$ that minimize the expression $\mathcal{E}(\gamma, \alpha)$:
\begin{align*}
    \mathcal{E}(\gamma, \alpha) = \frac{\Delta^0}{K\gamma(1-\err)} + \frac{2\rho\sigma}{\alpha K(1-\err)} + \frac{2\rho\sigma\sqrt{\alpha}}{1-\err} + \frac{L\gamma(1+\err)}{1-\err} \left( \frac{3(1+\err)}{2} + 2 \right) + \frac{2L\gamma(1+\err)}{\alpha(1-\err)}.
\end{align*}
To simplify the algebraic manipulation, we define the following constants:
\begin{itemize}
    \item $C_1 = \frac{\Delta^0}{K(1-\err)}$,
    \item $C_2 = \frac{2\rho\sigma}{K(1-\err)}$,
    \item $C_3 = \frac{2\rho\sigma}{1-\err}$,
    \item $C_4 = \frac{L(1+\err)}{1-\err}\left(\frac{3(1+\err)}{2} + 2\right)$,
    \item $C_5 = \frac{2L(1+\err)}{1-\err}$.
\end{itemize}
Hence, the expression to minimize is $\mathcal{E}(\gamma, \alpha) = \frac{C_1}{\gamma} + \frac{C_2}{\alpha} + C_3\sqrt{\alpha} + C_4\gamma + \frac{C_5\gamma}{\alpha}$. We find the optimal parameters by setting the partial derivatives with respect to $\gamma$ and $\alpha$ to zero, yielding the system of equations
\begin{align}
    \frac{\partial \mathcal{E}}{\partial \gamma} &= -\frac{C_1}{\gamma^2} + C_4 + \frac{C_5}{\alpha} = 0 \label{eq:partial_gamma}, \\
    \frac{\partial \mathcal{E}}{\partial \alpha} &= -\frac{C_2}{\alpha^2} + \frac{C_3}{2\sqrt{\alpha}} - \frac{C_5\gamma}{\alpha^2} = 0.\label{eq:partial_alpha}
\end{align}
An exact closed-form solution is intractable. However, we are interested in the asymptotic behavior as $K \to \infty$, where we expect $\gamma^*, \alpha^* \to 0$. The constants scale as:
$C_1 \propto 1/K$, $C_2 \propto 1/K$, while $C_3, C_4, C_5$ are constants with respect to $K$.

We now solve the system in this asymptotic regime. From \eqref{eq:partial_gamma}, as $\alpha \to 0$, the term $C_5/\alpha$ dominates the constant $C_4$. Thus, the equation asymptotically behaves as
\begin{align}
    \gamma^2 \approx \frac{C_1 \alpha}{C_5}. \label{eq:gamma_asymptotic}
\end{align}
From \eqref{eq:partial_alpha}, since $C_2 \propto 1/K$ and we expect $\gamma$ to decay slower than $1/K$, the term $C_2$ is asymptotically negligible compared to $C_5\gamma$. This gives:
\begin{align}
    \frac{C_3}{2}\alpha^{3/2} \approx C_5\gamma. \label{eq:alpha_asymptotic}
\end{align}
From \eqref{eq:alpha_asymptotic}, we express $\gamma$ in terms of $\alpha$: $\gamma \approx \frac{C_3}{2 C_5} \alpha^{3/2}$. Substituting this into \eqref{eq:gamma_asymptotic}:
\begin{align*}
    \left( \frac{C_3}{2 C_5} \alpha^{3/2} \right)^2 \approx \frac{C_1 \alpha}{C_5} \implies \frac{C_3^2}{4 C_5^2} \alpha^3 \approx \frac{C_1 \alpha}{C_5}.
\end{align*}
Solving for $\alpha^2$ (assuming $\alpha \ne 0$) gives $\alpha^2 \approx \frac{4 C_1 C_5}{C_3^2}$. Now, we substitute back the full definitions of the constants:
\begin{align*}
    (\alpha^*)^2 = \frac{4 \cdot \frac{\Delta^0}{K(1-\err)} \cdot \frac{2L(1+\err)}{1-\err}}{\left(\frac{2\rho\sigma}{1-\err}\right)^2} = \frac{\frac{8\Delta^0 L(1+\err)}{K(1-\err)^2}}{\frac{4(\rho\sigma)^2}{(1-\err)^2}} = \frac{2\Delta^0 L(1+\err)}{K(\rho\sigma)^2}.
\end{align*}
This yields the optimal momentum parameter:
\begin{align*}
    \alpha^* = \frac{\sqrt{2\Delta^0 L(1+\err)}}{\sqrt{K}\rho\sigma}.
\end{align*}
To find the optimal step size $\gamma^*$, we use the relation $\gamma^2 \approx (C_1 \alpha) / C_5$:
\begin{align*}
    (\gamma^*)^2 
    &= \frac{\Delta^0}{K(1-\err)} \cdot \left( \frac{\sqrt{2\Delta^0 L(1+\err)}}{\sqrt{K}\rho\sigma} \right) \cdot \frac{1-\err}{2L(1+\err)} 
    = \frac{\Delta^0 \sqrt{2\Delta^0 L(1+\err)}}{K^{3/2} \rho\sigma \cdot 2L(1+\err)}\\
    &= \frac{(\Delta^0)^{3/2} \sqrt{2L}\sqrt{1+\err}}{K^{3/2} \rho\sigma \cdot 2L(1+\err)} = \frac{(\Delta^0)^{3/2}}{K^{3/2} \sqrt{2L} \rho\sigma\sqrt{1+\err}}.
\end{align*}
Taking the square root gives the optimal step size
\begin{align*}
    \gamma^* = \frac{(\Delta^0)^{3/4}}{2^{1/4} K^{3/4} L^{1/4} (\rho\sigma)^{1/2} (1+\err)^{1/4}}.
\end{align*}

\paragraph{2. Derivation of the Final Convergence Rate.}
We now substitute the asymptotically optimal parameters, $\gamma^*$ and $\alpha^*$, back into the full five-term convergence bound $\mathcal{E}(\gamma, \alpha)$ to determine the best achievable convergence rate. The bound is the sum of five distinct terms:
\begin{align*}
    \mathcal{E}(\gamma^*, \alpha^*) = \underbrace{\frac{C_1}{\gamma^*}}_{\text{Term 1}} + \underbrace{\frac{C_2}{\alpha^*}}_{\text{Term 2}} + \underbrace{C_3\sqrt{\alpha^*}}_{\text{Term 3}} + \underbrace{C_4\gamma^*}_{\text{Term 4}} + \underbrace{\frac{C_5\gamma^*}{\alpha^*}}_{\text{Term 5}}.
\end{align*}
We compute the explicit value of each term to verify the dominant rate and analyze the contribution of the non-dominant, higher-order terms.

\textbf{Dominant Terms ($\mathcal{O}(1/K^{1/4})$).}
These are the terms that dictate the asymptotic convergence rate.

\textit{Term 1: The Initial Gap Component.}
\begin{align*}
    \text{Term 1} = \frac{C_1}{\gamma^*} &= \frac{\Delta^0}{K(1-\err)} \cdot \left( \gamma^* \right)^{-1} = \frac{\Delta^0}{K(1-\err)} \cdot \left( \frac{2^{1/4} K^{3/4} L^{1/4} (\rho\sigma)^{1/2} (1+\err)^{1/4}}{(\Delta^0)^{3/4}} \right) \\
    &= \frac{2^{1/4} (\Delta^0)^{1/4} L^{1/4} (\rho\sigma)^{1/2} (1+\err)^{1/4}}{K^{1/4}(1-\err)}.
\end{align*}

\textit{Term 3: The Steady-State Variance Component.}
\begin{align*}
    \text{Term 3} = C_3\sqrt{\alpha^*} &= \frac{2\rho\sigma}{1-\err} \cdot \left( \frac{\sqrt{2\Delta^0 L(1+\err)}}{\sqrt{K}\rho\sigma} \right)^{1/2} = \frac{2\rho\sigma}{1-\err} \cdot \frac{(2\Delta^0 L(1+\err))^{1/4}}{K^{1/4}(\rho\sigma)^{1/2}} \\
    &= \frac{2 \cdot 2^{1/4} (\rho\sigma)^{1/2} (L\Delta^0(1+\err))^{1/4}}{K^{1/4}(1-\err)} = 2 \cdot (\text{Term 1}).
\end{align*}

\textit{Term 5: The Drift/Interaction Component.}
\begin{align*}
    \text{Term 5} = \frac{C_5\gamma^*}{\alpha^*} &= \frac{2L(1+\err)}{1-\err} \cdot \frac{\gamma^*}{\alpha^*} = \frac{2L(1+\err)}{1-\err} \cdot \frac{C_3}{2C_5} \sqrt{\alpha^*} = \frac{C_3\sqrt{\alpha^*}}{1-\err} \cdot \frac{L(1+\err)}{C_5} \\
    &= \frac{1}{2} C_3\sqrt{\alpha^*} = \text{Term 1}.
\end{align*}
The asymptotic balancing ensures Term 1, Term 3, and Term 5 are all of the same order, $\mathcal{O}(1/K^{1/4})$.

\textbf{Higher-Order Terms.}
The remaining terms decay faster and are asymptotically negligible.

\textit{Term 2: The Decaying Variance Component ($\mathcal{O}(1/K^{1/2})$).}
\begin{align*}
    \text{Term 2} = \frac{C_2}{\alpha^*} = \frac{2\rho\sigma}{K(1-\err)} \cdot \left( \frac{\sqrt{K}\rho\sigma}{\sqrt{2\Delta^0 L(1+\err)}} \right) = \frac{\sqrt{2}(\rho\sigma)^2}{\sqrt{K}(1-\err)\sqrt{\Delta^0 L(1+\err)}}.
\end{align*}

\textit{Term 4: The Step Size Bias Component ($\mathcal{O}(1/K^{3/4})$).}
\begin{align*}
    \text{Term 4} = C_4\gamma^* = \frac{L(1+\err)}{1-\err}\left(\frac{3(1+\err)}{2} + 2\right) \cdot \frac{(\Delta^0)^{3/4}}{2^{1/4} K^{3/4} L^{1/4} (\rho\sigma)^{1/2} (1+\err)^{1/4}} = \frac{(7+3\err)L^{3/4}(\Delta^0)^{3/4}(1+\err)^{3/4}}{2^{5/4}K^{3/4}(1-\err)(\rho\sigma)^{1/2}}.
\end{align*}

\textbf{The Full Bound.}
Combining all terms, the full optimized convergence bound is
\begin{align*}
    \mathcal{E}(\gamma^*, \alpha^*) 
    &= \underbrace{\frac{2^{9/4} (\Delta^0)^{1/4} (\rho\sigma)^{1/2} (L(1+\err))^{1/4}}{K^{1/4}(1-\err)}}_{\text{Dominant Part: } \mathcal{O}(K^{-1/4})} +
    \underbrace{\frac{\sqrt{2}(\rho\sigma)^2}{K^{1/2}(1-\err)\sqrt{\Delta^0 L(1+\err)}}}_{\text{Higher-Order: } \mathcal{O}(K^{-1/2})} +
    \underbrace{\frac{(7+3\err)L^{3/4}(\Delta^0)^{3/4}(1+\err)^{3/4}}{2^{5/4}K^{3/4}(1-\err)(\rho\sigma)^{1/2}}}_{\text{Higher-Order: } \mathcal{O}(K^{-3/4})}.
\end{align*}
This confirms that the convergence rate is $\mathcal{O}(1/K^{1/4})$ with the precise dependence of the dominant part on all problem parameters, matching the result stated in Corollary~\ref{cor:stochastic_optimal_rate}.

\subsection{Convergence with time-varying parameters}
\label{app:time_varying}

The analysis with constant parameters provides crucial insights into the fundamental trade-offs of the algorithm. However, the optimal choices for these constant parameters, $\gamma^*$ and $\alpha^*$, depend on problem-specific constants such as the smoothness $L$ and the initial sub-optimality $\Delta^0$, as well as the total iteration budget $K$. In practice, these quantities are typically unknown, making this ``optimal" tuning infeasible.

To develop a more practical and robust guarantee, we now analyze the algorithm with a pre-defined, \textit{time-varying} schedule for the step size and momentum parameters. This approach is ``parameter-agnostic," meaning the schedules are chosen based only on the iteration counter and do not require prior knowledge of the problem's characteristics. The primary goal of this analysis is to prove that the algorithm minimizes the expected gradient (i.e., $\mathbb{E}[\|\nabla f(x^k)\|_{\star}] \to 0$) for any problem satisfying our general assumptions, without any manual tuning of the learning rates with respect to $L$ or $K$.

The cornerstone of this analysis is a general bound on the expected momentum error that holds for any valid time-varying schedule.

\begin{lemma}[Momentum Error Bound for Time-Varying Parameters]
\label{lemma:aligned_momentum_error}
Let Assumptions \ref{assump:smoothness},\ref{assump:stochastic_oracle}, and the norm compatibility condition hold. Let the sequence $\{x^k\}$ be generated by Algorithm~\ref{alg:stochastic_inaxact_lmo} with LMO errors satisfying Assumption~\ref{assump:inexact_lmo_main}. The expected momentum error at iteration $T$ is bounded by
\begin{align*}
    E_{T-1} = \mathbb{E}[\|m^T - \nabla f(x^{T-1})\|_{\star}] &\le \left( \prod_{t=0}^{T-1} (1-\alpha_t) \right) \rho\sigma + \left( \sum_{t=0}^{T-1} \alpha_t^2 \prod_{\tau=t+1}^{T-1} (1-\alpha_\tau) \right)^{1/2} \rho\sigma \\
    &\quad + L \sum_{t=0}^{T-1} \gamma^t(1+\err_t) \prod_{\tau=t+1}^{T-1} (1-\alpha_\tau).
\end{align*}
\end{lemma}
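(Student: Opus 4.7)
The plan is to generalize the proof of Lemma~\ref{lemma:inexact_momentum_error_main} directly, this time refusing to collapse the products $\prod(1-\alpha_\tau)$ into geometric-series bounds and instead keeping the heterogeneous $\alpha_t$, $\gamma^t$, $\err_t$ visible throughout. The starting point is the same one-step decomposition of the momentum error: from the update $m^{T} = (1-\alpha_{T-1})m^{T-1} + \alpha_{T-1}g^{T-1}$, add and subtract $\nabla f(x^{T-2})$ and $\nabla f(x^{T-1})$ to obtain
\begin{equation*}
m^T - \nabla f(x^{T-1}) = (1-\alpha_{T-1})(m^{T-1} - \nabla f(x^{T-2})) + \alpha_{T-1}(g^{T-1} - \nabla f(x^{T-1})) + (1-\alpha_{T-1})(\nabla f(x^{T-2}) - \nabla f(x^{T-1})).
\end{equation*}
Unrolling this recursion down to $t=0$ expresses $m^T - \nabla f(x^{T-1})$ as an exact sum of three parts: a decayed initial-error term scaled by $\prod_{t=0}^{T-1}(1-\alpha_t)$, a weighted sum of noise terms $\alpha_t(g^t - \nabla f(x^t))$ with weights $\prod_{\tau=t+1}^{T-1}(1-\alpha_\tau)$, and a weighted sum of drift terms $\nabla f(x^{t-1}) - \nabla f(x^t)$ with analogous weights.

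Next, I would take $\|\cdot\|_{\star}$ and expectation, then split via the triangle inequality and bound each of the three pieces. For the initial-error piece, the standard choice $m^0 = g^0$ plus Assumption~\ref{assump:stochastic_oracle} and the norm-compatibility constant $\rho$ give $\mathbb{E}\|m^0 - \nabla f(x^0)\|_{\star} \le \rho\sqrt{\mathbb{E}\|g^0-\nabla f(x^0)\|_2^2} \le \rho\sigma$ by Jensen, producing the first term. For the noise piece, the key observation is that the summands $g^t - \nabla f(x^t)$ are zero-mean and conditionally independent across $t$, so after passing to $\|\cdot\|_2$ via $\rho$ and applying Jensen, the cross-terms vanish in $\mathbb{E}\|\cdot\|_2^2$; the resulting variance sum is $\rho^2\sigma^2 \sum_{t=0}^{T-1}\alpha_t^2 \prod_{\tau=t+1}^{T-1}(1-\alpha_\tau)^2$, and the stated bound is obtained by the elementary inequality $(1-\alpha_\tau)^2 \le (1-\alpha_\tau)$ valid for $\alpha_\tau \in (0,1)$. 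For the drift piece, I would invoke $L$-smoothness from Assumption~\ref{assump:smoothness} and the inexact step-length bound $\|x^t - x^{t-1}\| = \gamma^{t-1}\|\hat{d}^{t-1}\| \le \gamma^{t-1}(1+\err_{t-1})$ coming from Assumption~\ref{assump:inexact_lmo_main}, then re-index the resulting sum to match the $\sum_{t=0}^{T-1}\gamma^t(1+\err_t)\prod_{\tau=t+1}^{T-1}(1-\alpha_\tau)$ shape in the statement.

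The main obstacle I anticipate is purely bookkeeping: keeping the index ranges of the three products $\prod_{\tau=t+1}^{T-1}(1-\alpha_\tau)$ consistent after the recursive unrolling, and making the inequality $(1-\alpha_\tau)^2 \le (1-\alpha_\tau)$ step explicit so that the squared product collapses cleanly to the single product shown in the lemma. Everything else is a direct generalization of the constant-parameter derivation of Lemma~\ref{lemma:inexact_momentum_error_main}; the only conceptual subtlety is that, unlike the constant case, we cannot prematurely sum the geometric series and must instead leave the bound in terms of the raw weighted products, since those are what subsequent analyses of time-varying schedules will need to handle directly.
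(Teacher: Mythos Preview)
Your proposal is correct and follows essentially the same route as the paper's own proof: the paper likewise unrolls the one-step recursion $e_k = (1-\alpha_k)e_{k-1} + \alpha_k v_k + (1-\alpha_k)s_{k-1}$, splits into the same three pieces via the triangle inequality, bounds the initial error by $\rho\sigma$ using $m^0=g^0$, handles the noise via norm compatibility, Jensen, martingale orthogonality, and the inequality $(1-\alpha_\tau)^2 \le (1-\alpha_\tau)$, and bounds the drift using $L$-smoothness plus $\|x^t-x^{t-1}\|\le \gamma^{t-1}(1+\err_{t-1})$ followed by a re-indexing (together with the harmless relaxation $(1-\alpha_t)\le 1$ on the outer factor).
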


\begin{proof}
The proof proceeds by unrolling the one-step recursion for the momentum error. Let $e_k \eqdef m^{k+1} - \nabla f(x^k)$, $v_k \eqdef g^k - \nabla f(x^k)$ be the stochastic noise, and $s_{k-1} \eqdef \nabla f(x^{k-1}) - \nabla f(x^k)$ be the gradient drift. The error recursion is
\begin{align*}
    e_k = (1-\alpha_k)e_{k-1} + \alpha_k v_k + (1-\alpha_k)s_{k-1}.
\end{align*}
Unrolling this relationship from $k=T-1$ down to $0$ gives the expression for the total error:
\begin{align*}
    e_{T-1} = \left(\prod_{t=0}^{T-1}(1-\alpha_t)\right) e_{-1} + \sum_{t=0}^{T-1} \left(\prod_{\tau=t+1}^{T-1}(1-\alpha_\tau)\right) \left( \alpha_t v_t + (1-\alpha_t)s_{t-1} \right),
\end{align*}
where we define $s_{-1}=0$ and $e_{-1} = m^0 - \nabla f(x^0)$. For the initialization $m^0=g^0$, we can bound $\mathbb{E}[\|e_{-1}\|_{\star}] \le \rho\sigma$. Taking the expectation of the norm and applying the triangle inequality yields
\begin{align*}
    \mathbb{E}[\|e_{T-1}\|_{\star}] &\le \left(\prod_{t=0}^{T-1} (1-\alpha_t)\right) \rho\sigma + \mathbb{E}\left[\left\| \sum_{t=0}^{T-1} \alpha_t v_t \prod_{\tau=t+1}^{T-1}(1-\alpha_\tau) \right\|_{\star}\right] \\
    &\quad + \mathbb{E}\left[\left\| \sum_{t=0}^{T-1} (1-\alpha_t) s_{t-1} \prod_{\tau=t+1}^{T-1}(1-\alpha_\tau) \right\|_{\star}\right].
\end{align*}
We now bound the two summation terms separately.

\textbf{1. Bounding the Accumulated Noise Term.}
We apply the norm compatibility condition and then Jensen's inequality, to have
\begin{align*}
    \mathbb{E}\left[\left\| \sum_{t=0}^{T-1} \alpha_t v_t \dots \right\|_{\star}\right] \le \rho \left(\mathbb{E}\left[\left\| \sum_{t=0}^{T-1} \alpha_t v_t \prod_{\tau=t+1}^{T-1}(1-\alpha_\tau) \right\|_{2}^2\right]\right)^{1/2}.
\end{align*}
The stochastic noise vectors $v_t$ are zero-mean and independent conditioned on the past. Therefore, the cross-terms in the squared sum vanish in expectation, yielding
\begin{align*}
    \mathbb{E}\left[\left\| \sum \dots \right\|_{2}^2\right] = \sum_{t=0}^{T-1} \alpha_t^2 \left(\prod_{\tau=t+1}^{T-1}(1-\alpha_\tau)\right)^2 \mathbb{E}[\|v_t\|_2^2] \le \sigma^2 \sum_{t=0}^{T-1} \alpha_t^2 \left(\prod_{\tau=t+1}^{T-1}(1-\alpha_\tau)\right)^2.
\end{align*}
Using the upper bound $(1-x)^2 \le (1-x)$ for any $x \in [0,1]$, we have
\begin{align*}
     \mathbb{E}\left[\left\| \sum \dots \right\|_{2}^2\right] \le \sigma^2 \sum_{t=0}^{T-1} \alpha_t^2 \prod_{\tau=t+1}^{T-1}(1-\alpha_\tau).
\end{align*}
Substituting this back yields the final bound for the noise term:
\begin{align*}
    \text{Noise Term} \le \rho\sigma \left(\sum_{t=0}^{T-1} \alpha_t^2 \prod_{\tau=t+1}^{T-1}(1-\alpha_\tau)\right)^{1/2}.
\end{align*}

\textbf{2. Bounding the Accumulated Drift Term.}
We apply the triangle inequality for sums, Assumption~\ref{assump:smoothness}, and the step-size bound $\|x^t - x^{t-1}\| \le \gamma^{t-1}(1+\err_{t-1})$, to obtain
\begin{align*}
    \mathbb{E}\left[\left\| \sum (1-\alpha_t) s_{t-1} \dots \right\|_{\star}\right] &\le \sum_{t=0}^{T-1} (1-\alpha_t) \mathbb{E}[\|s_{t-1}\|_{\star}] \prod_{\tau=t+1}^{T-1}(1-\alpha_\tau) \\
    &\le L \sum_{t=0}^{T-1} (1-\alpha_t) \gamma^{t-1}(1+\err_{t-1}) \prod_{\tau=t+1}^{T-1}(1-\alpha_\tau).
\end{align*}
Using the bound $(1-\alpha_t) \le 1$ and re-indexing the summation gives the simplified form
\begin{align*}
    \text{Drift Term} \le L \sum_{t=0}^{T-1} \gamma^t(1+\err_t) \prod_{\tau=t+1}^{T-1}(1-\alpha_\tau).
\end{align*}

\textbf{3. Combining the Bounds.}
Assembling the bounds for the three components gives the inequality stated in the lemma.
\end{proof}

Before analyzing the convergence for a specific time-varying schedule, we first establish a general convergence bound that holds for any choice of sequences $\{\gamma_k\}$ and $\{\alpha_k\}$. This theorem is the stochastic analogue of the deterministic result in Theorem~\ref{thm:general_deterministic} and serves as the starting point for all subsequent rate analysis.

\begin{lemma}[General Convergence Bound]
\label{thm:general_stochastic_bound}
Let Assumptions \ref{assump:inexact_lmo_main}, \ref{assump:smoothness}, \ref{assump:stochastic_oracle}, and the norm compatibility condition hold. The sequence of iterates $\{x^k\}$ generated by Algorithm~\ref{alg:stochastic_inaxact_lmo} with time-varying parameters satisfies the following inequality for any $K \ge 1$:
\begin{align*}
    \sum_{k=0}^{K-1} \gamma_k(1-\err_k)\mathbb{E}[\|\nabla f(x^{k+1})\|_{\star}] &\le \Delta^0 + \frac{3L}{2}\sum_{k=0}^{K-1}(\gamma_k)^2(1+\err_k)^2 \\
    &\quad + 2L\sum_{k=0}^{K-1}(\gamma_k)^2(1+\err_k) + 2\sum_{k=0}^{K-1} \gamma_k \mathbb{E}[\|\nabla f(x^k) - m^{k+1}\|_{\star}],
\end{align*}
where $\Delta^0 = f(x^0) - f^*$.
\end{lemma}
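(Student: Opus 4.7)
The plan is to directly adapt the chain of manipulations already carried out in the proof of Theorem~\ref{thm:stochastic}, but stop one step earlier so that the momentum error term is kept explicit rather than substituted via Lemma~\ref{lemma:inexact_momentum_error_main}. The starting point is the Inexact Descent Lemma (Lemma~\ref{lemma:inexact_descent_appendix}), applied with time-varying $\gamma_k, \err_k$ instead of constants. This gives, for every $k$,
\begin{align*}
f(x^{k+1}) \le f(x^k) - \gamma_k(1-\err_k)\|m^{k+1}\|_{\star} + \gamma_k(1+\err_k)\|\nabla f(x^{k+1}) - m^{k+1}\|_{\star} + \tfrac{3L}{2}(\gamma_k)^2(1+\err_k)^2.
\end{align*}

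Next, I would convert the descent term from $\|m^{k+1}\|_{\star}$ to the true gradient norm $\|\nabla f(x^{k+1})\|_{\star}$ by the reverse triangle inequality, exactly as done in the proof of Theorem~\ref{thm:stochastic}. This produces an additional copy of the momentum error term with coefficient $\gamma_k(1-\err_k)$, which, combined with the existing $\gamma_k(1+\err_k)$ coefficient, collapses to a clean $2\gamma_k\|\nabla f(x^{k+1}) - m^{k+1}\|_{\star}$. I would then shift the momentum-error index from $k+1$ to $k$ via
\begin{align*}
\|\nabla f(x^{k+1}) - m^{k+1}\|_{\star} \le \|\nabla f(x^{k+1}) - \nabla f(x^k)\|_{\star} + \|\nabla f(x^k) - m^{k+1}\|_{\star},
\end{align*}
and bound the gradient-drift piece by $L\|x^{k+1}-x^k\| \le L\gamma_k(1+\err_k)$ using Assumption~\ref{assump:smoothness} together with the feasibility bound $\|\hat{d}^k\| \le 1+\err_k$ established in the proof of Lemma~\ref{lemma:inexact_descent_appendix}. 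After multiplying by the leading $2\gamma_k$, this generates the explicit $2L(\gamma_k)^2(1+\err_k)$ term appearing in the statement.

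Finally, I would rearrange to isolate $\gamma_k(1-\err_k)\|\nabla f(x^{k+1})\|_{\star}$ on the left, sum the resulting per-iteration inequality from $k=0$ to $K-1$, and take total expectation. The sum of $f(x^k)-f(x^{k+1})$ telescopes and is upper bounded by $f(x^0)-f^*=\Delta^0$ since $f(x^K)\ge f^*$ by Assumption~\ref{assump:smoothness}. Collecting the four remaining terms yields exactly the claimed inequality.

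The proof is essentially bookkeeping, so I do not expect any serious obstacle; the only subtlety is making sure that all the operations (the reverse triangle inequality, the gradient-drift bound, and the index shift) are valid under the \emph{time-varying} schedule rather than constants, which amounts to replacing $\gamma$ and $\err$ by $\gamma_k$ and $\err_k$ throughout and keeping them inside the summations. No additional assumptions beyond those used for the constant-parameter proof of Theorem~\ref{thm:stochastic} are needed, and the momentum-error sum is purposely left in raw form so that it can later be controlled by Lemma~\ref{lemma:aligned_momentum_error} when specializing to a concrete schedule.
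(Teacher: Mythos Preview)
Your proposal is correct and follows essentially the same approach as the paper's own proof: start from Lemma~\ref{lemma:inexact_descent_appendix}, apply the reverse triangle inequality to convert $\|m^{k+1}\|_\star$ to $\|\nabla f(x^{k+1})\|_\star$, combine the two momentum-error coefficients into $2\gamma_k$, bridge the index gap via the triangle inequality and the $L\gamma_k(1+\err_k)$ drift bound, then sum, take expectations, and telescope. The only cosmetic difference is that the paper performs the index-shift decomposition after summing while you do it per iteration; the two orderings are interchangeable.
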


\begin{proof}
The proof begins with the per-iteration guarantee from the Inexact Descent Lemma (Lemma~\ref{lemma:inexact_descent_appendix}). To obtain a guarantee on the true gradient norm, we apply the reverse triangle inequality to the main descent term, $\|m^{k+1}\|_{\star} \ge \|\nabla f(x^{k+1})\|_{\star} - \|\nabla f(x^{k+1}) - m^{k+1}\|_{\star}$. Substituting this into the descent lemma's bound gives
\begin{align*}
    f(x^{k+1}) &\le f(x^k) - \gamma_k\|\nabla f(x^{k+1})\|_{\star}(1-\err_k) + \gamma_k(1-\err_k)\|\nabla f(x^{k+1}) - m^{k+1}\|_{\star} \\
    &\quad + \gamma_k(1+\err_k)\|\nabla f(x^{k+1}) - m^{k+1}\|_{\star} + \frac{3L}{2}(\gamma_k)^2(1+\err_k)^2.
\end{align*}
Combining the two momentum error terms (whose coefficients sum to 2) and rearranging yields
\begin{align*}
    \gamma_k(1-\err_k)\|\nabla f(x^{k+1})\|_{\star} \le f(x^k) - f(x^{k+1}) + 2\gamma_k\|\nabla f(x^{k+1}) - m^{k+1}\|_{\star} + \frac{3L}{2}(\gamma_k)^2(1+\err_k)^2.
\end{align*}
We now sum this inequality from $k=0$ to $K-1$ and take the total expectation:
\begin{align*}
    \sum_{k=0}^{K-1}\gamma_k(1-\err_k)\mathbb{E}[\|\nabla f(x^{k+1})\|_{\star}] &\le \sum_{k=0}^{K-1} \mathbb{E}[f(x^k) - f(x^{k+1})] \\
    &\quad + 2\sum_{k=0}^{K-1}\gamma_k\mathbb{E}[\|\nabla f(x^{k+1}) - m^{k+1}\|_{\star}] + \sum_{k=0}^{K-1}\frac{3L}{2}(\gamma_k)^2(1+\err_k)^2.
\end{align*}
The first term on the right-hand side is a telescoping sum bounded by $\Delta^0$. The core of the proof is to decompose the sum of the momentum errors using the triangle inequality to bridge the index gap
\begin{align*}
    2\sum_{k=0}^{K-1}\gamma_k\mathbb{E}[\|\nabla f(x^{k+1}) - m^{k+1}\|_{\star}] \le 2\sum_{k=0}^{K-1}\gamma_k\left(\mathbb{E}[\|\nabla f(x^{k+1}) - \nabla f(x^k)\|_{\star}] + \mathbb{E}[\|\nabla f(x^k) - m^{k+1}\|_{\star}]\right).
\end{align*}
This decomposition separates the error into two distinct components.

\textbf{1. Bounding the Gradient Drift Component.} The first part of this sum is bounded using Assumption~\ref{assump:smoothness} ($L$-smoothness) and the property of the update step, $\|x^{k+1}-x^k\| = \gamma_k\|\hat{d}^k\| \le \gamma_k(1+\err_k)$, as
\begin{align*}
    2\sum_{k=0}^{K-1}\gamma_k\mathbb{E}[\|\nabla f(x^{k+1}) - \nabla f(x^k)\|_{\star}] \le 2\sum_{k=0}^{K-1}\gamma_k \left( L \gamma_k (1+\err_k) \right) = 2L\sum_{k=0}^{K-1}(\gamma_k)^2(1+\err_k).
\end{align*}
This term represents the accumulated error caused by the gradient changing between steps.

\textbf{2. The Momentum Lag Component.} The second part of the sum is the total accumulated momentum error, which is the final term in the theorem statement:
\begin{align*}
    2\sum_{k=0}^{K-1}\gamma_k \mathbb{E}[\|\nabla f(x^k) - m^{k+1}\|_{\star}].
\end{align*}
\textbf{Assembling the Final Bound.}
Combining all parts, we arrive at the final inequality stated in the theorem. This result cleanly separates the total progress (LHS) from the initial sub-optimality gap ($\Delta^0$), the cost from the step size's magnitude (the two $\sum(\gamma_k)^2$ terms), and the accumulated momentum lag error arising from momentum and stochastic noise.
\end{proof}

We now use the \say{General Convergence Bound} to derive an explicit convergence rate for a specific, parameter-agnostic schedule. The choice of the time-varying step size and momentum parameter is critical for ensuring that the accumulated error terms are properly controlled, leading to a guarantee of convergence. We adopt a schedule inspired by the analysis in \cite{yang2023two}, which has been proven effective for this type of recursive error structure.

\begin{theorem}[Convergence with Time-Varying Parameters]
\label{thm:time_varying_rate_final}
Let Assumptions \ref{assump:inexact_lmo_main}, \ref{assump:smoothness}, \ref{assump:stochastic_oracle}, and the norm compatibility condition hold, with a uniform inexactness bound $\err_k \le \err < 1$. If we choose the time-varying step size and momentum parameters for $k \ge 0$ as
\begin{align*}
    \gamma_k = \frac{\gamma}{(k+1)^{3/4}} \quad \text{and} \quad \alpha_k = \frac{\alpha}{(k+1)^{1/2}},
\end{align*}
for some user-chosen constants $\gamma > 0$ and $\alpha \in (0, 1]$, then after $K$ iterations, the minimum expected gradient norm is bounded as
\begin{align*}
    \min_{0 \le j < K} \mathbb{E}[\|\nabla f(x^{j+1})\|_{\star}] &= \mathcal{O}\left( \frac{\Delta^0 + L\gamma^2((1+\err)^2 + (1+\err))}{(1-\err)\gamma K^{1/4}} \right) \\
    &\quad + \mathcal{O}\left( \frac{\left( \rho\sigma\sqrt{\alpha} + L\gamma(1+\err)\alpha^{-1} \right) \log K}{(1-\err) K^{1/4}} \right).
\end{align*}
\end{theorem}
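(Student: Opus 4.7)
The plan is to apply the General Convergence Bound (Lemma~\ref{thm:general_stochastic_bound}) and then evaluate each of the four sums on its right-hand side under the specific schedules $\gamma_k = \gamma/(k+1)^{3/4}$ and $\alpha_k = \alpha/(k+1)^{1/2}$. The final step is to lower-bound $\sum_{k=0}^{K-1}\gamma_k(1-\err_k)$ and divide through, then convert an average-type inequality into a minimum bound.

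First, I would handle the ``easy'' pieces of Lemma~\ref{thm:general_stochastic_bound}. Using $\err_k \le \err$, one has $\sum_{k=0}^{K-1}\gamma_k(1-\err_k) \ge (1-\err)\gamma \sum_{k=0}^{K-1}(k+1)^{-3/4}$, which by an integral comparison is $\Theta((1-\err)\gamma K^{1/4})$. The quadratic terms $\sum_{k=0}^{K-1}(\gamma_k)^2(1+\err_k)^2$ and $\sum_{k=0}^{K-1}(\gamma_k)^2(1+\err_k)$ are each bounded by $\gamma^2 (1+\err)^2 \sum_{k=0}^{K-1}(k+1)^{-3/2}$, and since this series converges, they contribute only a $\mathcal{O}(\gamma^2(1+\err)^2)$ and $\mathcal{O}(\gamma^2(1+\err))$ constant, respectively. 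Together, these yield the first $\mathcal{O}(\cdot/K^{1/4})$ term in the stated bound after dividing by $\sum\gamma_k(1-\err_k)$.

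The real work is the final term $2\sum_{k=0}^{K-1}\gamma_k\,\mathbb{E}\|\nabla f(x^k) - m^{k+1}\|_\star$. I would plug in the three-component bound from Lemma~\ref{lemma:aligned_momentum_error}, giving a sum $\sum_{k=0}^{K-1}\gamma_k E_{k-1}$ whose three parts I analyze separately. For the initialization component, $\prod_{t=0}^{k-1}(1-\alpha_t) \le \exp(-\alpha\sum_{t=0}^{k-1}(t+1)^{-1/2}) \le \exp(-c\alpha\sqrt{k})$, so this part contributes a bounded constant after multiplying by $\gamma_k$ and summing. For the noise component, using $\alpha_t^2 \prod_{\tau=t+1}^{k-1}(1-\alpha_\tau) \le \alpha_t \alpha_{k-1}$-type telescoping arguments (a standard manipulation for this schedule; see the analysis style of \citet{yang2023two}), one obtains $\big(\sum_{t=0}^{k-1}\alpha_t^2\prod_{\tau=t+1}^{k-1}(1-\alpha_\tau)\big)^{1/2} = \mathcal{O}(\sqrt{\alpha_k}) = \mathcal{O}(\sqrt{\alpha}/(k+1)^{1/4})$. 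Multiplying by $\gamma_k = \gamma/(k+1)^{3/4}$ gives a sum of order $\rho\sigma\sqrt{\alpha}\sum_{k=0}^{K-1} 1/(k+1) = \mathcal{O}(\rho\sigma\sqrt{\alpha}\log K)$. The drift component $L\sum_{t=0}^{k-1}\gamma_t(1+\err_t)\prod_{\tau=t+1}^{k-1}(1-\alpha_\tau)$ is handled by the same telescoping trick: the inner sum is $\mathcal{O}(\gamma_k(1+\err)/\alpha_k) = \mathcal{O}(\gamma(1+\err)/(\alpha(k+1)^{1/4}))$, so after multiplying by $\gamma_k$ and summing, it contributes $\mathcal{O}(L\gamma^2(1+\err)\alpha^{-1}\log K)$.

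Collecting everything, the right-hand side of Lemma~\ref{thm:general_stochastic_bound} is bounded by $\Delta^0 + \mathcal{O}(L\gamma^2((1+\err)^2 + (1+\err))) + \mathcal{O}((\rho\sigma\sqrt{\alpha} + L\gamma(1+\err)/\alpha)\gamma\log K)$. Dividing by the lower bound $(1-\err)\gamma K^{1/4}$ (up to a constant) and using $\min_{0\le j<K} \mathbb{E}\|\nabla f(x^{j+1})\|_\star \le \frac{\sum \gamma_k (1-\err_k)\mathbb{E}\|\nabla f(x^{k+1})\|_\star}{\sum \gamma_k (1-\err_k)}$ yields the stated two-term bound. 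The main obstacle is the telescoping estimate that reduces the weighted double sums in the noise and drift components to a clean $\sqrt{\alpha_k}$ and $\gamma_k/\alpha_k$ profile at time $k$; once that reduction is available, converting the resulting $\sum (k+1)^{-1}$ into the logarithmic factor is routine.
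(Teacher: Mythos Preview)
Your proposal is correct and follows essentially the same route as the paper: start from Lemma~\ref{thm:general_stochastic_bound}, lower-bound $\sum_k \gamma_k(1-\err_k)$ by $\Theta((1-\err)\gamma K^{1/4})$ via integral comparison, control the $\sum_k(\gamma_k)^2$ terms by the convergent $\zeta(3/2)$ series, then handle the accumulated momentum error by inserting Lemma~\ref{lemma:aligned_momentum_error} and reducing the resulting sum-of-products to the $\rho\sigma\sqrt{\alpha}$ and $L\gamma(1+\err)/\alpha$ profiles times $\log K$ using the machinery of \citet{yang2023two}. The paper simply cites their Lemma~3 as a black box for that last step, whereas you sketch the telescoping explicitly; otherwise the arguments coincide (up to a harmless off-by-one in your indexing of $E_{k-1}$ versus $E_k$).
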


\begin{proof}
The proof starts from the General Convergence Bound (Theorem~\ref{thm:general_stochastic_bound}). After rearranging, it provides a bound on the minimum expected gradient norm:
\begin{align*}
    \left(\min_{0 \le j < K} \mathbb{E}[\|\nabla f(x^{j+1})\|_{\star}] \right) \sum_{k=0}^{K-1} \gamma_k(1-\err_k) &\le \Delta^0 + \frac{3L}{2}\sum_{k=0}^{K-1}(\gamma_k)^2(1+\err_k)^2 \\
    &\quad + 2L\sum_{k=0}^{K-1}(\gamma_k)^2(1+\err_k) + 2\sum_{k=0}^{K-1} \gamma_k \mathbb{E}[\|\nabla f(x^k) - m^{k+1}\|_{\star}].
\end{align*}
We analyze each term in this expression under the chosen parameter schedule and the uniform bound $\err_k \le \err$.

\textbf{1. Bounding the Sums of Step Sizes.}
With the schedule $\gamma_k = \gamma/(k+1)^{3/4}$, the sums have the following behavior. The progress-weighting sum on the LHS is bounded below by an integral for $K \ge 1$:
\begin{align*}
    \sum_{k=0}^{K-1} \gamma_k(1-\err_k) \ge \gamma(1-\err) \sum_{k=0}^{K-1} \frac{1}{(k+1)^{3/4}} \ge 4\gamma(1-\err)((K+1)^{1/4}-1).
\end{align*}
The squared step size sums on the RHS correspond to a convergent p-series (since the exponent $p=3/2 > 1$) and are bounded by the constant $\zeta(3/2) = \sum_{k=1}^{\infty} k^{-3/2}$ for any $K$, where $\zeta$ is the Riemann zeta function. Therefore, the two terms are bounded by $\frac{3L}{2}\gamma^2\zeta(3/2)(1+\err)^2$ and $2L\gamma^2\zeta(3/2)(1+\err)$ respectively.

\textbf{2. Bounding the Accumulated Momentum Error.}
The momentum lag component, $2\sum_{k=0}^{K-1}\gamma_k \mathbb{E}[\|\nabla f(x^k) - m^{k+1}\|_{\star}]$, is bounded by applying the results of Lemma~\ref{lemma:aligned_momentum_error}. The structure of the bound in that lemma is of the form addressed by Lemma 3 in \cite{yang2023two}. Applying their result to the two sum-of-products terms in our bound, with our chosen schedules, yields
\begin{align*}
    \sum_{k=0}^{K-1}\gamma_k \mathbb{E}[\|\nabla f(x^k) - m^{k+1}\|_{\star}] \le \left( C_1 \rho\sigma\gamma\sqrt{\alpha} + C_2 L (1+\err) \frac{\gamma^2}{\alpha} \right) \log(K+1),
\end{align*}
where $C_1, C_2$ are universal numerical constants. This shows that the total accumulated error from momentum and noise grows only logarithmically with the number of iterations.

\textbf{3. Assembling the Final Bound.}
Substituting these explicit bounds back into the main inequality:
\begin{align*}
    \left(\min_{j<K}\mathbb{E}[\|\nabla f(x^{j+1})\|_{\star}]\right) \cdot 4\gamma(1-\err)(K^{1/4}-1) &\le \Delta^0 + L\gamma^2\zeta(3/2)\left(\frac{3}{2}(1+\err)^2 + 2(1+\err)\right) \\
    &\quad + 2\left( C_1 \rho\sigma\gamma\sqrt{\alpha} + C_2 L(1+\err)\frac{\gamma^2}{\alpha} \right) \log(K+1).
\end{align*}
Dividing by $4\gamma(1-\err)(K^{1/4}-1)$ gives the final expression for the convergence rate. For large $K$, the term $(K+1)^{1/4}-1 \approx K^{1/4}$. This leads to the final bound stated in the theorem, where the dependence on all parameters, including the user-chosen schedule constants $\gamma$ and $\alpha$, is made explicit.
\end{proof}

\section{Extensions}

\subsection{$(L^0, L^1)$-smoothness}\label{app:l0l1_smooth}

We now extend our analysis from the standard $L$-smooth setting to the more general and realistic $(L^0, L^1)$-smoothness model, as defined in \cite{riabinin2025gluon}.

\begin{assumption}[Local $(L^0, L^1)$-Smoothness]
\label{assump:l0l1_smoothness}
The function $f$ is continuously differentiable and is bounded below by $f^* > -\infty$. We assume that $f$ is locally $(L^0, L^1)$-smooth along the optimization trajectory. Specifically, for any pair of consecutive iterates $x^k$ and $x^{k+1}$ generated by the algorithm, the following inequality holds:
\begin{align*}
    f(x^{k+1}) \le f(x^k) + \inp{\nabla f(x^k)}{x^{k+1}-x^k} + \frac{L^0 + L^1\|\nabla f(x^k)\|_{\star}}{2}\|x^{k+1}-x^k\|^2.
\end{align*}
\end{assumption}
This local version is sufficient for the analysis of descent methods where the step sizes are bounded, as is the case here. For clarity, we do not explicitly track the radius of the local region, as it is implicitly defined by the maximum possible step size.

\begin{theorem}[Iteration Complexity for $(L^0, L^1)$-Smoothness]
Let the function $f$ satisfy the local $(L^0, L^1)$-smoothness property (Assumption~\ref{assump:l0l1_smoothness}). Consider the deterministic method with an inexact LMO satisfying a uniform error bound $\err_k \le \err < 1$ (Assumption~\ref{assump:inexact_lmo_main}), and using the adaptive step size
\begin{align*}
    \gamma_k = \frac{\|\nabla f(x^k)\|_{\star}(1-\err_k)}{(L^0 + L^1\|\nabla f(x^k)\|_{\star})(1+\err_k)^2}.
\end{align*}
To guarantee finding an iterate $x^k$ such that $\|\nabla f(x^k)\|_{\star} \le \varepsilon$ after at most $K$ iterations, it is sufficient to run the algorithm for a number of iterations $K$ satisfying
\begin{align*}
    K \ge \frac{2\Delta^0(1+\err)^2}{(1-\err)^2} \left( \frac{L^0}{\varepsilon^2} + \frac{L^1}{\varepsilon} \right),
\end{align*}
where $\Delta^0 = f(x^0) - f^*$.
\end{theorem}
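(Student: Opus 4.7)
The plan is to follow the template of the adaptive step-size analysis in Appendix~\ref{app:adaptive_step_size} (Theorem~\ref{thm:optimal_rate_time_varying}), but with the constant $L$ replaced by the iteration-dependent $(L^0 + L^1\|\nabla f(x^k)\|_{\star})$ coming from Assumption~\ref{assump:l0l1_smoothness}. First, I would reproduce the two LMO bounds established in the proof of Theorem~\ref{thm:general_deterministic}, namely $\|\hat{d}^k\| \le 1+\err_k$ and $\langle \nabla f(x^k), \hat{d}^k\rangle \le -\|\nabla f(x^k)\|_{\star}(1-\err_k)$. Substituting $x^{k+1}-x^k = \gamma_k \hat{d}^k$ into the $(L^0,L^1)$-smoothness inequality and plugging in these two bounds yields the per-iteration descent guarantee
\begin{align*}
    f(x^{k+1}) \le f(x^k) - \gamma_k\|\nabla f(x^k)\|_{\star}(1-\err_k) + \frac{L^0 + L^1\|\nabla f(x^k)\|_{\star}}{2}\,\gamma_k^2(1+\err_k)^2.
\end{align*}

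Next, I would observe that the right-hand side is a convex quadratic in $\gamma_k$, and the adaptive step size prescribed in the theorem is precisely its minimizer. Substituting this choice back yields, after the usual cancellation,
\begin{align*}
    f(x^{k+1}) \le f(x^k) - \frac{(1-\err_k)^2}{2(1+\err_k)^2}\cdot\frac{\|\nabla f(x^k)\|_{\star}^2}{L^0 + L^1\|\nabla f(x^k)\|_{\star}}.
\end{align*}
Under the uniform bound $\err_k \le \err < 1$, the prefactor is lower bounded by $(1-\err)^2/(2(1+\err)^2)$.

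Now for the complexity argument I would argue by contradiction: suppose $\|\nabla f(x^k)\|_{\star} > \varepsilon$ for every $k = 0, \dots, K-1$. A short calculus check shows that the map $t \mapsto t^2/(L^0 + L^1 t)$ has derivative $(2L^0 t + L^1 t^2)/(L^0+L^1 t)^2 > 0$ for $t>0$, hence it is monotonically increasing, so each per-iteration decrease is at least $\tfrac{(1-\err)^2}{2(1+\err)^2}\cdot\tfrac{\varepsilon^2}{L^0 + L^1 \varepsilon}$. Telescoping $f(x^{k+1}) - f(x^k)$ from $k=0$ to $K-1$ and using $f(x^K) \ge f^*$ gives
\begin{align*}
    \Delta^0 \ge K\cdot\frac{(1-\err)^2}{2(1+\err)^2}\cdot\frac{\varepsilon^2}{L^0 + L^1\varepsilon},
\end{align*}
which rearranges to the claimed upper bound $K \le \frac{2\Delta^0(1+\err)^2}{(1-\err)^2}\bigl(\frac{L^0}{\varepsilon^2}+\frac{L^1}{\varepsilon}\bigr)$. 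Contradiction, so the complexity statement follows.

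I do not expect any real obstacle: the key algebraic structure is identical to Theorem~\ref{thm:optimal_rate_time_varying}, and the only new ingredient is the monotonicity of $t \mapsto t^2/(L^0+L^1 t)$, which lets the $\varepsilon$-threshold be plugged in uniformly. The mildly delicate point is the implicit use of the \emph{local} smoothness assumption: one should justify that the iterates remain in the region where Assumption~\ref{assump:l0l1_smoothness} applies. This is handled automatically because the adaptive step size produces bounded displacements (the step length $\gamma_k\|\hat{d}^k\|$ is controlled by $\|\nabla f(x^k)\|_{\star}/(L^0+L^1\|\nabla f(x^k)\|_{\star}) \le 1/L^1$), so no additional argument is needed beyond a remark referencing the same assumption used by \citet{riabinin2025gluon}.
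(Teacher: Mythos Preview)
Your proposal is correct and follows essentially the same approach as the paper: both derive the per-iteration descent guarantee from the $(L^0,L^1)$-smoothness inequality combined with the two LMO bounds, minimize over $\gamma_k$ to get the adaptive step size, and then exploit the monotonicity of $t\mapsto t^2/(L^0+L^1 t)$ to convert the telescoped progress into an iteration-complexity statement. The only cosmetic difference is that you argue by contradiction (assume all gradients exceed $\varepsilon$), whereas the paper bounds $g_{\min}=\min_k\|\nabla f(x^k)\|_\star$ directly via $\phi(g_{\min})\le 2\Delta^0(1+\err)^2/(K(1-\err)^2)$ and then inverts using monotonicity; the two are logically equivalent.
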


\begin{proof}
The proof derives the iteration complexity for the deterministic method under $(L^0, L^1)$-smoothness by using a step size that is chosen optimally at each iteration.

\textbf{1. Per-Iteration Descent Inequality.}
Our analysis begins with the generalized smoothness inequality for $(L^0, L^1)$-smooth functions, which holds for any step $x^{k+1} = x^k + \gamma_k \hat{d}^k$:
\begin{align*}
    f(x^{k+1}) \le f(x^k) + \inp{\nabla f(x^k)}{x^{k+1}-x^k} + \frac{L^0 + L^1\|\nabla f(x^k)\|_{\star}}{2}\|x^{k+1}-x^k\|^2.
\end{align*}
As established in the proof of Theorem~\ref{thm:general_deterministic}, we can bound the terms involving the inexact direction $\hat{d}^k$ as
\begin{itemize}
    \item $\|\gamma_k \hat{d}^k\|^2 \le (\gamma_k)^2 (1+\err_k)^2$.
    \item $\inp{\nabla f(x^k)}{\hat{d}^k} \le -\|\nabla f(x^k)\|_{\star}(1-\err_k)$.
\end{itemize}
Substituting these into the smoothness inequality gives the per-iteration guarantee
\begin{align} \label{eq:l0l1_per_iter_descent}
    f(x^{k+1}) \le f(x^k) - \gamma_k \|\nabla f(x^k)\|_{\star}(1-\err_k) + \frac{L^0 + L^1\|\nabla f(x^k)\|_{\star}}{2} (\gamma_k)^2 (1+\err_k)^2.
\end{align}

\textbf{2. Optimal Step Size and Resulting Descent.}
The right-hand side of inequality \eqref{eq:l0l1_per_iter_descent} is a quadratic function of the step size $\gamma_k$. To maximize the guaranteed descent at each step, we choose the $\gamma_k$ that minimizes this expression. The optimal choice is given by
\begin{align*}
    \gamma_k = \frac{\|\nabla f(x^k)\|_{\star}(1-\err_k)}{(L^0 + L^1\|\nabla f(x^k)\|_{\star})(1+\err_k)^2}.
\end{align*}
We now substitute this optimal step size back into the descent inequality \eqref{eq:l0l1_per_iter_descent}:
\begin{align*}
    f(x^{k+1}) \le  f(x^k) - \frac{(\|\nabla f(x^k)\|_{\star}(1-\err_k))^2}{2(L^0 + L^1\|\nabla f(x^k)\|_{\star})(1+\err_k)^2}.
\end{align*}

\textbf{3. Global Bound and Iteration Complexity.}
Summing the final descent inequality from $k=0$ to $K-1$ gives the global progress:
\begin{align*}
    \sum_{k=0}^{K-1} \frac{(\|\nabla f(x^k)\|_{\star}(1-\err_k))^2}{2(L^0 + L^1\|\nabla f(x^k)\|_{\star})(1+\err_k)^2} \le \sum_{k=0}^{K-1} (f(x^k) - f(x^{k+1})) \le \Delta^0.
\end{align*}
To derive a meaningful convergence bound from this sum, we define a progress function $\phi(t)$ that captures the contribution for a gradient norm of magnitude $t = \|\nabla f(x^k)\|_{\star}$:
\begin{align*}
    \phi(t) \eqdef \frac{t^2}{L^0 + L^1 t}.
\end{align*}
This function is monotonically increasing for $t \ge 0$. Assuming a uniform inexactness bound $\err_k \le \err < 1$, the summed inequality can be rewritten as:
\begin{align*}
    \frac{(1-\err)^2}{2(1+\err)^2} \sum_{k=0}^{K-1} \phi(\|\nabla f(x^k)\|_{\star}) \le \Delta^0.
\end{align*}
Let $g_{\min} = \min_{0 \le j < K} \|\nabla f(x^j)\|_{\star}$. Since $\phi(t)$ is increasing, we can lower-bound the sum by $K \phi(g_{\min})$:
\begin{align*}
    \frac{K (1-\err)^2}{2(1+\err)^2} \phi(g_{\min}) \le \Delta^0 \implies \phi(g_{\min}) \le \frac{2\Delta^0(1+\err)^2}{K(1-\err)^2}.
\end{align*}
To derive the iteration complexity, we require that the algorithm finds an iterate with gradient norm at most $\varepsilon$, i.e., $g_{\min} \le \varepsilon$. This is guaranteed if $\phi(\varepsilon)$ satisfies the above bound:
\begin{align*}
    \phi(\varepsilon) \le \frac{2\Delta^0(1+\err)^2}{K(1-\err)^2}.
\end{align*}
Solving for $K$ gives the required number of iterations:
\begin{align*}
    K \ge \frac{2\Delta^0(1+\err)^2}{\phi(\varepsilon)(1-\err)^2}.
\end{align*}
Substituting the definition of $\phi(\varepsilon) = \frac{\varepsilon^2}{L^0 + L^1\varepsilon}$ yields the final complexity stated in the theorem.
\end{proof}

\subsection{Layer-wise analysis}  \label{app:layerwise}

We now extend the analysis to the practical, multi-layered setting considered in \cite{riabinin2025gluon}. This framework is essential for modern deep learning, where models are composed of distinct blocks or layers with heterogeneous geometries. We assume that the parameter vector $X$ is a concatenation of $p$ blocks, $X = [X_1, \dots, X_p]$, where each layer $X_i$ belongs to its own vector space $\mathcal{X}_i$ equipped with an inner product $\langle \cdot, \cdot\rangle_{(i)}$ and norm $\|\cdot\|_{(i)}$. Each layer is updated independently via its own LMO.

This framework allows us to model a realistic scenario where not only the smoothness but also the LMO inexactness varies across layers. For example, the LMO for a simple bias vector might be solved almost exactly ($\err_i$ is small), while the LMO for a large transformer attention block might have a larger error ($\err_i$ is large).

\begin{assumption}[Layer-Wise Smoothness]
\label{assump:layerwise_smoothness}
The function $f$ is smooth with respect to each layer $i$ with a constant $L_i$. Specifically, for any iterates $X^k$ and $X^{k+1}$, the following holds:
\begin{align*}
    f(X^{k+1}) \le f(X^k) + \sum_{i=1}^p \left[ \inp{\nabla_i f(X^k)}{X_i^{k+1}-X_i^k}_{(i)} + \frac{L_i}{2}\|X_i^{k+1}-X_i^k\|^2_{(i)} \right].
\end{align*}
\end{assumption}

We first derive a general bound that holds for any choice of layer-specific, time-varying step sizes $\gamma_{k,i}$ and inexactness levels $\err_{k,i}$.

\begin{theorem}[General Convergence Bound for Layer-Wise Method]
Let Assumption~\ref{assump:layerwise_smoothness} hold. Let the layer-wise inexact LMO for each layer $i$ satisfy Assumption~\ref{assump:inexact_lmo_main} with error $\err_{k,i} < 1$. Then, for any sequence of layer-specific step sizes $\gamma_{k,i} > 0$, the following bound holds after $K$ iterations:
\begin{align*}
    \sum_{k=0}^{K-1} \sum_{i=1}^p \gamma_{k,i} \|\nabla_i f(X^k)\|_{(i)\star}(1-\err_{k,i}) \le \Delta^0 + \frac{1}{2} \sum_{k=0}^{K-1} \sum_{i=1}^p L_i (\gamma_{k,i})^2 (1+\err_{k,i})^2.
\end{align*}
\end{theorem}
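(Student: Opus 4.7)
The plan is to mimic the proof of Theorem~\ref{thm:general_deterministic} but apply the bounding arguments independently to each layer before summing. The layer-wise smoothness inequality in Assumption~\ref{assump:layerwise_smoothness} already decouples the descent contributions into a sum over $i=1,\dots,p$, so the per-layer analysis of the exact LMO proof carries through block by block, and the only bookkeeping is to keep the layer-specific norms, inner products, step sizes, and error levels consistently indexed.

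First, I would plug the update rule $X_i^{k+1} - X_i^k = \gamma_{k,i}\hat{d}_i^k$ into the smoothness bound, yielding a per-iteration inequality with terms $\gamma_{k,i}\inp{\nabla_i f(X^k)}{\hat{d}_i^k}_{(i)}$ and $\tfrac{L_i}{2}(\gamma_{k,i})^2\|\hat{d}_i^k\|_{(i)}^2$ for each block. Next, for each layer, I would replicate the two bounds from the proof of Theorem~\ref{thm:general_deterministic}: (i) using the triangle inequality with the exact LMO solution $d_i^k$ (which satisfies $\|d_i^k\|_{(i)} \le 1$) and the per-layer inexactness $\|\hat{d}_i^k - d_i^k\|_{(i)} \le \err_{k,i}$ to obtain $\|\hat{d}_i^k\|_{(i)} \le 1+\err_{k,i}$; and (ii) decomposing $\inp{\nabla_i f(X^k)}{\hat{d}_i^k}_{(i)} = \inp{\nabla_i f(X^k)}{d_i^k}_{(i)} + \inp{\nabla_i f(X^k)}{\hat{d}_i^k - d_i^k}_{(i)}$, where the first term equals $-\|\nabla_i f(X^k)\|_{(i)\star}$ by the layer-wise LMO definition and the second is bounded by $\|\nabla_i f(X^k)\|_{(i)\star}\err_{k,i}$ via the dual-norm inequality, giving an upper bound of $-\|\nabla_i f(X^k)\|_{(i)\star}(1-\err_{k,i})$.

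Substituting these two bounds into the layer-wise smoothness inequality yields the per-iteration descent
\begin{align*}
f(X^{k+1}) \le f(X^k) - \sum_{i=1}^p \gamma_{k,i}\|\nabla_i f(X^k)\|_{(i)\star}(1-\err_{k,i}) + \sum_{i=1}^p \frac{L_i}{2}(\gamma_{k,i})^2(1+\err_{k,i})^2.
\end{align*}
Rearranging, summing from $k=0$ to $K-1$, telescoping the $f(X^k)-f(X^{k+1})$ terms, and using $f(X^K) \ge f^*$ to bound the telescoped sum by $\Delta^0$ delivers the stated inequality.

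I do not anticipate a serious obstacle: the argument is a direct reindexing of the single-block deterministic proof, and the per-layer inexactness assumption and LMO definition behave exactly as in the scalar-norm case. The only mild care-point is making sure that each LMO is interpreted with respect to its own norm $\|\cdot\|_{(i)}$ and dual $\|\cdot\|_{(i)\star}$, so that the exact direction $d_i^k$ lies in the unit ball of $\|\cdot\|_{(i)}$ and the inner product duality used in step (ii) is the layer-specific $\inp{\cdot}{\cdot}_{(i)}$; once this is stated cleanly at the outset, the rest of the proof is a straightforward block-wise application of the arguments already established in Theorem~\ref{thm:general_deterministic}.
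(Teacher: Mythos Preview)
Your proposal is correct and follows essentially the same approach as the paper: start from the layer-wise smoothness inequality, apply the two per-layer bounds (norm of the inexact direction and inner-product bound) exactly as in the single-block proof of Theorem~\ref{thm:general_deterministic}, obtain the per-iteration descent inequality, and then telescope over $k$. The paper's proof is in fact slightly more terse than yours, simply citing the single-layer bounds rather than re-deriving them, so your version is a faithful (and slightly more detailed) rendition of the same argument.
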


\begin{proof}
The proof extends the single-layer analysis to the multi-layer setting. We start with the layer-wise smoothness inequality from Assumption~\ref{assump:layerwise_smoothness}. The update for each layer is $X_i^{k+1}-X_i^k = \gamma_{k,i} \hat{d}_i^k$. For each layer $i$ inside the summation, we can bound the terms involving its update direction $\hat{d}_i^k$ exactly as in the single-layer case:
\begin{itemize}
    \item $\|\gamma_{k,i} \hat{d}_i^k\|^2_{(i)} \le (\gamma_{k,i})^2 (1+\err_{k,i})^2$.
    \item $\inp{\nabla_i f(X^k)}{\hat{d}_i^k}_{(i)} \le -\|\nabla_i f(X^k)\|_{(i)\star}(1-\err_{k,i})$.
\end{itemize}
Substituting these bounds into the main inequality gives the per-iteration descent guarantee for the layer-wise setting:
\begin{align*}
    f(X^{k+1}) \le f(X^k) + \sum_{i=1}^p \left[ -\gamma_{k,i} \|\nabla_i f(X^k)\|_{(i)\star}(1-\err_{k,i}) + \frac{L_i}{2} (\gamma_{k,i})^2 (1+\err_{k,i})^2 \right].
\end{align*}
Rearranging to isolate the measure of progress (the sum of gradient norms) yields
\begin{align*}
    \sum_{i=1}^p \gamma_{k,i} \|\nabla_i f(X^k)\|_{(i)\star}(1-\err_{k,i}) \le f(X^k) - f(X^{k+1}) + \sum_{i=1}^p \frac{L_i}{2} (\gamma_{k,i})^2 (1+\err_{k,i})^2.
\end{align*}
Summing this from $k=0$ to $K-1$ and using the telescoping property of the function values on the right-hand side, $\sum_{k=0}^{K-1}(f(X^k) - f(X^{k+1})) = f(X^0) - f(X^K) \le \Delta^0$, gives the final result stated in the theorem.
\end{proof}

We now analyze the convergence of the layer-wise method when using an adaptive step size for each layer, chosen at each iteration to maximize the guaranteed descent. This approach provides a theoretically-grounded, adaptive learning rate policy for each layer.

\begin{theorem}[Iteration Complexity with Layer-Wise Adaptive Step Sizes]
Let Assumption~\ref{assump:layerwise_smoothness} hold. Consider the deterministic layer-wise method where the inexact LMO for each layer $i$ satisfies a uniform error bound $\err_{k,i} \le \err_i < 1$. If at each iteration $k$, the step size for each layer $i$ is chosen adaptively as
\begin{align*}
    \gamma_{k,i}^* = \frac{\|\nabla_i f(X^k)\|_{(i)\star}(1-\err_{k,i})}{L_i(1+\err_{k,i})^2},
\end{align*}
then to guarantee finding an iterate $X^k$ such that the weighted squared gradient norm $\sum_{i=1}^p \frac{\|\nabla_i f(X^k)\|_{(i)\star}^2}{L_i} \le \varepsilon^2$ after at most $K$ iterations, it is sufficient to run the algorithm for a number of iterations $K$ satisfying
\begin{align*}
    K \ge \frac{2\Delta^0}{\varepsilon^2} \max_{1 \le j \le p} \left\{ \frac{(1+\err_j)^2}{(1-\err_j)^2} \right\}.
\end{align*}
\end{theorem}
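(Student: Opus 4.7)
The plan is to reduce the claim to a telescoping argument by first invoking the per-iteration descent inequality established in the proof of the preceding general layer-wise theorem, then substituting the proposed step size $\gamma_{k,i}^*$, and finally exploiting a simple monotonicity property to convert the per-iteration bound into an iteration complexity.

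Concretely, I would start from the per-iteration form derived in the proof of the general layer-wise result, namely
\begin{align*}
f(X^{k+1}) \le f(X^k) + \sum_{i=1}^p \left[-\gamma_{k,i}\|\nabla_i f(X^k)\|_{(i)\star}(1-\err_{k,i}) + \frac{L_i}{2}\gamma_{k,i}^2(1+\err_{k,i})^2\right].
\end{align*}
The key observation is that the bracketed expression decouples across layers and is a convex quadratic in each $\gamma_{k,i}$, so layer-wise minimization yields exactly the stated $\gamma_{k,i}^*$. Substituting this choice collapses the $i$-th term to $-\frac{\|\nabla_i f(X^k)\|_{(i)\star}^2(1-\err_{k,i})^2}{2L_i(1+\err_{k,i})^2}$, giving a per-iteration descent that already exposes the per-layer degradation factor $(1-\err_{k,i})^2/(1+\err_{k,i})^2$.

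Next I would invoke the uniform bound $\err_{k,i}\le \err_i$. Since $x\mapsto (1-x)/(1+x)$ is strictly decreasing on $[0,1)$, the factor $\frac{(1-\err_{k,i})^2}{(1+\err_{k,i})^2}$ is bounded below by its worst-case value $\frac{(1-\err_i)^2}{(1+\err_i)^2}$. Letting $c_{\min}\eqdef \min_{1\le j\le p}\frac{(1-\err_j)^2}{(1+\err_j)^2}$, I would pull this uniform lower bound out of the sum to obtain a clean descent inequality
\begin{align*}
f(X^{k+1}) \le f(X^k) - \frac{c_{\min}}{2}\sum_{i=1}^p \frac{\|\nabla_i f(X^k)\|_{(i)\star}^2}{L_i}.
\end{align*}

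Telescoping from $k=0$ to $K-1$ and using $f(X^K)\ge f^*$ bounds the cumulative weighted squared gradient norm by $2\Delta^0/c_{\min}$, so the minimum-over-$k$ of $\sum_i \|\nabla_i f(X^k)\|_{(i)\star}^2/L_i$ is at most $2\Delta^0/(K c_{\min})$. Setting this $\le \varepsilon^2$ and rearranging gives the claimed $K$, after noting that $1/c_{\min}=\max_j (1+\err_j)^2/(1-\err_j)^2$. I do not anticipate a serious obstacle here: the only subtlety worth flagging is that the per-layer decoupling of the quadratic RHS is what licenses the independent minimization across $i$, and the monotonicity of $(1-x)/(1+x)$ is what cleanly converts the non-uniform bound $\err_{k,i}\le\err_i$ into the stated $\max_j$ factor.
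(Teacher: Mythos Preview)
Your proposal is correct and follows essentially the same approach as the paper: start from the layer-wise per-iteration descent inequality, substitute the minimizing step size $\gamma_{k,i}^*$, apply the uniform bound $\err_{k,i}\le\err_i$ via monotonicity of $(1-x)/(1+x)$, pull out the worst-layer factor $c_{\min}$, telescope, and solve for $K$. The only cosmetic difference is ordering (you apply the uniform bound before telescoping, the paper after), which is immaterial.
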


\begin{proof}
The proof begins with the per-iteration descent inequality derived in the proof of the General Convergence Bound theorem:
\begin{align*}
    f(X^{k+1}) \le f(X^k) + \sum_{i=1}^p \left[ -\gamma_{k,i} \|\nabla_i f(X^k)\|_{(i)\star}(1-\err_{k,i}) + \frac{L_i}{2} (\gamma_{k,i})^2 (1+\err_{k,i})^2 \right].
\end{align*}
The adaptive step size $\gamma_{k,i}^*$ is the value that minimizes the term in the square brackets for each layer $i$. Substituting this optimal choice into the inequality yields the maximum guaranteed descent at each step
\begin{align*}
    f(X^{k+1}) \le f(X^k) - \sum_{i=1}^p \frac{(\|\nabla_i f(X^k)\|_{(i)\star}(1-\err_{k,i}))^2}{2 L_i (1+\err_{k,i})^2}.
\end{align*}
Summing this from $k=0$ to $K-1$ gives the global progress bound
\begin{align*}
    \sum_{k=0}^{K-1} \sum_{i=1}^p \frac{\|\nabla_i f(X^k)\|_{(i)\star}^2 (1-\err_{k,i})^2}{2 L_i (1+\err_{k,i})^2} \le \Delta^0.
\end{align*}
We assume a uniform (but potentially layer-specific) inexactness bound $\err_{k,i} \le \err_i < 1$. The inequality becomes
\begin{align*}
    \frac{1}{2} \sum_{k=0}^{K-1} \sum_{i=1}^p \frac{(1-\err_i)^2}{(1+\err_i)^2} \frac{\|\nabla_i f(X^k)\|_{(i)\star}^2}{L_i} \le \Delta^0.
\end{align*}
Let $w_i \eqdef \frac{(1-\err_i)^2}{(1+\err_i)^2}$ be the layer-specific error weight and let $G_k^2 \eqdef \sum_{i=1}^p \frac{\|\nabla_i f(X^k)\|_{(i)\star}^2}{L_i}$ be the weighted squared gradient norm. The bound can be written as $\frac{1}{2} \sum_{k=0}^{K-1} \sum_{i=1}^p w_i \frac{\|\nabla_i f(X^k)\|_{(i)\star}^2}{L_i}$. To derive a clear complexity, we can lower-bound the weights by their minimum value, $w_{\min} = \min_{1 \le j \le p} w_j$:
\begin{align*}
    \sum_{i=1}^p w_i \frac{\|\nabla_i f(X^k)\|_{(i)\star}^2}{L_i} \ge w_{\min} \sum_{i=1}^p \frac{\|\nabla_i f(X^k)\|_{(i)\star}^2}{L_i} = w_{\min} G_k^2.
\end{align*}
Substituting this into the main inequality gives
    $\frac{w_{\min}}{2} \sum_{k=0}^{K-1} G_k^2 \le \Delta^0.$
Let $G_{\min}^2 = \min_{0 \le j < K} G_j^2$. Since $\sum_{k=0}^{K-1} G_k^2 \ge K \cdot G_{\min}^2$, we have
\begin{align*}
    \frac{K w_{\min}}{2} G_{\min}^2 \le \Delta^0 \implies G_{\min}^2 \le \frac{2\Delta^0}{K w_{\min}}.
\end{align*}
To guarantee that the weighted squared gradient norm $G_{\min}^2 \le \varepsilon^2$, we require
\begin{align*}
    \frac{2\Delta^0}{K w_{\min}} \le 
    \varepsilon^2 \implies K \ge \frac{2\Delta^0}{\varepsilon^2 w_{\min}}.
\end{align*}
Substituting back the definition of $w_{\min} = \min_{j} \frac{(1-\err_j)^2}{(1+\err_j)^2}$ gives the final complexity
\[
    K \ge \frac{2\Delta^0}{\varepsilon^2} \frac{1}{\min_{j} \frac{(1-\err_j)^2}{(1+\err_j)^2}} = \frac{2\Delta^0}{\varepsilon^2} \max_{j} \frac{(1+\err_j)^2}{(1-\err_j)^2}. \qedhere
\]
\end{proof}

\paragraph{Discussion}
This result for the layer-wise adaptive method is powerful as it provides a concrete, theoretically-grounded learning rate policy for each layer that leads to an optimal convergence rate.
\begin{itemize}
    \item \textbf{Optimal Rate:} The resulting complexity is $\mathcal{O}(1/\varepsilon^2)$, which is the optimal rate for deterministic non-convex optimization. This confirms that the layer-wise adaptive strategy is efficient.
    \item \textbf{Layer-Wise Adaptivity:} The step size $\gamma_{k,i}^*$ for each layer adapts to its own local properties: its current gradient norm $\|\nabla_i f(X^k)\|_{(i)\star}$, its smoothness $L_i$, and its LMO precision $\err_{k,i}$. This is a practical and intuitive update rule that naturally assigns smaller steps to sharper or less precisely updated layers.
    \item \textbf{Bottleneck Effect of Inexactness:} The final complexity is degraded by a ``worst-layer" factor, $\max_{j} \frac{(1+\err_j)^2}{(1-\err_j)^2}$. This shows that the global convergence speed is bottlenecked by the single layer with the worst combination of update quality and infeasibility cost. A single layer with very high inexactness ($\err_j \to 1$) will cause this factor to explode, dominating the iteration complexity regardless of how precisely the other layers are handled.
\end{itemize}

\newpage

\section{Additional Experiments and Details}

This section provides supplementary details and results to support the empirical findings presented in Section~\ref{sec:experiments} of the main paper.

\subsection{Experimental setup: details}
\label{app:exp_details}

We provide additional details regarding the experimental setups for both the nanoGPT and CIFAR-10 benchmarks. Further details on model architectures and data processing can be found in the respective code repositories cited in the main text. All experiments were conducted on a system equipped with 4 $\times$ NVIDIA A100 GPUs.

\paragraph{nanoGPT on FineWeb.}
The hyperparameter grids used for the sweeps shown in Figure~\ref{fig:nanoGPT_heatmaps} are visible on the axes of the plots themselves. The LMO inexactness $\err$ is controlled by the number of iterations of the \approxalg{PolarExpress} algorithm; fewer iterations correspond to a higher error $\err$.

\paragraph{CNN on CIFAR-10.}
For the results presented in Figure~\ref{fig:cifar_variability}, we performed a comprehensive grid search over the following hyperparameters for each level of LMO precision (i.e., for each number of \approxalg{Newton-Schulz} iterations):
\begin{itemize}
    \item \textbf{Step Size ($\gamma$):} A logarithmically spaced grid from $2^{-12}$ to $2^{-5}$.
    \item \textbf{Momentum ($\alpha$):} A linearly spaced grid from $0.05$ to $0.90$ in increments of $0.05$.
\end{itemize}
The ``Best Accuracy" (solid lines) in Figure~\ref{fig:cifar_variability} corresponds to the maximum test accuracy achieved at a given step size by tuning over the momentum values. The ``Worst Accuracy" (dashed lines) corresponds to the minimum test accuracy from the same sweep.

\subsection{Additional results for nanoGPT}
\label{app:additional_nanogpt}

\begin{figure}[h!]
    \centering
    \begin{subfigure}[t]{0.49\linewidth}
        \centering
        \includegraphics[width=\linewidth]{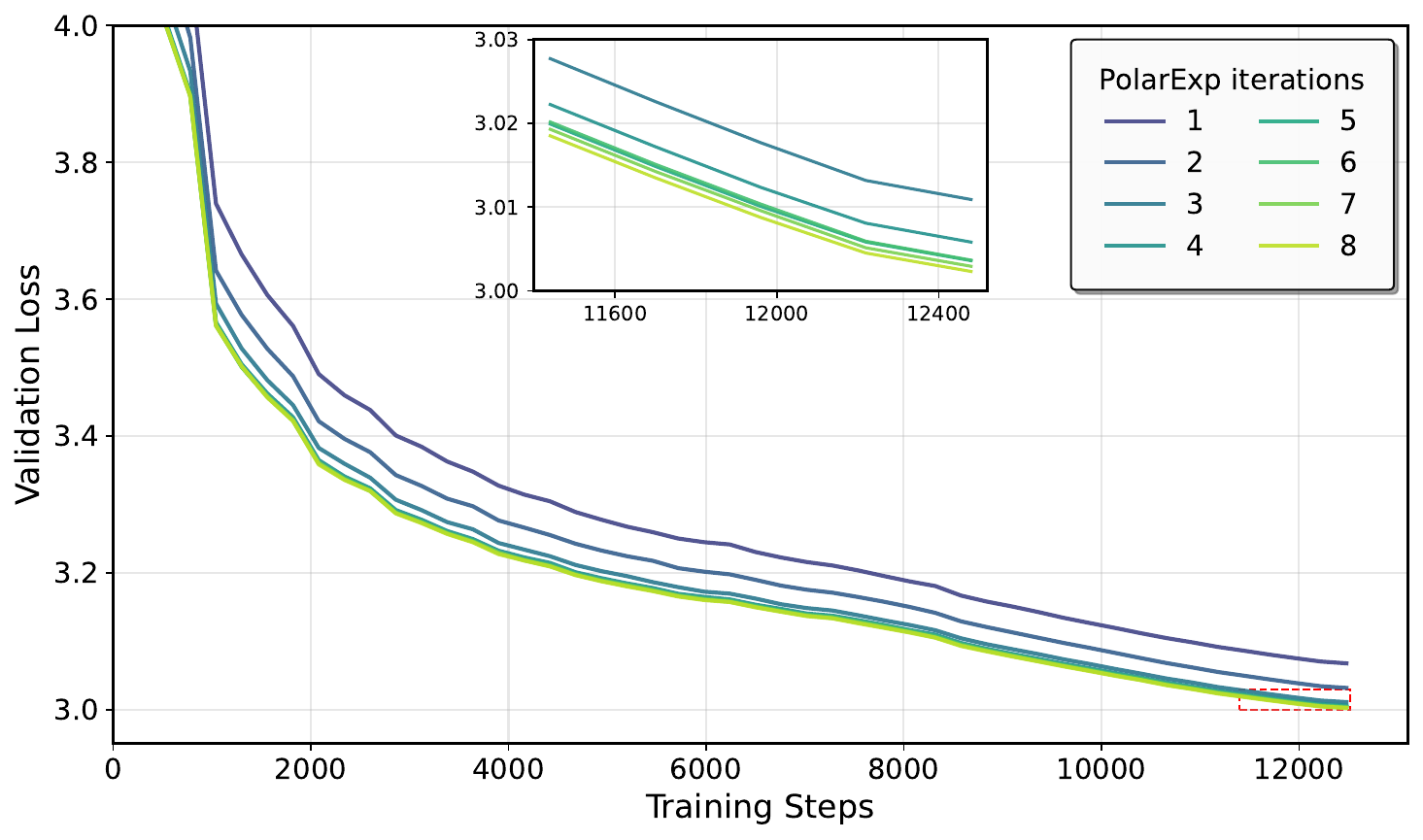}
        \caption{Improvements saturate after $\sim$5 \approxalg{PolarExp} iterations.}
    \label{fig:appendix_nanogpt_convergence_full}
    \end{subfigure}
    \hfill
    \begin{subfigure}[t]{0.49\linewidth}
        \centering
        \includegraphics[width=\linewidth]{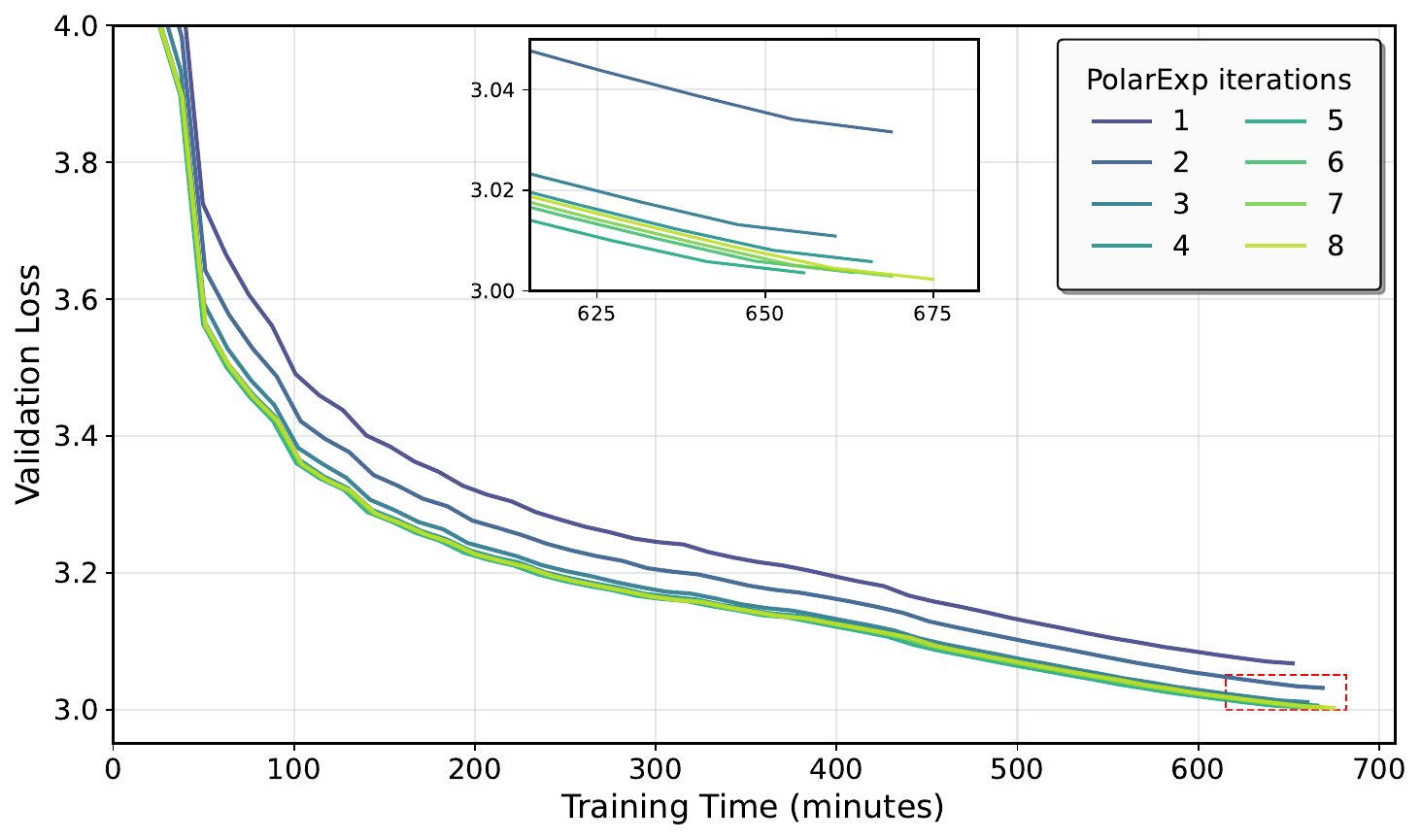}
        \caption{Training time grows slightly with the iteration count.}
        \label{fig:appendix_nanogpt_time}
    \end{subfigure}
    \caption{Supplementary results for nanoGPT on FineWeb: convergence behavior (left) and runtime scaling (right) as the number of \approxalg{PolarExpress} (PolarExp) iterations varies.}
    \label{fig:appendix_nanogpt_combined}
\end{figure}

\paragraph{Performance Degradation with Increasing Precision.}
As mentioned in the main text, increasing the LMO precision yields diminishing returns. Figure~\ref{fig:appendix_nanogpt_convergence_full} provides the full validation loss curves for up to 8 iterations of the \approxalg{PolarExpress} algorithm. The plot clearly shows that while the most significant performance gains are achieved when moving from 1 to 3 iterations, further increases in precision (from 3 to 5, and 5 to 8) continue to improve the convergence speed and final validation loss, albeit with smaller margins. This observation confirms the trend of diminishing returns and empirically validates the degradation factor present in our theoretical bounds.

Table~\ref{tab:nanogpt_val_loss} complements these curves by reporting the final validation loss for each setting, highlighting the diminishing returns observed beyond five \approxalg{PolarExpress} iterations.

\begin{table}[h]
\centering
    \caption{Final validation loss and total training time for NanoGPT on FineWeb across different numbers of \approxalg{PolarExpress} iterations used for the LMO approximation.}
    \label{tab:nanogpt_val_loss}
    \begin{tabular}{@{}lcccccccc@{}}
        \toprule
        \approxalg{PolarExpress} iterations & 1 & 2 & 3 & 4 & 5 & 6 & 7 & 8 \\
        \midrule
        Validation loss & 3.0675 & 3.0316 & 3.0109 & 3.0058 & 3.0036 & 3.0037 & 3.0029 & 3.0023 \\
        Training time (minutes) & 652.2 & 668.8 & 660.4 & 665.8 & 655.7 & 663.1 & 668.8 & 675.0 \\
        \bottomrule
\end{tabular}
\end{table}

\paragraph{LMO Error Analysis.}
Figure~\ref{fig:lmo_error} shows the LMO error (summed over layers) after $p$ steps of \approxalg{PolarExpress} iterations throughout training, calculated with respect to the ``true" (exact up to machine precision) LMO obtained by running \approxalg{PolarExpress} for 100 iterations. The plot demonstrates that higher iteration counts consistently achieve lower LMO error levels, with $p=8$ maintaining final error of $\sim$1.7 compared to $p=1$ reaching $\sim$23.6. The error remains mostly close to constant (except rare spikes) across training for every $p$, albeit with an initial increase at the beginning (to $\sim$26.9 for $p=1$, and to $\sim$3.5 for $p=8$). These results provide an explanation for the saturation of performance beyond certain LMO quality, as the theoretical benefits of higher precision diminish once the LMO error reaches sufficiently low levels.

\begin{figure}[h!]
    \centering
    \includegraphics[width=0.6\linewidth]{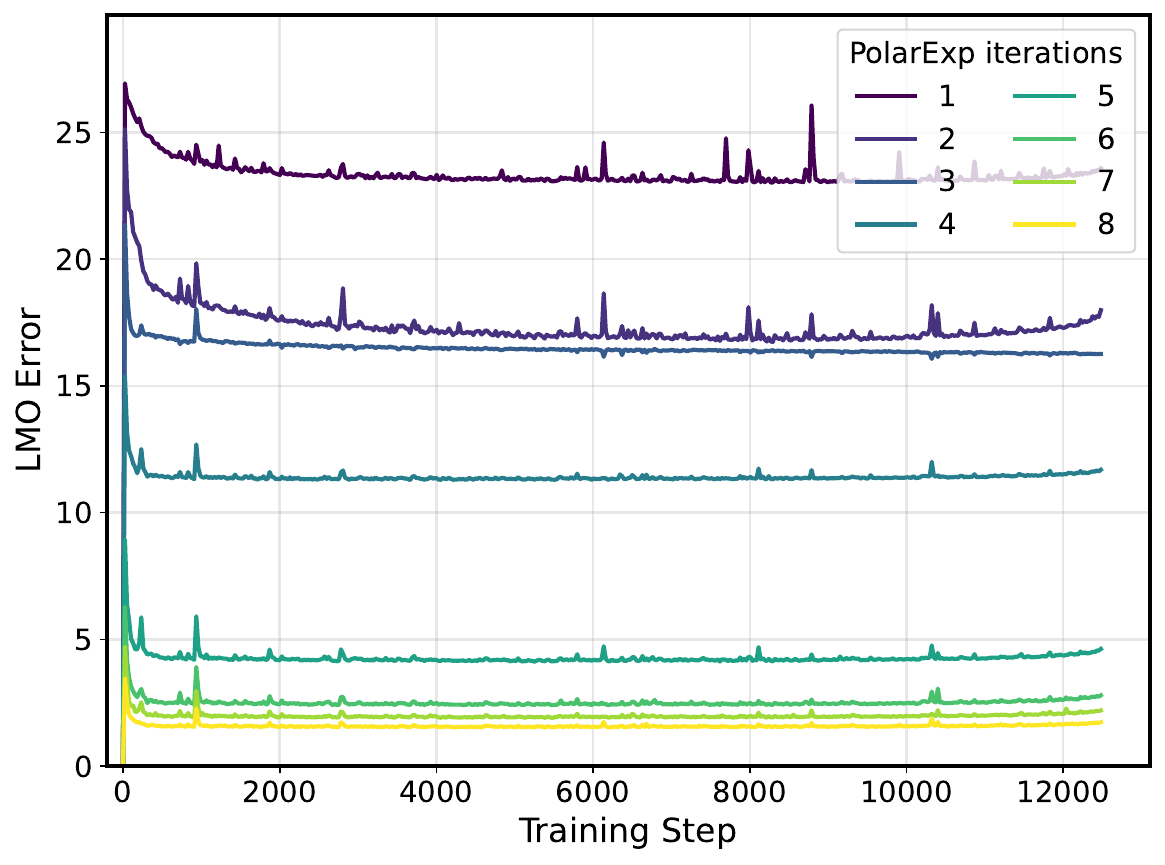}
    \caption{LMO error evolution across training steps for different numbers of \approxalg{PolarExpress}  (PolarExp) iterations. Higher iteration counts maintain lower orthogonalization error throughout training.}
    \label{fig:lmo_error}
\end{figure}

\subsection{Additional results for CIFAR-10}

\paragraph{Full Hyperparameter Grids.}
To complement the summary plots in the main text, Figure~\ref{fig:appendix_cifar_heatmaps} presents the full heatmap of final test accuracies across the entire hyperparameter grid for different levels of LMO precision. These plots provide a more detailed view of the optimizer's stability. For a highly precise LMO (e.g., 8 \approxalg{Newton-Schulz} iterations), a large contiguous region of high performance is visible, indicating robustness to hyperparameter choices. In contrast, for a highly inexact LMO (e.g., 1 \approxalg{Newton-Schulz} iteration), the high-performance region is smaller and more fragmented, highlighting the optimizer's sensitivity to tuning when the LMO is imprecise.

\begin{figure*}[h!]
\centering
\includegraphics[width=\linewidth]{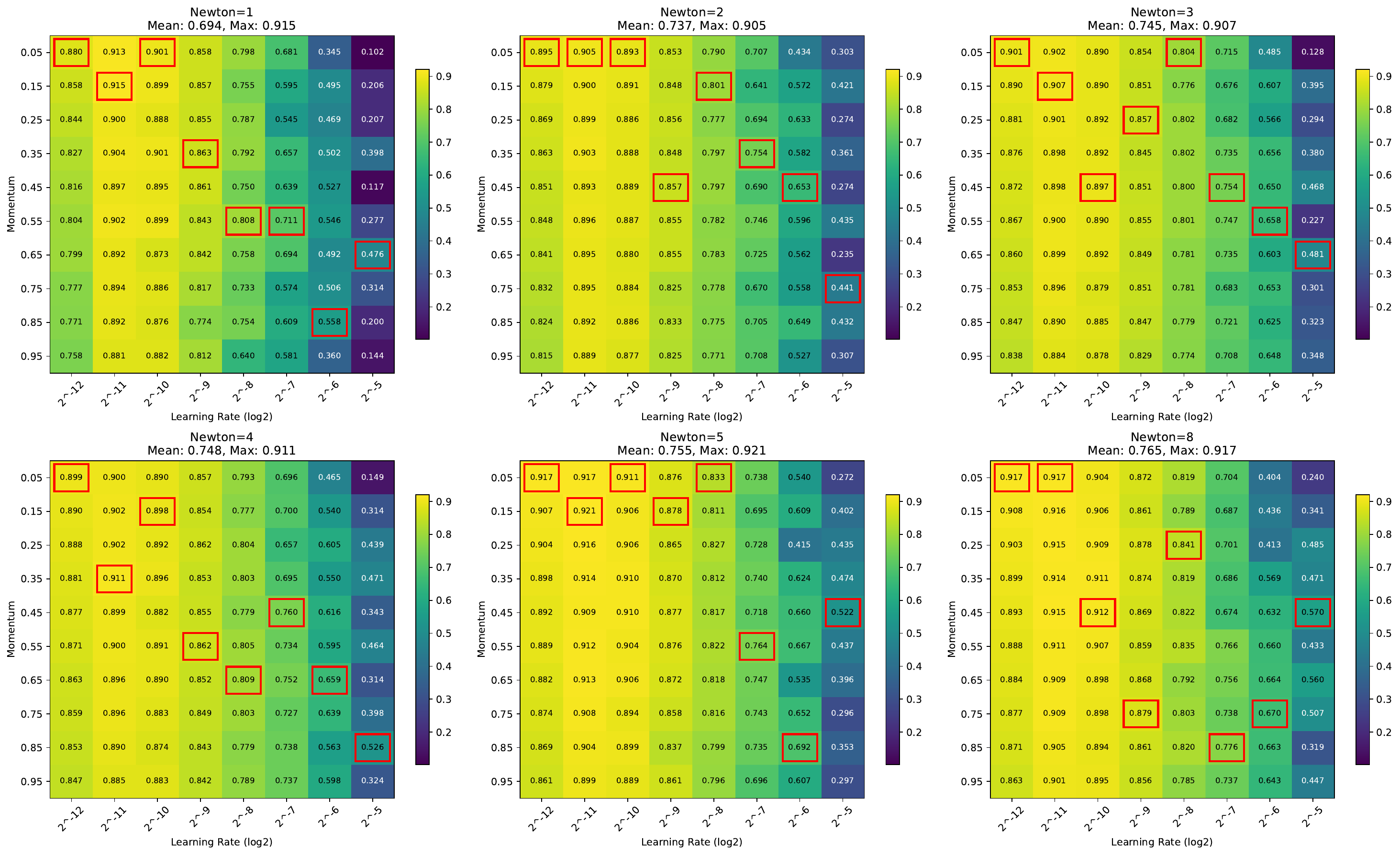}
    \caption{Full final test accuracy heatmaps on CIFAR-10 across a grid of step sizes and momentum values for different numbers of \approxalg{Newton-Schulz} (``Newton'') iterations. Brighter colors indicate higher final test accuracy. The region of high performance is visibly larger and more stable for higher numbers of \approxalg{Newton-Schulz} iterations.}
    \label{fig:appendix_cifar_heatmaps}
\end{figure*}

\end{document}